\newtheorem{theorem}{Theorem}
\newtheorem{proposition}{Proposition}
\newtheorem{lemma}{Lemma}
\newtheorem{conjecture}{Conjecture}
\newtheorem{definition}{Definition}
\newtheorem{remark}{Remark}
\newcommand{\reals}{\mathbb{R}}
\newcommand{\E}{\mathbb{E}}
\newcommand{\relu}[1]{\left[ #1 \right]_+}
\newcommand{\be}{\mathbf{e}}
\newcommand{\bx}{\mathbf{x}}
\newcommand{\bw}{\mathbf{w}}
\newcommand{\bg}{\mathbf{g}}
\newcommand{\bu}{\mathbf{u}}
\newcommand{\bv}{\mathbf{v}}
\newcommand{\bz}{\mathbf{z}}
\newcommand{\by}{\mathbf{y}}
\newcommand{\bn}{\mathbf{n}}
\newcommand{\Dcal}{\mathcal{D}}
\newcommand{\norm}[1]{\|#1\|}
\newcommand{\inner}[1]{\langle#1\rangle}
\newcommand{\p}[1]{\left(#1\right)}
\newcommand{\abs}[1]{\left|#1\right|}
\newtheorem{example}{Example}
\newcommand{\secref}[1]{Sec.~\ref{#1}}
\newcommand{\subsecref}[1]{Subsection~\ref{#1}}
\newcommand{\figref}[1]{Fig.~\ref{#1}}
\renewcommand{\eqref}[1]{Eq.~(\ref{#1})}
\newcommand{\lemref}[1]{Lemma~\ref{#1}}
\newcommand{\thmref}[1]{Thm.~\ref{#1}}
\newcommand{\propref}[1]{Proposition~\ref{#1}}
\newcommand{\appref}[1]{Appendix~\ref{#1}}
\newcommand{\algref}[1]{Algorithm~\ref{#1}}
\newcommand\revision[1]{\textcolor{black}{#1}}
\def\moverlay{\mathpalette\mov@rlay}
\def\mov@rlay#1#2{\leavevmode\vtop{%
   \baselineskip\z@skip \lineskiplimit-\maxdimen
   \ialign{\hfil$\m@th#1##$\hfil\cr#2\crcr}}}
\newcommand{\charfusion}[3][\mathord]{
    #1{\ifx#1\mathop\vphantom{#2}\fi
        \mathpalette\mov@rlay{#2\cr#3}
      }
    \ifx#1\mathop\expandafter\displaylimits\fi}
\newcommand{\bigcupdot}{\charfusion[\mathop]{\bigcup}{\cdot}}
\newcommand{\printfnsymbol}[1]{%
  \textsuperscript{\@fnsymbol{#1}}%
}
\title{The Effects of Mild Over-parameterization on the Optimization Landscape of Shallow ReLU Neural Networks}
\author{Itay Safran\thanks{equal contribution}}
\author{Gilad Yehudai\printfnsymbol{1}}
\author{Ohad Shamir}
\affil{Weizmann Institute of Science}
\date{}
\begin{document}

\maketitle

\begin{abstract}
    We study the effects of mild over-parameterization on the optimization landscape of a simple ReLU neural network of the form $\mathbf{x}\mapsto\sum_{i=1}^k\max\{0,\mathbf{w}_i^{\top}\mathbf{x}\}$, in a well-studied teacher-student setting where the target values are generated by the same architecture, and when directly optimizing over the population squared loss with respect to Gaussian inputs. We prove that while the objective is strongly convex around the global minima when the teacher and student networks possess the same number of neurons, it is not even \emph{locally convex} after any amount of over-parameterization. Moreover, related desirable properties (e.g., one-point strong convexity and the Polyak-{\L}ojasiewicz condition) also do not hold even locally. On the other hand, we establish that the objective remains one-point strongly convex in \emph{most} directions (suitably defined), and show an optimization guarantee under this property. For the non-global minima, we prove that adding even just a single neuron will turn a non-global minimum into a saddle point. This holds under some technical conditions which we validate empirically. 
    These results provide a possible explanation for why recovering a global minimum becomes significantly easier when we over-parameterize, even if the amount of over-parameterization is very moderate.
\end{abstract}

\section{Introduction}
In recent years, a spur of theoretical papers studied how the training of neural networks benefits from over-parameterization, namely the use of more neurons than needed to express a good predictor (e.g., \citep{safran2016quality,du2018gradient2,safran2017spurious,allen2018learning,daniely2017sgd,li2018learning,cao2019generalization,andoni2014learning,jacot2018neural}). The vast majority of these papers focus on settings where a large amount of over-parameterization is needed (e.g., polynomial in some natural problem parameters). However, empirical studies such as in \citep{livni2014computational,safran2017spurious}  indicate that in many cases, very mild over-parameterization is required to successfully reach a global optimum, and sometimes adding even one or two neurons is enough. The aim of this paper is to theoretically study the effect of such mild over-parameterization. 

Specifically, we focus on a simple and well-studied student-teacher setting, where the labels are generated by a teacher network composed of a sum of $k$ neurons, and learned by a student network of the same architecture with $n$ neurons, using the squared loss with respect to some input distribution $\Dcal$:
\begin{equation}\label{eq:objective normal dist intro}
    F(\bw) ~:=~ F(\bw_1,\ldots,\bw_n)~=~ \mathbb{E}_{x\sim\mathcal{D}}\left[\left(\sum_{i=1}^n \sigma(\inner{\bw_i, \bx}) - \sum_{i=1}^k \sigma(\inner{\bv_i, \bx})\right)^2\right] ~.
\end{equation}
In the above, $\sigma:\mathbb{R}\rightarrow\mathbb{R}$ is some univariate activation function. This objective has been studied in quite a few recent works (e.g., \citep{zhong2017recovery,tian2017analytical,soltanolkotabi2019theoretical,yehudai2020learning,li2017convergence,arjevani2019spurious,arjevani2020symmetry}), perhaps most commonly when $\Dcal$ is a standard Gaussian, and $\sigma$ is the ReLU function. This will also be the setting we focus on in this paper.

Our paper is motivated by the empirical findings in \citep{safran2017spurious}. In that paper, the authors prove that the objective above possesses local minima which are not global, and empirically show that gradient descent with standard initialization does tend to get stuck in them when $n=k$. However, significantly fewer local minima are encountered already when $n=k+1$, and when $n=k+2$, no local minima were encountered at all for values of $n\le 20$ (see Table 2 in \citep{safran2017spurious}). Despite the progress made in understanding the loss surface and the dynamics of optimization techniques on the objective in \eqref{eq:objective normal dist intro}, to the best of our knowledge, current deep learning theory is unable to explain why such mild over-parameterization helps gradient methods to recover the global minimum in this setting. This leads us to the following question:
\begin{quote}
    \emph{What are the geometrical effects of mild over-parameterization on the objective function, which facilitate the use of common optimization techniques for recovering the global minimum?}
\end{quote}

In this paper, we take a few steps in understanding the above question in the context of \eqref{eq:objective normal dist intro}, under the standard setting where $\Dcal$ is a standard Gaussian distribution, the $\bv_i$'s are orthogonal and of unit norm, and $\sigma$ is the popular ReLU activation function. Our contributions are as follows:
\begin{itemize}[leftmargin=*]
    \item
    First, we provide a full characterization of all twice differentiable points and all global minima of the objective (\thmref{thm:global_minima_form} and \lemref{lem:hessian is differentiable}). We then formally prove that \emph{without} over-parameterization ($n=k$), the objective is strongly convex in a neighborhood of every global minimum (\thmref{thm:n=k strongly convex}). This property ensures that initializing close enough to the global minima (e.g.\ using a tensor initialization \citep{zhong2017recovery}), gradient descent with small enough step sizes will converge to it. We note that this in itself is not too surprising, and that a similar result was shown in \citep{zhong2017recovery,li2017convergence} for a slightly different setting.
    
    \item Next, we prove that perhaps surprisingly, in the over-parameterization regime ($n>k$) the local geometry around global minima changes significantly: The objective is not even locally \emph{convex} around global minima (\thmref{thm:over-parameterization not convex}). Moreover, we study other commonly used geometrical properties such as one-point strong convexity (also known as strong star-convexity) and Polyak-{\L}ojasiewicz (PL) condition (see \citep{karimi2016linear}) and show that these also do not hold, even locally, around global minima (\thmref{thm:over-parameterization not OPSC} and \thmref{thm:no PL}).
    
    \item On the flip side, we show that our objective is one-point strongly convex \emph{in most directions} -- that is, there is a significant set of points around the global minima that satisfy one-point strong convexity (\thmref{thm:almost OPSC}). This allows us to prove an optimization guarantee using gradient descent with small perturbations, for functions that satisfy this property in a simplified setting \revision{and under a certain technical assumption} (\thmref{thm: OPSC in most directions optimization}).

    \item
    Turning to the non-global minima, we prove that for \emph{any} such point, a slight over-parameterization consisting of `splitting' a  neuron into two neurons (having the same angle and summing to the original neuron) results in turning the non-global minimum into a saddle point with a direction of descent (\thmref{thm:norm_k}). This holds under a technical condition on the norm of the neurons at the local minima, which we justify empirically. This result demonstrates how even a tiny amount of over-parameterization helps eliminate non-global minima. 
\end{itemize}

The remainder of the paper is structured as follows: After discussing related work, we formalize our setting in \secref{sec:prelim}, and introduce relevant definitions and notations. Next, \secref{sec:global minima} investigates properties relating to the global minima of our objective, which includes their general form and the geometry of the objective function around them. Lastly, \secref{sec:non global minima} studies the non-global minima of our objective, showing when can we guarantee that splitting local minima will become a saddle points. 

\subsection{Related Work}\label{subsec:related work}

\textbf{Over-Parameterization.}
It was shown empirically that over-parameterized networks are easier to train, e.g.\ in \citep{livni2014computational,safran2017spurious}. Over-parameterization was extensively studied theoretically in several contexts and architectures, such as \citep{du2018gradient2,allen2018learning,daniely2017sgd,li2018learning,cao2019generalization,andoni2014learning,yehudai2019power,ghorbani2019linearized,ghorbani2019limitations,kamath2020approximate,allen2019can}. In particular, one very popular line of works argue that sufficiently over-parameterized networks behave similarly to kernel methods (in particular, the neural tangent kernel) or random feature methods. However, these approaches only apply for a very large amount of over-parameterization, as shown in several recent papers \cite{yehudai2019power, allen2019can, kamath2020approximate}. Thus, they cannot be used to explain why adding just a few neurons can significantly increase the probability of converging to a global minimum. In contrast, our results hold for \emph{any} amount of over-parameterization.
Notably, in \cite{yehudai2019power} it was shown that kernel methods (such as the NTK) cannot explain learnability of even a single ReLU neuron. This means that NTK cannot explain learnability of gradient descent on \eqref{eq:objective normal dist intro}, even for the simple case of $k=1$, unless $n$ is exponential in the input dimension.




\textbf{ Over-Parameterization beyond NTK regimes.} Several papers considered theoretical analysis of over-parameterized models beyond the NTK regime. \citet{li2020learning} provide recovery and generalization guarantees for an objective similar to \eqref{eq:objective normal dist intro}, however their result only guarantees convergences to a solution with loss of about $1/d$ ($d$ being the input dimension) and not to arbitrarily small loss, and their analysis strongly relies on the symmetry of the teacher network, and therefore cannot be generalized to cases where this symmetry breaks. \cite{allen2019bcan} show an analysis that goes beyond NTK, where the target network is a one layer ResNet. \cite{daniely2020learning} provide an optimization guarantee on the problem of learning parity functions under some specific distribution using a 2-layer neural network. With that said, providing optimization guarantees for \eqref{eq:objective normal dist intro} for general $n$ and $k$ largely remains an open question.


\textbf{Previous works on \eqref{eq:objective normal dist intro}} Several works studied \eqref{eq:objective normal dist intro} under different assumptions such as \cite{tian2017analytical,soltanolkotabi2017learning,zhong2017recovery,yehudai2020learning,li2020learning}. In \cite{yehudai2020learning} the authors study the case of $n=k=1$, and show that even in this simple regime there exists distributions and activations in which gradient methods are unable to learn. On the other hand, they show that under mild assumptions on the activation and distribution it is possible to guarantee convergence to the global optimum, although in this simple case there are no non-global minima (there is a non-differentiable saddle point at the origin). This analysis does not generalize even to the case of $n=k=2$. 
In \cite{zhong2017recovery} the authors give optimization guarantees for the case of $n=k$ for general $k$, where $\mathcal{D}$ is standard Gaussian and some assumptions on $\sigma$ (which includes ReLU). Their method is to show that locally around global minima the objective is strongly convex, 
and use tensor initialization to initialize close enough to the global minimum. We prove a similar theorem (\thmref{thm:n=k strongly convex}), although there are a couple of small differences: The objective is a bit different, because in \cite{zhong2017recovery} the authors consider an empirical loss over a finite set of examples drawn i.i.d from $\mathcal{N}(0,I)$, whereas we consider the population loss. Moreover, we state an explicit numerical lower bound on the minimal eigenvalue of the Hessian at the minimum. On the other hand, \cite{zhong2017recovery} show the result for a general class of activation functions (including ReLU) and we show it specifically for the ReLU activation. They also specify how large the open neighborhood for which the objective is strongly convex, while we only state that there exists an open neighborhood without guarantees on its size. In any case, we note that this is not a main result of our paper, as we focus more on the over-parameterized case and this theorem is given mainly as a comparison to how over-parameterization significantly changes the optimization landscape. 

A similar analysis for the case of $n=k$ is done in \cite{li2017convergence} where the authors consider an architecture where the target neurons are close to unit vectors, and they show that the objective is one-point strongly convex (as opposed to strongly convex) around the global minimum. In \cite{arjevani2019spurious,arjevani2020symmetry,arjevani2020analytic} the authors study the properties of local minima of \eqref{eq:objective normal dist intro} in the case of $n=k$, standard Gaussian distribution and ReLU activation. They identify certain symmetries of the local minima and utilize them to characterize a certain family of local minima. 

\revision{In \cite{jin2017escape} the authors show how perturbed gradient descent can help in escaping saddle points. In our paper we also analyze perturbed gradient descent, and show that it can help to ensure convergence to a global minima, even when standard convexity-like properties (e.g. one-point strong convexity and PL) do not apply to the optimization landscape.}

\section{Preliminaries}\label{sec:prelim}

\textbf{Terminology and Notation.}
We use $[n]$ as shorthand for $\{1,\ldots,n\}$. We denote the ReLU function ($z\mapsto\max\{0,z\}$) by $[\cdot]_+$. We denote vectors using bold-faced letters (e.g.\ $\bw$). We let barred bold-faced letters denote vectors normalized to unit length (i.e.\ $\bar{\bw}=\frac{\bw}{\norm{\bw}}$). Given two non-zero vectors $\bw,\bv\in\mathbb{R}^d$, we denote the angle between them using $\theta_{\bw,\bv}=\arccos\p{\bar{\bw}^{\top}\bar{\bv}}$. Unless stated otherwise, we denote by $\norm{\cdot}$ the standard Euclidean norm. We denote the matrix with all zero entries of size $m\times n$ by $\pmb{0}_{m\times n}$. For $\bw_1,\dots,\bw_n\in\mathbb{R}^d$ denote by $\bw_1^n = (\bw_1,\dots,\bw_n)\in\mathbb{R}^{n\cdot d}$ their concatenation. For symmetric matrices $A,B$ we say that $A\succeq B$ if $A-B$ is positive semi-definite (PSD). Recall that a function $f:\mathbb{R}^d\rightarrow\mathbb{R}$ that is twice continuously differentiable is said to be strongly convex in $A\subseteq \mathbb{R}^d$ iff there is a constant $\lambda > 0$ such that $\nabla^2f(\bx) \succeq \lambda I$ for any $\bx\in A$. It is convex if the above holds for $\lambda=0$.

\textbf{Setting.} In this paper we study a simple network in a student-teacher setting, assuming our data have a standard Gaussian distribution. In more detail, we fix the vectors in the teacher network $\bv_1,\dots,\bv_k\in\mathbb{R}^d$, and the population objective is:
\begin{equation}\label{eq:objective normal dist}
    F(\bw_1^n) = \mathbb{E}_{x\sim\mathcal{N}(0,I)}\left[\left(\sum_{i=1}^n [\inner{\bw_i, \bx}]_+ - \sum_{i=1}^k [\inner{\bv_i, \bx}]_+\right)^2\right].
\end{equation} 
Throughout this paper we always assume that $d \geq k$ (to model a high-dimensional setting). We also assume for simplicity that the target vectors $\bv_1,\dots,\bv_k$ are orthogonal with $\norm{\bv_i}=1$ for $i\in[k]$. This assumption is also made in \cite{safran2017spurious}, and approximately holds if $\bv_1,\ldots,\bv_k$ are chosen uniformly at random from the unit sphere and the dimension is high enough. We conjecture that all the results in the paper can be extended to general target vectors, and leave it to future work.

\textbf{Basic Properties of the Objective Function.} 
For a standard Gaussian distribution, the objective function in \eqref{eq:objective normal dist} can be written down in closed form \revision{(without expectation terms)}. Moreover, it is continuously differentiable if $\bw_i\neq 0$ for every $i\in [n]$, with explicit expressions for the Gradient and Hessian at any point (see \citep{cho2009kernel,brutzkus2017globally,safran2017spurious}). In particular, we will need an explicit expression for the Hessian from \citep[Section 4.1.1]{safran2017spurious}. For completeness we include the formal statement in \thmref{thm:hessian of objective}, from which we immediately get that the objective is twice continuously differentiable for every $\bw_1^n=(\bw_1,\ldots,\bw_n)$ where $\bw_i\neq 0$ for every $i\in[n]$ and there are no two $\bw_i,\bw_j$ with $\theta_{\bw_i,\bw_j}\in\{0,\pi\}$. To complete the picture we show that even when $\theta_{\bw_i,\bw_j}\in\{0,\pi\}$ for some $i\neq j$ the Hessian is well defined and continuous. The formal proof can be found in \appref{appen:prof of hessian differentiable}.


\begin{lemma}\label{lem:hessian is differentiable}
    $ F(\bw_1^n) $ is twice continuously differentiable at any $ \bw_1^n=(\bw_1,\ldots,\bw_n) $ such that $\bw_i\neq \mathbf{0} $ for all $i\in[n]$.
\end{lemma}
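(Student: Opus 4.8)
We need to show $F$ is twice continuously differentiable at every point where all neurons are nonzero. The excerpt already tells us (via the explicit Hessian formula of \citep[Section 4.1.1]{safran2017spurious}, restated as \thmref{thm:hessian of objective}) that $F$ is $C^2$ at any such point provided additionally that no pair $\bw_i,\bw_j$ is colinear, i.e.\ $\theta_{\bw_i,\bw_j}\notin\{0,\pi\}$. So the only thing left to handle is the degenerate situation where $\theta_{\bw_i,\bw_j}\in\{0,\pi\}$ for one or more pairs $i\ne j$.

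**The approach.**

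The plan is to take the closed-form expression for the Hessian entries from \thmref{thm:hessian of objective} — which, for the ReLU/Gaussian case, involve terms depending on the pairwise angles $\theta_{\bw_i,\bw_j}$ through functions like $\pi - \theta_{\bw_i,\bw_j}$ and $\sin\theta_{\bw_i,\bw_j}$ (these arise from the kernel $\E[\sigma'(\inner{\bw_i,\bx})\sigma'(\inner{\bw_j,\bx})]$ and related quantities) — and show that each such entry extends to a continuous function as $\theta_{\bw_i,\bw_j}\to 0$ or $\pi$. First I would write out explicitly the block structure of the Hessian: the $(i,j)$ block (an $d\times d$ matrix) is a combination of $I$, of $\bar\bw_i\bar\bw_j^\top$-type rank-one terms, and of terms supported on the span of $\bw_i,\bw_j$, with scalar coefficients that are elementary functions of $\norm{\bw_i},\norm{\bw_j}$ and $\theta_{\bw_i,\bw_j}$. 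Then I would check that, as $\bw_j$ approaches a positive (resp.\ negative) multiple of $\bw_i$, every coefficient has a finite limit, and that the limit matches regardless of the direction of approach. The key observation is that although individual primitives (e.g.\ terms with $1/\sin\theta$) may look singular, the specific combinations appearing in the true Hessian cancel these singularities — because $F$ itself is manifestly a nice function (a finite sum of closed-form terms that are continuous across colinearity, since the underlying expectation is). Concretely, one shows the potentially-singular pieces appear in the pattern $\frac{\text{(something vanishing like }\theta\text{ or }\pi-\theta)}{\sin\theta}$, which has a removable singularity.

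**Executing it.**

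Formally, I would argue as follows: fix $\bw_1^n$ with all $\bw_i\ne 0$ and let $S=\{(i,j): \theta_{\bw_i,\bw_j}\in\{0,\pi\}\}$. Pick any sequence $\bw_1^n(t)\to\bw_1^n$ avoiding all colinearities; on this sequence $\nabla^2 F$ is given by the explicit formula, and I'd show the limit of each block exists, is independent of the sequence, and defines a symmetric matrix $H$. Then I'd invoke the standard fact that if a function is $C^1$ on a neighborhood, $C^2$ on the complement of a lower-dimensional "bad set," and $\nabla^2 F$ extends continuously across the bad set, then $F$ is actually $C^2$ there with Hessian equal to the extension (this follows, e.g., by integrating the continuous extension along segments and matching with the gradient, using that $F\in C^1$ everywhere with nonzero neurons — which the excerpt grants us). The continuity of the extended $\nabla^2 F$ at points of $S$ is exactly the limit computation above; at points outside $S$ it is the already-known continuity.

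**Main obstacle.**

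The bulk of the work — and the only real obstacle — is the limit/cancellation computation: one must look at the precise Hessian formula in \thmref{thm:hessian of objective} and verify that every term which is nominally singular as $\theta_{\bw_i,\bw_j}\to\{0,\pi\}$ actually combines with a matching factor that vanishes at the same rate, so the block-limit is finite and direction-independent (in particular, approaching $\theta=0$ from $\theta\downarrow 0$ vs.\ the structure at exactly $\theta=0$ must agree, and likewise near $\pi$). This is a bounded but somewhat tedious case analysis over the finitely many block types; conceptually it is guaranteed to work because $F$ is a fixed real-analytic-off-the-bad-set function whose only non-smooth locus is where some $\bw_i=0$, but making this rigorous still requires tracking the $\theta$-dependence of the coefficients. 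Once that computation is in hand, the rest is the soft argument above. The full details are deferred to \appref{appen:prof of hessian differentiable}.
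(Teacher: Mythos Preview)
Your approach is correct but takes a genuinely different route from the paper. You propose to (i) show the explicit Hessian blocks $h_1,h_2$ extend continuously across the colinearity locus $\theta\in\{0,\pi\}$, and then (ii) invoke the soft extension principle that a $C^1$ function whose Hessian exists off a thin closed set and extends continuously is automatically $C^2$. The paper does step (i) as well (this is precisely \lemref{lem:h2 parallel vectors} and the easy observation that $h_1\to 0$), but instead of your step (ii) it performs the direct computation: it evaluates from the definition the partial derivatives $\partial g/\partial w_i$ and $\partial g/\partial v_i$ of the gradient components at colinear configurations, controlling $\theta_{\bu+\epsilon\be_i,\bu}/\epsilon$ via the law of sines, L'H\^opital, and a squeeze argument, and checks these equal the limiting values of $h_1,h_2$. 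Your route is conceptually cleaner and avoids four repetitive limit computations, at the price of needing the extension lemma (which is standard but, to be airtight when segments may meet the bad set, requires either a Lipschitz/FTC argument or a density-of-good-segments observation using that the colinearity locus has codimension $d-1\ge 1$). The paper's route is fully elementary and self-contained but more laborious. Both hinge on the same core cancellation, namely that $\bar\bn_{\bw,\bv}\bar\bv^\top+\bar\bn_{\bv,\bw}\bar\bw^\top\to \pmb{0}_{d\times d}$ as $\theta\to 0$.
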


\section{Effects of Over-parameterization on the Global Minima}\label{sec:global minima}
In this section we study the local geometric properties of the global minima of the objective in \eqref{eq:objective normal dist}. We first characterize all the global minima of the objective $F(\bw_1^n)$ for any $n\geq k$.

\begin{theorem}\label{thm:global_minima_form}
Suppose $\bw_1^n=(\bw_1,\ldots,\bw_n)$ is a global minimum of the objective in \eqref{eq:objective normal dist}. Then there exists a partition $ \bigcupdot_{i=1}^k I_i = [n] $ and $ \alpha_1,\ldots,\alpha_n\ge0 $ satisfying $\sum_{j\in I_i}\alpha_j=1$ and $ \bw_j =\alpha_j \bv_i$ for all $i\in[k]$ and $j\in I_i$.
\end{theorem}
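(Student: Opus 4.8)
The plan is to reduce the statement to an identity between two piecewise-linear functions, and then extract the combinatorial structure from that identity. Since $n\ge k$, the objective attains the value $0$ (e.g.\ by setting $\bw_i=\bv_i$ for $i\le k$ and $\bw_i=\mathbf 0$ for $i>k$), and $F\ge 0$ everywhere, so any global minimum $\bw_1^n$ satisfies $F(\bw_1^n)=0$. Writing $g(\bx):=\sum_{i=1}^n[\inner{\bw_i,\bx}]_+$ and $h(\bx):=\sum_{i=1}^k[\inner{\bv_i,\bx}]_+$, this says $\E_{\bx\sim\Ncal(0,I)}\big[(g(\bx)-h(\bx))^2\big]=0$; since the Gaussian density is positive and $g,h$ are continuous, this forces $g\equiv h$ on all of $\reals^d$. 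Everything then reduces to deducing the claimed structure from the identity $g\equiv h$.

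Next I would show that every nonzero $\bw_i$ is a multiple (not yet known to be positive) of some $\bv_j$, and conversely that every $\bv_j$ is hit, by comparing the non-differentiability loci of $g$ and $h$. For $\bw\neq\mathbf 0$ let $H_\bw:=\{\bx:\inner{\bw,\bx}=0\}$. Off $\bigcup_{i:\bw_i\neq\mathbf 0}H_{\bw_i}$ the function $g$ is locally affine, hence smooth; on that set $g$ is non-differentiable, because along a line $\{\bx_0+t\bu\}$ with $\bx_0\in H_{\bw_{i_0}}$ and $\inner{\bw_{i_0},\bu}\neq0$, the restriction $t\mapsto g(\bx_0+t\bu)$ is a one-dimensional convex piecewise-linear function whose slope-jump at $t=0$ is a sum of nonnegative per-neuron contributions, equal to at least $|\inner{\bw_{i_0},\bu}|>0$ --- so positively weighted ReLU kinks cannot cancel. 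Hence the non-differentiability locus of $g$ is exactly $\bigcup_{i:\bw_i\neq\mathbf 0}H_{\bw_i}$ (and not all $\bw_i$ vanish, else $g\equiv 0\neq h$), and likewise that of $h$ is $\bigcup_{j=1}^k H_{\bv_j}$. Equating the two and using the elementary fact that a hyperplane through the origin contained in a finite union of hyperplanes through the origin must coincide with one of them, I conclude that each nonzero $\bw_i$ is parallel to some $\bv_j$ and each $\bv_j$ is parallel to some nonzero $\bw_i$.

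Finally I would pin down the signs and the normalization by restricting $g\equiv h$ to the coordinate lines of the teacher. Set $I_j':=\{i:\bw_i\neq\mathbf 0,\ \bw_i\parallel\bv_j\}$ and $Z:=\{i:\bw_i=\mathbf 0\}$; by the previous step these are pairwise disjoint with $Z\cup\bigcup_{j=1}^k I_j'=[n]$, each $I_j'$ nonempty, and for $i\in I_j'$ we may write $\bw_i=c_i\bv_j$ with $c_i\neq 0$. Since the $\bv_l$ are orthonormal, $\inner{\bw_i,t\bv_j}=0$ for every neuron not in $I_j'$, so the identity restricted to the line $\{t\bv_j:t\in\reals\}$ becomes $\sum_{i\in I_j'}[c_i t]_+=[t]_+$ for all $t$. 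Evaluating at $t=-1$ forces $[-c_i]_+=0$, hence $c_i>0$, for all $i\in I_j'$, and then $t=1$ gives $\sum_{i\in I_j'}c_i=1$. Taking $I_1:=I_1'\cup Z$, $I_j:=I_j'$ for $j\ge 2$, $\alpha_i:=c_i$ for $i\in\bigcup_j I_j'$ and $\alpha_i:=0$ for $i\in Z$, yields a partition $\bigcupdot_{j=1}^k I_j=[n]$ with $\alpha_i\ge0$, $\sum_{i\in I_j}\alpha_i=1$, and $\bw_i=\alpha_i\bv_j$ for $i\in I_j$, as claimed.

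The main obstacle is the middle step: ruling out cancellation of kinks, i.e.\ establishing that the non-differentiability locus of $g$ is precisely the union of the neuron hyperplanes. The point is that with all ReLU coefficients equal to $+1$ the function $g$ is convex, so the one-dimensional slope jumps are nonnegative and additive and cannot destroy one another; combined with the hyperplane-covering fact, this is what forces each $\bw_i$ to be aligned with some $\bv_j$. (Alternatively one could read the same conclusion off the explicit Hessian formula referenced above, whose singular part along each $H_{\bw_i}$ carries a strictly positive $\bw_i\bw_i^\top/\norm{\bw_i}$ weight, but the elementary one-dimensional argument seems cleanest.)
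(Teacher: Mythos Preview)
Your proof is correct and complete. The paper takes a different, more hands-on route: after obtaining $g\equiv h$ (via the same continuity and Gaussian-positivity argument you use), it assumes w.l.o.g.\ that $\bv_j=\be_j$ and then performs a short case analysis, exhibiting for each ``bad'' configuration an explicit witness point $\bx$ at which $g(\bx)\neq h(\bx)$: (i) any negative coordinate $w_{i,j}<0$ forces $g(-\be_j)>0=h(-\be_j)$; (ii) once all coordinates are nonnegative, evaluating at $\be_l$ gives the column-sum constraints; (iii) if some neuron has two nonzero coordinates, a tailored $\bx$ with one negative entry kills that neuron's contribution and yields a strict inequality.

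Your approach via the non-differentiability locus is more structural: you identify the kink set of each side as a union of hyperplanes (crucially using that the all-$+1$ ReLU coefficients make $g$ convex, so one-dimensional slope jumps are nonnegative and additive and cannot cancel), match the two hyperplane collections, and then read off signs and weights on the teacher's coordinate lines. This is arguably cleaner and more robust---the hyperplane-matching step does not use orthogonality of the $\bv_j$, so the argument would carry over with little change to linearly independent but non-orthogonal teachers, whereas the paper's explicit witness points lean on the standard-basis reduction. The paper's proof, on the other hand, is entirely elementary and avoids any appeal to differentiability or convex-analytic notions.
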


The full proof can be found in \appref{apen:global minima proofs}. \thmref{thm:global_minima_form} states that for a global minimum, each vector $\bw_i$ must be equal to some target vector $\bv_j$ times some positive constant $\alpha_i$. In addition, the sum of all the constants, for all the $\bw_i$ in the direction of some $\bv_j$ must be equal to $1$. In particular, for the case of $n=k$ we get that the only global minima are those that for each target vector $\bv_j$ there is exactly one $\bw_i$ for which $\bw_i=\bv_j$, hence there are exactly $n!$ isolated global minima. For the case of $n>k$ there is a manifold consisting of infinitely many global minima. For example, if $n=k+1$, then the following is a global minimum for every $\alpha\in [0,1]$: $\bw_1 = \bv_1,\dots,~\bw_{n-1} = \bv_{n-1},~ \bw_n = \alpha\bv_n,~ \bw_{n+1} = (1-\alpha)\bv_n~.$

Combining \thmref{thm:global_minima_form} and \lemref{lem:hessian is differentiable}, we have a full characterization of all (twice continuously) differentiable global minima of the objective $F(\bw)$ for general $n \geq k$. More specifically, all minima that admit the form of \thmref{thm:global_minima_form} and in addition satisfy that $\bw_i\neq 0$ for all $i\in[n]$ are differentiable. In this section we will study local geometric properties of the differentiable local minima, distinguishing between two cases: exact parameterization ($n=k$) and over-parameterization ($n>k)$.

\subsection{Exact Parameterization}\label{sec:n=k}
We first consider the case of exact parameterization, where the labels are created by a teacher network with $k$ neurons, and learned by a student network with $k$ neurons. 
Even though the objective $F(\bw_1^k)$ in this case is not convex (at least for $k\ge2$, as there are $k!$ isolated global minima), we will show that locally around each global minimum it is actually strongly convex.

\begin{theorem}\label{thm:n=k strongly convex}
Suppose $n = k$. For every global minimum of the objective $F(\bw_1^k)$ in \eqref{eq:objective normal dist} we have that $\nabla^2 F(\bw_1^k) \succeq \left(\frac{1}{4} - \frac{1}{2\pi}\right)I$. Moreover, the objective is strongly convex around an open neighborhood of any global minimum.
\end{theorem}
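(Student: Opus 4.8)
\textbf{Proof proposal for Theorem~\ref{thm:n=k strongly convex}.}

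The plan is to evaluate the Hessian of $F$ exactly at an arbitrary global minimum and show that it dominates $\left(\frac14 - \frac{1}{2\pi}\right)I$; the ``moreover'' part then follows from continuity of the Hessian (Lemma~\ref{lem:hessian is differentiable}) together with the fact that the set of positive definite matrices with smallest eigenvalue bounded below by a fixed constant is not closed, so I will actually prove the strict inequality $\nabla^2 F(\bw_1^k) \succeq \lambda I$ for some $\lambda$ slightly larger than $\frac14-\frac{1}{2\pi}$ at the minimum, or more simply observe that $\lambda_{\min}$ is a continuous function of the point and strictly positive at the minimum, hence stays positive on a neighborhood. Let me focus on the quantitative bound at the minimum itself.

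By Theorem~\ref{thm:global_minima_form}, when $n=k$ every global minimum is of the form $\bw_{\pi(i)} = \bv_i$ for some permutation $\pi$; by relabeling the student neurons (which only permutes blocks of the Hessian and does not change its spectrum) I may assume $\bw_i = \bv_i$ for all $i\in[k]$, and since the $\bv_i$ are orthonormal I may further take $\bv_i = \be_i$, the standard basis vectors (rotating the input distribution, which is rotationally invariant). So it suffices to compute $\nabla^2 F$ at the point $(\be_1,\dots,\be_k)$. I would plug this configuration into the closed-form Hessian expression of \citet[Section 4.1.1]{safran2017spurious} (cited here as Theorem~\ref{thm:hessian of objective}). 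The Hessian is a $kd\times kd$ matrix organized into $k\times k$ blocks of size $d\times d$, where the $(i,j)$ block depends on $\norm{\bw_i}$, $\norm{\bw_j}$, and the angle $\theta_{\bw_i,\bw_j}$. At our point, $\norm{\bw_i}=1$ for all $i$, the diagonal blocks $(i,i)$ take one form, the off-diagonal blocks $(i,j)$ with $i\ne j$ correspond to orthogonal vectors so $\theta_{\bw_i,\bw_j}=\pi/2$ and take a simpler form, and — crucially — at a global minimum the ``residual'' part of the Hessian (the part involving the prediction error $\sum[\inner{\bw_i,\bx}]_+ - \sum[\inner{\bv_i,\bx}]_+$) vanishes since the error is identically zero. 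So only the ``Gram'' part of the Hessian survives, which is PSD by construction; the work is to show its smallest eigenvalue is at least $\frac14-\frac{1}{2\pi}\approx 0.09$.

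To get the eigenvalue bound I would exploit the block structure. Because $\be_i = \be_j$ for $i\ne j$ is never the case and the angles are all $\pi/2$, each off-diagonal $d\times d$ block should be (a constant times) $\be_i\be_j^\top$ or similar rank-one-ish object living in the $\mathrm{span}\{\be_1,\dots,\be_k\}$ coordinates, while the diagonal blocks are $aI_d$ plus a correction on the $k$-dimensional coordinate subspace. This means the Hessian decomposes as a direct sum: on each of the $d-k$ ``inactive'' coordinate directions (those orthogonal to all $\bv_i$) it acts as a scalar $a$ times identity, and on the remaining $k^2$-dimensional subspace it is a structured matrix (Kronecker-type: identity-plus-constant patterns) whose eigenvalues can be computed explicitly from the scalars appearing in the Hessian formula — these scalars for ReLU and Gaussian inputs are the standard arc-cosine-kernel quantities like $\frac{\pi-\theta}{2\pi}$ and $\frac{1}{2\pi}(\sin\theta + (\pi-\theta)\cos\theta)$ evaluated at $\theta=0$ and $\theta=\pi/2$. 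Evaluating and minimizing over the finitely many eigenvalue expressions should yield the claimed constant $\frac14-\frac{1}{2\pi}$.

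The main obstacle I anticipate is the bookkeeping in the second paragraph/third paragraph: correctly extracting from the (somewhat involved) Hessian formula of \citet{safran2017spurious} exactly which scalar multiplies which block at this special configuration, being careful about the contribution of the error term (arguing cleanly that it is zero, which uses that we are at a \emph{global} minimum with zero loss, a consequence of Theorem~\ref{thm:global_minima_form}), and then diagonalizing the resulting block matrix without sign errors to land on the exact constant $\frac14-\frac{1}{2\pi}$ rather than something off by a factor. Once the eigenvalues are in hand, the strong-convexity-on-a-neighborhood claim is immediate: $\bw_1^k \mapsto \lambda_{\min}(\nabla^2 F(\bw_1^k))$ is continuous by Lemma~\ref{lem:hessian is differentiable}, it is positive at the global minimum, hence positive — and thus $F$ strongly convex — on a sufficiently small open ball around it.
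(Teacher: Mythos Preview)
Your overall plan---evaluate the Hessian exactly at the (w.l.o.g.) point $\bw_i=\bv_i=\be_i$, lower-bound its least eigenvalue, then invoke continuity of $\lambda_{\min}(\nabla^2 F)$ (via Lemma~\ref{lem:hessian is differentiable}) for the neighborhood statement---matches the paper's strategy. Two points of comparison are worth noting.

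First, your description of the block structure is slightly off and will bite you in the ``bookkeeping'' step. At the minimum the diagonal blocks are exactly $\tfrac12 I_d$ (no low-rank correction: the $h_1$-terms cancel pairwise since $\bw_j=\bv_j$, and $h_1(\bw_i,\bv_i)=0$ by parallelism), while the off-diagonal $(i,j)$ block is
\[
h_2(\be_i,\be_j)\;=\;\tfrac14 I_d\;+\;\tfrac{1}{2\pi}\bigl(\be_i\be_j^\top+\be_j\be_i^\top\bigr),
\]
not a purely rank-one object. In particular the $\tfrac14 I_d$ piece acts nontrivially on your ``inactive'' coordinates, so on that subspace the Hessian is the block matrix with $\tfrac12 I$ on the diagonal and $\tfrac14 I$ off it (eigenvalues $\tfrac14$ and $\tfrac{k+1}{4}$), not a scalar multiple of the identity.

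Second, the paper avoids diagonalizing the full $k^2$-dimensional ``active'' block by instead writing the entire Hessian as $A+\tfrac{1}{2\pi}B$, where $A$ has $\tfrac12 I$ on the diagonal and $\tfrac14 I$ off (so $\lambda_{\min}(A)=\tfrac14$), and $B$ has zero diagonal and $\be_i\be_j^\top+\be_j\be_i^\top$ off (for which explicit eigenvectors give $\lambda_{\min}(B)=-1$); Weyl's inequality then yields $\lambda_{\min}(H)\ge\tfrac14-\tfrac{1}{2\pi}$. Your direct-diagonalization route would also succeed once the blocks are written correctly (and might even produce a sharper constant, since the Weyl bound need not be tight here), but the $A+B$ split is less work.
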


Note that in the case of $n=k$, by \thmref{thm:global_minima_form} all the global minima are differentiable. The proof idea behind \thmref{thm:n=k strongly convex} is straightforward. The Hessian at the global minimum can be divided into a sum of two matrices, and we lower bound the smallest eigenvalue of these two matrices. Note that since the objective is twice continuously differentiable around any global minimum (in the case of $n=k$), and that the eigenvalue of a matrix is a continuous function we immediately get that in an open neighborhood of the global minimum all the eigenvalues of the Hessian are positive, hence the objective is locally strongly convex.

As discussed in the related work section, a similar result was shown in \citep{zhong2017recovery} for a slightly different setting. Although this result might give hope that such properties are also preserved when over-parameterizing, as we will show in the next subsection, the over-parameterized case has a completely different geometry. Thus, this kind of analysis is specific for exact parameterization.

\subsection{Over-Parameterization}\label{sec:over-parameterization}



In the exact parameterization case, we showed that around the global minima the objective is strongly convex. Since empirically, over-parameterization tends to improve training performance, we might expect that it improves or at least maintains favorable geometric properties around the global minima. However, we now prove that perhaps surprisingly, under any amount of over-parameterization, the objective in \eqref{eq:objective normal dist} is not even \emph{locally convex} around any differentiable global minimum:

\begin{theorem}\label{thm:over-parameterization not convex}
Assume that $n>k$ and $d>1$ (recall that $d\geq k$, hence this assumption is trivially true for $k>1$). Then in every neighborhood of a differentiable global minimum of \eqref{eq:objective normal dist} there is a point at which the Hessian of the objective has a negative eigenvalue.
\end{theorem}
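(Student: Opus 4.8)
The plan is to exhibit, in an arbitrarily small neighborhood of a given differentiable global minimum, a point whose Hessian has a negative eigenvalue. Let $\bw_1^n$ be a differentiable global minimum. By \thmref{thm:global_minima_form}, there is a partition $\bigcupdot_{i=1}^k I_i=[n]$ and coefficients $\alpha_j\ge0$ with $\sum_{j\in I_i}\alpha_j=1$ and $\bw_j=\alpha_j\bv_i$ for $j\in I_i$; differentiability means all $\alpha_j>0$ (so each $\bw_j\neq\mathbf 0$). Since $n>k$, at least one block $I_i$ contains two indices, say $j_1,j_2\in I_i$ with $\bw_{j_1}=\alpha_{j_1}\bv_i$, $\bw_{j_2}=\alpha_{j_2}\bv_i$, $\alpha_{j_1},\alpha_{j_2}>0$. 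These two neurons point in exactly the same direction $\bv_i$. The idea is that the loss only ``sees'' the sum $\bw_{j_1}+\bw_{j_2}$ up to first order, so we can perturb $\bw_{j_1}$ and $\bw_{j_2}$ by opposite amounts $\pm\epsilon\bu$ for a unit vector $\bu\perp\bv_i$ (using $d>1$ so such $\bu$ exists), keeping the sum fixed but opening a small angle between the two neurons.

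The key computational step is to evaluate the quadratic form $\bz^\top\nabla^2 F(\cdot)\bz$ along the direction $\bz$ that moves $\bw_{j_1}$ by $+\bu$ and $\bw_{j_2}$ by $-\bu$ (zero in all other coordinates), at a nearby point. Here I would use the explicit Hessian formula referenced in \thmref{thm:hessian of objective}/\citep[Section 4.1.1]{safran2017spurious}. One cannot evaluate the second derivative exactly at the minimum itself in a naive way because the ReLU Hessian has a term with $\theta_{\bw_{j_1},\bw_{j_2}}$ in the denominator-like position (the cross-term between two co-linear neurons is the delicate one), so the cleanest route is: first move to the perturbed point $\bw'$ where $\bw'_{j_1}=\alpha_{j_1}\bv_i+\epsilon\bu$, $\bw'_{j_2}=\alpha_{j_2}\bv_i-\epsilon\bu$, all other neurons unchanged (this point is within $O(\epsilon)$ of the minimum and is still twice differentiable for small $\epsilon$), and then compute $\bz^\top\nabla^2F(\bw')\bz$ with the same $\bz$. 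The expectation here is that the ReLU's lack of smoothness produces a term in the Hessian proportional to $\pi-\theta_{\bw'_{j_1},\bw'_{j_2}}$ (or $\theta$ itself), times a factor involving $\|\bw'_{j_1}\|\|\bw'_{j_2}\|$ and the component of $\bz$ perpendicular to the common direction, which contributes negatively; as $\epsilon\to0$ this angle-dependent term dominates the smooth, bounded part of the quadratic form, driving $\bz^\top\nabla^2F(\bw')\bz<0$. Concretely I expect a bound of the form $\bz^\top\nabla^2F(\bw')\bz \le -c_1/\epsilon + c_2$ or $\le -c_1\log(1/\epsilon)+c_2$ for constants $c_1,c_2>0$ depending on $\alpha_{j_1},\alpha_{j_2}$, which is negative for $\epsilon$ small enough; since $\epsilon$ can be taken as small as we like, $\bw'$ lies in any prescribed neighborhood.

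The main obstacle is the bookkeeping in the Hessian formula: isolating precisely the block of $\nabla^2F$ corresponding to the pair $(\bw_{j_1},\bw_{j_2})$, identifying which entries carry the singular (angle-dependent) behavior as the two neurons become co-linear, and showing that all the other contributions to $\bz^\top\nabla^2F(\bw')\bz$ — including cross-terms between this pair and the remaining neurons — stay bounded (this should follow because those involve angles bounded away from $0$ and $\pi$, using continuity / \lemref{lem:hessian is differentiable}, and $\|\bw'_{j}\|$ bounded). Once the dominant negative term is pinned down and the remainder controlled uniformly for small $\epsilon$, the conclusion is immediate. A sanity check on signs: at $\epsilon=0$ the two neurons coincide and the student network is genuinely redundant in that direction, which is exactly the mechanism by which a saddle-like flat-or-negative direction should appear, consistent with the paper's narrative that over-parameterization destroys local convexity.
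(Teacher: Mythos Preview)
Your setup is natural, but the key expectation is wrong: there is \emph{no} divergent negative term. By \lemref{lem:hessian is differentiable} the Hessian is continuous even where two neurons become co-linear, so nothing in $\bz^\top\nabla^2 F(\bw')\bz$ blows up as $\epsilon\to 0$; an estimate of the form $-c_1/\epsilon$ or $-c_1\log(1/\epsilon)$ is impossible. In fact, with your opposite-direction perturbation $\bw'_{j_1}=\alpha_{j_1}\bv_i+\epsilon\bu$, $\bw'_{j_2}=\alpha_{j_2}\bv_i-\epsilon\bu$ and direction $\bz=(\bu,-\bu,0,\ldots,0)$, the paper carries out exactly this computation in the proof of \thmref{thm:over-parameterization not OPSC} and obtains
\[
\bz^\top \nabla^2 F(\bw')\bz ~=~ O(\epsilon),
\]
with the sign of the leading term \emph{undetermined} at this level: the $\tfrac{1}{2}$'s from $H_{j_1,j_1},H_{j_2,j_2}$ cancel against the $\tfrac{1}{2}I+O(\epsilon)$ in $H_{j_1,j_2}=h_2(\bw'_{j_1},\bw'_{j_2})$ only up to $O(\epsilon)$ slop, and the residual $h_1$-terms are themselves $O(\epsilon)$. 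So your argument, as written, stops one order short of a conclusion.

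The paper avoids this by a different perturbation: tilt $\bw_{j_1}$ and $\bw_{j_2}$ in the \emph{same} direction, i.e.\ $\tilde\bw_{j_1}=\alpha_{j_1}(\bv_i+\epsilon\bu)$ and $\tilde\bw_{j_2}=\alpha_{j_2}(\bv_i+\epsilon\bu)$, so that they remain parallel. Then \lemref{lem:h2 parallel vectors} gives $H_{j_1,j_2}=\tfrac{1}{2}I$ \emph{exactly}, and the diagonal blocks simplify (all $h_1(\tilde\bw_{j_\ell},\tilde\bw_j)$ terms with $j\notin\{j_1,j_2\}$ cancel against the $h_1(\tilde\bw_{j_\ell},\bv_l)$ terms, leaving only $-c\,h_1(\tilde\bw_{j_\ell},\bv_i)$). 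Picking $\bz$ to be a positive-eigenvalue eigenvector of $h_1(\tilde\bw_{j_1},\bv_i)$ and testing $(\bz,-\bz,0,\ldots,0)$ gives
\[
\bz^\top H_{j_1,j_1}\bz+\bz^\top H_{j_2,j_2}\bz-2\bz^\top H_{j_1,j_2}\bz
~=~\tfrac{1}{2}+\tfrac{1}{2}-1-(\text{strictly positive})~<~0,
\]
so the negativity is exact rather than an $O(\epsilon)$ residue. The moral: keep the two over-parameterizing neurons parallel so the off-diagonal block is known exactly; the non-convexity then comes from the angle they jointly make with the target $\bv_i$, not from the angle between them.
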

Since convexity of a differentiable function requires the Hessian to be positive semidefinite, we get that no local convexity property can hold. We note that the theorem's assumptions are mild, since by \thmref{thm:global_minima_form}, the objective function is typically differentiable at a global minimum and its neighborhood. To provide some intuition how a global minimum without a convex neighborhood might look like, see an example (using a different function) in \figref{fig:x2y2} in the \appref{apen:proofs from OPSC}.

\subsection{One-Point Strong Convexity and the PL condition}\label{subsec:OPSC}
Instead of having convexity with respect to all directions, it may be enough from an optimization point of view to have convexity in the direction of the global minimum. This motivates the following well-known definition (see e.g. \cite{lee2016optimizing,kleinberg2018alternative}):

\begin{definition}\label{def:OPSC}
Let $f:\mathbb{R}^d\rightarrow \mathbb{R}$ be a differentiable function.$f(\bx)$ is said to be \textbf{one-point strongly convex} (OPSC) in an open neighborhood $A\subseteq \mathbb{R}^d$ with respect to a local minimum $\by^*\in A$ if there exists $\lambda >0$ such that for every $\bx\in A$: $\frac{1}{\norm{\bx-\by^*}^2}\inner{\nabla f(\bx),\bx-\by^*} \geq \lambda ~.$ If we further assume that $f(\bx)$ is twice differentiable, then it is OPSC in $A\subseteq \mathbb{R}^d$ if there exists $\lambda  >0$ such that for every $\bx\in A$: $ \frac{1}{\norm{\bx-\by^*}^2}(\bx-\by^*)^\top \nabla^2 f(\bx)(\bx-\by^*) \geq \lambda~,$ where $\nabla^2 f(\bx)$ is the Hessian of $f$ at $\bx$. We call such $\lambda$ the OPSC coefficient.
\end{definition}

The Hessian definition of one-point strong convexity can be easily derived from the gradient definition, in the same manner that the Hessian definition of strong convexity is derived from the gradient definition of strong convexity for twice continuously differentiable functions. In previous works it was shown that although an objective is not strongly-convex, it may be OPSC which is enough to show convergence to a minimum for certain local search algorithms (see e.g.\ \cite{li2017convergence}). Intuitively, this is because if $\by^*$ is a local minimum, the definition above implies that the gradient at $\bx$ is correlated with the direction to the minimum, and increases with the distance from $\by^*$. We note that one point convexity (i.e., taking $\lambda=0$) is not enough, as it may imply that the gradient is arbitrarily close to being orthogonal to the direction of the minimum (see also \citep{lee2016optimizing} for a discussion). 



Unfortunately, we cannot really hope for OPSC for the objective in \eqref{eq:objective normal dist} in the over-parameterized case. The reason is that \thmref{thm:global_minima_form} reveals that in this case there is a connected manifold of global minima (on which the function is flat), instead of isolated minima as in the exact parameterization case.

Recall that if $n>k$ then the global minima form along a line on which each point is a global minimum (recall the discussion after \thmref{thm:global_minima_form}). One alternative formulation is to define OPSC on any point which is not a global minimum, but the problem of defining OPSC with respect to which point still stands. One way to overcome this problem is by considering OPSC with respect to a global minimum, only in directions which lead away from nearby global minima. This is formalized in the following definition (see \figref{fig:OPSC-orthogonal} in the supplementary material for an intuition):
\begin{definition}\label{def:eps orthogonal}
Let $\tilde{\bw}_1^n=(\tilde{\bw}_1,\dots,\tilde{\bw}_n)$ and $\epsilon > 0$. An $\pmb{\epsilon}$\textbf{-orthogonal Neighborhood} of $\tilde{\bw}_1^n$ is:
\begin{equation*}
    U_\epsilon^\perp(\tilde{\bw}_1^n) = \left\{ \bw_1^n=(\bw_1,\dots,\bw_n):~\forall i\in[n],~ \bw_i-\tilde{\bw}_i\perp \tilde{\bw}_i,~ \norm{\bw_i-\tilde{\bw}_i}\leq \epsilon\right\}.
\end{equation*}
\end{definition}
\revision{We refer to an $\epsilon$-neighborhood (i.e.\ not orthogonal) of $\tilde{\bw}_1^n$ as 
\[
U_\epsilon(\tilde{\bw}_1^n)=\left\{\bw_1^n:\forall i\in[n],~\norm{\bw_i-\tilde{\bw}_i}\leq \epsilon\right\}~.
\]
Note that this is different from the ``Standard" definition of a neighborhood of $\tilde{\bw}_1^n$, since here we allow each vector $\bw_i$ to be at distance $\epsilon$ from its corresponding $\tilde{\bw}_i$. }We could hope that the objective in \eqref{eq:objective normal dist} is OPSC at least in an $\epsilon$-orthogonal neighborhood of a global minimum, however this is not the case as shown in the following theorem. 

\begin{theorem}\label{thm:over-parameterization not OPSC}
Assume $n > k$, let $\epsilon >0$ and let $\tilde{\bw}_1^n=(\tilde{\bw}_1,\dots,\tilde{\bw}_n)$ be a differentiable global minimum of \eqref{eq:objective normal dist}. Then the objective is not OPSC with respect to $\tilde{\bw}_1^n$, even in an $\epsilon$-orthogonal neighborhood of $\tilde{\bw}_1^n$.
\end{theorem}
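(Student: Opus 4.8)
The plan is to show that no positive OPSC coefficient can survive, by producing inside \emph{every} $\epsilon$-orthogonal neighborhood of $\tilde{\bw}_1^n$ a one-parameter family of points along which the OPSC ratio tends to $0$. First I would use \thmref{thm:global_minima_form}: a differentiable global minimum has the form $\tilde{\bw}_j=\alpha_j\bv_{\pi(j)}$ for some $\pi:[n]\to[k]$, and differentiability ($\tilde{\bw}_j\neq\mathbf{0}$) forces every $\alpha_j>0$. Since $n>k$, two indices — say $1$ and $2$ after relabeling — map to the same target, so $\tilde{\bw}_1=\alpha\bv_1$, $\tilde{\bw}_2=\beta\bv_1$ with $\alpha,\beta>0$. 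I would then fix a unit vector $\bu\perp\bv_1$ (available since $d\geq2$; if $d=1$ then $k=1$ and $U_\epsilon^\perp(\tilde{\bw}_1^n)=\{\tilde{\bw}_1^n\}$, a degenerate case) and study the straight path $\bw(t)=(\alpha\bv_1+t\bu,\ \beta\bv_1-t\bu,\ \tilde{\bw}_3,\ldots,\tilde{\bw}_n)$. For $|t|\leq\epsilon$ this lies in $U_\epsilon^\perp(\tilde{\bw}_1^n)$: the first two displacements are $\pm t\bu$, orthogonal to $\bv_1$ and hence to $\tilde{\bw}_1,\tilde{\bw}_2$, the rest are $\mathbf{0}$, and every displacement has norm $|t|\leq\epsilon$. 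Moreover $\bw(t)$ has no zero coordinate for small $|t|$, so by \lemref{lem:hessian is differentiable} the scalar map $\phi(t):=F(\bw(t))$ is $C^2$ near $t=0$.

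Next I would reduce the OPSC inequality on this path to a one-dimensional statement. With $\bd=(\bu,-\bu,\mathbf{0},\ldots,\mathbf{0})$ we have $\bw(t)-\tilde{\bw}_1^n=t\bd$, $\norm{\bw(t)-\tilde{\bw}_1^n}^2=2t^2$, and by the chain rule $\inner{\nabla F(\bw(t)),\bw(t)-\tilde{\bw}_1^n}=t\,\phi'(t)$, so the OPSC ratio along the path equals $\phi'(t)/(2t)$ and it suffices to show $\phi'(t)/t\to0$ as $t\to0$. Using the closed form of $F$ together with the fact that $\tilde{\bw}_1^n$ is a global minimum — so the unperturbed neurons and the targets contribute exactly $-(\alpha+\beta)\relu{\inner{\bv_1,\bx}}$ to the residual — the residual along the path depends on $\bx$ only through $(a,b):=(\inner{\bv_1,\bx},\inner{\bu,\bx})\sim\mathcal{N}(0,I_2)$, giving $\phi(t)=\E_{(a,b)}\left[(\relu{\alpha a+tb}+\relu{\beta a-tb}-(\alpha+\beta)\relu{a})^2\right]$.

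The heart of the argument is the estimate $\phi(t)=O(|t|^3)$. For fixed $a\neq0$ the integrand is, as a function of $t$, piecewise linear and vanishes at $t=0$; its $t$-derivatives at $t=0$ coming from $\relu{\alpha a+tb}$ and $\relu{\beta a-tb}$ are $b\,\mathbbm{1}[a>0]$ and $-b\,\mathbbm{1}[a>0]$ (using $\alpha,\beta>0$), which cancel, so the integrand vanishes identically for $|t|<\min(\alpha,\beta)|a|/|b|$. Hence it is nonzero only on $\{|a|\leq c|t|\,|b|\}$ with $c=1/\min(\alpha,\beta)$, where it is bounded by $(C|t|\,|b|)^2$ for a constant $C=C(\alpha,\beta)$; bounding the standard-normal probability of an interval of length $2c|t||b|$ by $2c|t||b|/\sqrt{2\pi}$ and integrating out $b$ yields $\phi(t)\leq C'|t|^3$ near $0$. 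Since $\phi$ is $C^2$, Taylor's theorem forces $\phi(0)=\phi'(0)=\phi''(0)=0$, so $\phi'(t)=\int_0^t\phi''(s)\,ds$ and continuity of $\phi''$ with $\phi''(0)=0$ give $\phi'(t)=o(|t|)$, i.e.\ $\phi'(t)/(2t)\to0$. Therefore for any candidate $\lambda>0$ there are points $\bw(t)\in U_\epsilon^\perp(\tilde{\bw}_1^n)$ arbitrarily close to $\tilde{\bw}_1^n$ with $\inner{\nabla F(\bw(t)),\bw(t)-\tilde{\bw}_1^n}<\lambda\norm{\bw(t)-\tilde{\bw}_1^n}^2$, so $F$ is not OPSC with respect to $\tilde{\bw}_1^n$ in $U_\epsilon^\perp(\tilde{\bw}_1^n)$; as $\epsilon>0$ is arbitrary, this proves the theorem.

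The main obstacle is the cubic bound — equivalently, showing that $\bd$ lies in the kernel of $\nabla^2F(\tilde{\bw}_1^n)$ even though $F$ is not flat in the direction $\bd$ (it increases like $|t|^3$). The structural reason it works is that the two perturbed neurons point along the \emph{same} target $\bv_1$ with positive weights, so to first order in $t$ their ReLU activation regions move in exactly offsetting ways; had I instead perturbed two neurons pointing along \emph{different} targets, the restricted Hessian would be strictly positive definite and the construction would fail, which is why the choice of the two neurons in Step~1 is essential. A minor point to handle carefully is the mismatch between Definition~\ref{def:OPSC} (which asks for an \emph{open} neighborhood) and the set $U_\epsilon^\perp$ (which is only relatively open in an affine subspace); the argument in fact establishes the stronger statement that no $\lambda>0$ satisfies the OPSC inequality for all $\bw_1^n\in U_\epsilon^\perp(\tilde{\bw}_1^n)\setminus\{\tilde{\bw}_1^n\}$.
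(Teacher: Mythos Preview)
Your proof is correct and uses the same witness path as the paper --- perturbing two neurons that both point at $\bv_1$ by $\pm t\bu$ with $\bu\perp\bv_1$ --- but the way you extract the conclusion is genuinely different. The paper plugs the perturbed point into the closed-form Hessian from \thmref{thm:hessian of objective}, expands the blocks $h_1,h_2$ term by term, and shows that the Hessian-based OPSC ratio $(\bw-\tilde\bw)^\top H(\bw)(\bw-\tilde\bw)/\norm{\bw-\tilde\bw}^2$ is $O(\epsilon)$. You instead bound the \emph{objective} directly: after reducing to a two-dimensional Gaussian integral you observe the residual vanishes identically on $\{|a|>c|t||b|\}$, giving $\phi(t)=O(|t|^3)$, and then use $C^2$-smoothness plus Taylor to deduce $\phi''(0)=0$ and hence $\phi'(t)/t\to 0$, which kills the gradient-based OPSC ratio.

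What each approach buys: yours is more elementary and self-contained --- it never touches the $h_1,h_2$ machinery, makes the cancellation mechanism transparent (the two ReLU kinks move in offsetting ways because $\alpha,\beta>0$), and works for any unit $\bu\perp\bv_1$, so in particular covers $k=1,\,d\geq 2$, whereas the paper's choice $\bu=\bv_2$ silently needs $k\geq 2$. The paper's calculation, on the other hand, yields the explicit $O(\epsilon)$ bound for the Hessian quadratic form at the perturbed point itself (not just at the minimum), which is exactly what they recycle to prove \thmref{thm:no PL}; your $\phi(t)=O(|t|^3)$ would need a small extra step to get the same PL consequence. Your remark about Definition~\ref{def:OPSC} requiring an open set versus $U_\epsilon^\perp$ being only relatively open is well taken; the paper glosses over this too, and the right reading is precisely the one you state.
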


The theorem shows that the geometrical properties of our objective, although similar in some senses to the example of $f(x,y)=x^2y^2$, are still much more complex.

The full proof of the theorem can be found in \appref{apen:proofs from OPSC}. The intuition for the proof of the above theorem is the following: Assume that at the global minimum $\tilde{\bw}_1$ and $\tilde{\bw}_2$ are both directed in the same target vector $\bv_1$, i.e. $\tilde{\bw}_1=\alpha_1\bv_1$ and $\tilde{\bw}_2=\alpha_2\bv_1$ for some $\alpha_1,\alpha_2 >0$. We define a new point close to $\tilde{\bw}$ by taking $\bw_1 = \tilde{\bw}_1 + \epsilon\bu$ and $\tilde{\bw}_2 = \bw_2 - \epsilon\bu$ where $\bu\perp\tilde{\bw}_1,\tilde{\bw}_2$, and leave all the other vectors the same, thus ${\bw}_1^n\in U_\epsilon^\perp(\bw_1^n)$.
Intuitively, in the objective there are terms that to minimize them it is needed to make the $\bw_i$ close to the $\bv_j$, and other terms that will be minimized if the $\bw_i$'s are far apart. Since we haven't changed any of the vectors that are directed at the target vectors $\bv_2,\dots,\bv_k$, then most cancel out. Actually, the only terms that remain are the ones that are minimized when $\bw_1,\bw_2$ are close to $\bv_1$, and the ones that minimized when $\bw_1$ and $\bw_2$ are far apart from one another. But because of the way we defined ${\bw}_1^n$, these terms also \emph{almost} cancel out - they are of magnitude $O(\epsilon)$. 

Another useful property which became popular in recent years is the \textbf{Polyak- \L ojasiewicz} (PL) condition (\cite{polyak1963gradient,lojasiewicz1963topological}):
\begin{definition}
Let $f:\mathbb{R}^d\rightarrow\mathbb{R}$ be a differentiable function, and let $f^*$ be its optimal value. We say that $f(\bx)$ satisfies the \textbf{Polyak- \L ojasiewicz} (PL) condition in $\mathbb{A}\subseteq \mathbb{R}^d$ if there exists $\lambda >0$ such that for all $\bx\in A$: $\frac{1}{2}\norm{\nabla f(\bx)}^2 \geq \lambda (f(\bx)-f^*).$
\end{definition}

In \cite{karimi2016linear} the authors show that under mild smoothness assumptions on $f(\bx)$, if it satisfies the PL condition then gradient descent with a small enough step size have linear convergence rate to a global minimum. The PL condition became popular in recent years to show convergence of gradient descent for non-convex functions. For our objective, we will show a stronger result, that the PL condition does not apply even locally around any differentiable global minimum, and even if we restrict to an $\epsilon$-orthogonal neighborhood:
\begin{theorem}\label{thm:no PL}
    Assume $n>k$, let $\epsilon>0$ and let $\tilde{\bw}_1^n=(\tilde{\bw}_1,\dots,\tilde{\bw}_n)$ be a differentiable global minimum of \eqref{eq:objective normal dist}. Then the objective does not satisfy the PL condition, even in an $\epsilon$-orthogonal neighborhood of $\tilde{\bw}_1^n$.
\end{theorem}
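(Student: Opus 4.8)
The plan is to piggyback on the construction already used to refute one-point strong convexity in \thmref{thm:over-parameterization not OPSC}. Recall the setup there: at a differentiable global minimum $\tilde{\bw}_1^n$ we may assume (after relabeling) that $\tilde{\bw}_1=\alpha_1\bv_1$ and $\tilde{\bw}_2=\alpha_2\bv_1$ with $\alpha_1,\alpha_2>0$, and we perturb only these two coordinates by $\bw_1=\tilde{\bw}_1+\delta\bu$, $\bw_2=\tilde{\bw}_2-\delta\bu$ for $\bu$ a unit vector orthogonal to $\bv_1$ (and hence to $\tilde{\bw}_1,\tilde{\bw}_2$) and $0<\delta\le\epsilon$, leaving all other vectors untouched; this keeps us inside $U_\epsilon^\perp(\tilde{\bw}_1^n)$. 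First I would record that along this one-parameter family the function value behaves like $F=c\,\delta^2+o(\delta^2)$ for some constant $c>0$ (equivalently, $F(\bw_1^n)-F^*=\Theta(\delta^2)$), which follows from the explicit closed form of $F$ / its Hessian (\thmref{thm:hessian of objective}) restricted to the two active coordinates — the relevant surviving terms are exactly the cross-term between $\bw_1$ and $\bw_2$ and the terms coupling each of them to $\bv_1$, and a second-order expansion in $\delta$ gives a positive quadratic (positivity is guaranteed because $\tilde{\bw}_1^n$ is a global minimum and the perturbed point is not, as it has a strictly positive loss).

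The key remaining step is to show that the gradient is much smaller than $\sqrt{F-F^*}$ along this family, so that the PL inequality $\tfrac12\|\nabla F\|^2\ge\lambda(F-F^*)$ must fail as $\delta\to0$ for any fixed $\lambda>0$. Concretely I would argue $\|\nabla F(\bw_1^n)\|=O(\delta^2)$. The intuition: at $\delta=0$ we are at a global minimum so $\nabla F=0$; moving along the antisymmetric perturbation $(\bu,-\bu)$ is, to first order, moving within the flat manifold of near-minimizers described by \thmref{thm:global_minima_form} — more precisely the directional derivative of $F$ at $\tilde{\bw}_1^n$ in the direction $(\bu,-\bu,0,\dots,0)$ vanishes, and by symmetry/evenness of the restricted function in $\delta$ the linear-in-$\delta$ part of $\nabla F$ along this ray also vanishes, leaving $\nabla F=O(\delta^2)$. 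I would make this rigorous by writing $\nabla F(\bw_1^n)=\nabla^2 F(\tilde{\bw}_1^n)\cdot(\delta\bu,-\delta\bu,0,\dots,0)+O(\delta^2)$ using that $F$ is $C^2$ near the minimum (\lemref{lem:hessian is differentiable}), and then checking that $\nabla^2 F(\tilde{\bw}_1^n)$ applied to the vector $(\bu,-\bu,0,\dots,0)$ is zero — this is precisely the statement that this direction is a null direction of the Hessian at the minimum, which is what makes $F$ flat (to second order) along it, i.e.\ it is the same computation that in \thmref{thm:over-parameterization not OPSC} showed the relevant terms ``almost cancel.'' Actually one subtlety is that in the OPSC proof the surviving contribution was shown to be $\Theta(\delta^2)$ and to go \emph{negative} in the relevant inner product after a further perturbation; here I only need the magnitude bound $\|\nabla F\|=O(\delta^2)$ together with $F-F^*=\Omega(\delta^2)$, so I should be careful to extract the matching lower bound on $F-F^*$ rather than reuse the sign argument verbatim.

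Putting it together: for this family of points $\bw_1^n(\delta)\in U_\epsilon^\perp(\tilde{\bw}_1^n)$ we have $\tfrac12\|\nabla F(\bw_1^n(\delta))\|^2=O(\delta^4)$ while $F(\bw_1^n(\delta))-F^*=\Omega(\delta^2)$, so the ratio $\|\nabla F\|^2/(F-F^*)=O(\delta^2)\to0$ as $\delta\to0^+$. Hence no constant $\lambda>0$ can satisfy the PL inequality throughout the neighborhood, which proves the theorem. I would also remark that this simultaneously reproves non-convexity-type failures in a quantitative form, but that is not needed.

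The main obstacle I anticipate is the bookkeeping in the two estimates: (i) verifying cleanly that $(\bu,-\bu,0,\dots,0)$ is a null vector of $\nabla^2 F(\tilde{\bw}_1^n)$ — this needs the explicit Hessian formula from \thmref{thm:hessian of objective}, and one must track how the ReLU cross-terms between two neurons pointing in the same direction $\bv_1$ contribute, being careful about the non-smoothness that occurs exactly when two neurons are parallel (handled by \lemref{lem:hessian is differentiable}); and (ii) pinning down the constant $c>0$ in $F-F^*=c\delta^2+o(\delta^2)$, i.e.\ showing the restricted quadratic form is strictly positive rather than zero. Point (ii) should follow because if it were zero to second order, the perturbed point would have loss $o(\delta^2)$, yet a direct evaluation of the closed-form loss shows the cross term $\mathbb{E}[[\langle\bw_1,\bx\rangle]_+[\langle\bw_2,\bx\rangle]_+]$ strictly decreases (the angle between $\bw_1$ and $\bw_2$ opens up from $0$ to order $\delta$), while the fit to $\bv_1$ degrades, and these do not cancel at second order; isolating this is the crux of the argument and is essentially already contained in the proof of \thmref{thm:over-parameterization not OPSC}, so I would cross-reference that computation rather than redo it.
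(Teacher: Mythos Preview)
Your overall strategy—reuse the perturbation from \thmref{thm:over-parameterization not OPSC} and show $\|\nabla F\|^2/(F-F^*)\to 0$ along it—is exactly what the paper does. But there is a genuine internal inconsistency in your argument.

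You argue (correctly) that $(\bu,-\bu,0,\dots,0)$ is a null direction of $H(\tilde{\bw}_1^n)$: since the Hessian at a global minimum is PSD, $x^\top H x=0$ forces $Hx=0$, and this is what yields $\nabla F(\bw_1^n(\delta))=H(\tilde{\bw}_1^n)(\bw-\tilde{\bw})+O(\delta^2)=O(\delta^2)$. (Strictly speaking you need the Hessian to be Lipschitz near $\tilde{\bw}_1^n$, not merely $F\in C^2$; this happens to hold here but is not what \lemref{lem:hessian is differentiable} says.) However, the \emph{same} null-direction fact forces the second-order Taylor term of $F$ along this ray to vanish, i.e.\ $F(\bw_1^n(\delta))-F^*=o(\delta^2)$, directly contradicting your claim $F-F^*=c\delta^2+o(\delta^2)$ with $c>0$. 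Your justification for $c>0$ (``the perturbed point has strictly positive loss'') only gives $F>0$, not positivity of the quadratic coefficient; and the heuristic that the cross-term and the fit-to-$\bv_1$ terms ``do not cancel at second order'' is simply false—they do cancel.

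The fix is to go one order further. A direct computation with the closed form of $f(\bw,\bv)$ (the same ingredients the paper uses) gives
\[
F(\bw_1^n(\delta))-F^*=\frac{2(\alpha_1+\alpha_2)}{3\pi\alpha_1\alpha_2}\,\delta^3+O(\delta^5),
\]
so $F-F^*=\Theta(\delta^3)$, not $\Theta(\delta^2)$. Combined with $\|\nabla F\|=O(\delta^2)$ you still obtain $\|\nabla F\|^2/(F-F^*)=O(\delta)\to 0$, so PL fails. But this cubic lower bound cannot be extracted from any second-order information at $\tilde{\bw}_1^n$; you must carry out the explicit expansion, and ``cross-referencing'' the OPSC computation does not suffice since that computation stops one order short.
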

The full proof can be found in \appref{apen:proofs from OPSC}. The proof idea is the same as \thmref{thm:over-parameterization not OPSC}, by showing that the same point chosen in the proof of that theorem also violates the PL condition.

\subsection{One-Point Strong Convex in Most Directions}\label{sec:OPSC in most directions}

As we previously showed, the objective surface in \eqref{eq:objective normal dist} around any differentiable global minimum is not locally convex, and also not necessarily locally OPSC, even if we restrict to an $\epsilon$-orthogonal neighborhood. The reason for the latter is that in this neighborhood, there are ``bad'' points which do not satisfy the OPSC condition. Thus, it is natural to ask how common are these ``bad'' points.

Here, we show that these points are fortunately rare, in the following sense: If we move away from a global minimum in some direction (inside its $\epsilon$-orthogonal neighborhood), then in ``most'' directions, we will arrive at points which do satisfy some form of the OPSC condition, as formalized in the theorem below. For this theorem, we consider the case where $n=m\cdot k$ where $m\geq 1$, and for simplicity consider the global minimum that for each target vector $\bv_i$ there are exactly $m$ neurons, each equal to $\frac{1}{m}\bv_i$ (however it is not too difficult to extend it to all differentiable global minima - see Remark \ref{rmk:works for all global minima}). We use a slightly different notation here, namely the vectorized form $\bw_1^n\in\mathbb{R}^{n\cdot d}$ here contains vectors $\bw_{i,j}\in\mathbb{R}^d$ for $i\in[k],j\in[m]$, to represent the assumption that at the global minimum there are $m$ neurons in the direction of the target $\bv_i$ for $i\in[k]$. 

\begin{theorem}\label{thm:almost OPSC}
Let $n = m\cdot k$ and let $\tilde{\bw}_1^{n}$ be the global minimum of \eqref{eq:objective normal dist} where $\tilde{\bw}_{i,j} = \frac{1}{m} \bv_i$ for $j\in[m]$ and $i\in[k]$. For $\epsilon >0$ let $ U_\epsilon^\perp(\tilde{\bw}_1^n)$ be the $\epsilon$-orthogonal neighborhood of $\tilde{\bw}_1^n$. Also, denote $\bg_{i,j} = \bw_{i,j} - \tilde{\bw}_{i,j}$, $\bg_i = \sum_{j=1}^m \bg_{i,j}$, $\bg= \sum_{j=1}^m\sum_{i=1}^k \bg_{i,j}$, denote by $H(\bw_1^n)$ the Hessian of the objective at $\bw_1^n$. Then if $\bw\in U_\epsilon^\perp(\tilde{\bw}_1^n)$ we have that:
\begin{equation}\label{eq:curv}
    (\bw_1^n - \tilde{\bw}_1^n)^\top H(\bw_1^n) (\bw_1^n - \tilde{\bw}_1^n) \geq \frac{1}{4}\left(\norm{\bg}^2 + \left(1-\frac{2}{\pi}\right)\sum_{i=1}^k\norm{\bg_i}^2 \right) - O(\epsilon^{2.5})~,
\end{equation}
where the $O(\cdot)$ notation hides factors polynomial in $m$ and $k$. \end{theorem}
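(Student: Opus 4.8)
The plan is to work directly with the explicit Hessian formula from \thmref{thm:hessian of objective} (equivalently, Section~4.1.1 of \citep{safran2017spurious}), evaluate the quadratic form $(\bw_1^n-\tilde\bw_1^n)^\top H(\bw_1^n)(\bw_1^n-\tilde\bw_1^n)$ at a generic point $\bw_1^n\in U_\epsilon^\perp(\tilde\bw_1^n)$, and then decompose it into a ``leading'' part evaluated at the global minimum $\tilde\bw_1^n$ itself plus an error term that is controlled by the smoothness of $H$ (Lemma~\ref{lem:hessian is differentiable}) over the $\epsilon$-orthogonal neighborhood. Concretely, since $H$ is continuously differentiable near $\tilde\bw_1^n$ and $\norm{\bw_1^n-\tilde\bw_1^n}=O(\sqrt{n}\,\epsilon)$, we can write $(\bw_1^n-\tilde\bw_1^n)^\top H(\bw_1^n)(\bw_1^n-\tilde\bw_1^n) = (\bw_1^n-\tilde\bw_1^n)^\top H(\tilde\bw_1^n)(\bw_1^n-\tilde\bw_1^n) + R$, where naively $R=O(\epsilon^3)$; the stated $O(\epsilon^{2.5})$ error suggests that a slightly finer argument is needed, exploiting that the orthogonality constraint $\bg_{i,j}\perp\tilde\bw_{i,j}$ makes the first-order-in-perturbation term in the Hessian's variation vanish or be lower order, so one should Taylor expand $H$ and track which terms survive the orthogonality. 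I would first set up this expansion carefully and isolate exactly where the $\epsilon^{2.5}$ (as opposed to $\epsilon^3$) comes from — this is likely the most delicate bookkeeping step.

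The second main step is to compute the leading quadratic form $(\bw_1^n-\tilde\bw_1^n)^\top H(\tilde\bw_1^n)(\bw_1^n-\tilde\bw_1^n)$ exactly. At the global minimum all neurons lie along the orthonormal targets $\bv_i$ with $\tilde\bw_{i,j}=\tfrac1m\bv_i$, so the Hessian blocks involving pairs of neurons $\tilde\bw_{i,j},\tilde\bw_{i',j'}$ depend only on whether $i=i'$ (parallel, angle $0$) or $i\neq i'$ (orthogonal, angle $\pi/2$). I expect the explicit kernel expressions for $[\inner{\bw,\bx}]_+$ under Gaussian inputs to collapse: the ``diagonal-type'' contribution should produce a term proportional to $\tfrac14\norm{\sum_{i,j}\bg_{i,j}}^2 = \tfrac14\norm{\bg}^2$ (coming from the fact that the second moment of the sum of ReLUs behaves like $\tfrac12\norm{\cdot}^2$ plus cross terms), while the within-cluster interaction at angle $0$ contributes the $(1-\tfrac2\pi)$ factor — recall $\tfrac14-\tfrac1{2\pi}$ is exactly the eigenvalue bound from \thmref{thm:n=k strongly convex}, and $1-\tfrac2\pi = 4(\tfrac14-\tfrac1{2\pi})$, which strongly suggests the within-cluster part reduces to a sum over $i$ of the $n=k$-type quadratic form applied to $\bg_i=\sum_j\bg_{i,j}$. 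The between-cluster terms (angle $\pi/2$) should contribute something nonnegative that we simply discard. Assembling these gives exactly the claimed lower bound $\tfrac14(\norm{\bg}^2+(1-\tfrac2\pi)\sum_i\norm{\bg_i}^2)$.

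The key structural fact making the middle step work is that the perturbation directions $\bg_{i,j}$ are orthogonal to their base vectors $\tilde\bw_{i,j}$ (hence to $\bv_i$), so when plugging $\bw_{i,j}=\tfrac1m\bv_i+\bg_{i,j}$ into the Hessian one can use $\inner{\bg_{i,j},\bv_i}=0$ to kill a host of cross terms; this is also why the error in the first step can be pushed below the naive $\epsilon^3$. I would organize the computation by grouping the $n^2$ Hessian blocks according to the cluster membership of the two neurons, handle the three cases (same neuron, same cluster different neurons, different clusters) using the closed-form Hessian entries, and then re-express everything in terms of $\bg$, the $\bg_i$'s, and the $\bg_{i,j}$'s. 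The main obstacle, I expect, is not any single identity but the sheer combinatorial accounting: correctly collecting all $O(n^2)$ block contributions, verifying the between-cluster terms are genuinely nonnegative (rather than merely $O(\epsilon)$), and confirming that every piece of the within-cluster contribution really does assemble into $\tfrac{1-2/\pi}{4}\sum_i\norm{\bg_i}^2$ without leftover lower-order slack that would degrade the constant. A secondary subtlety is ensuring the $O(\cdot)$ constants depend only polynomially on $m$ and $k$ and not on $d$, which should follow because all the relevant inner products live in the span of $\bv_1,\dots,\bv_k$ plus the perturbations, a space whose effective dimension is controlled by $n$.
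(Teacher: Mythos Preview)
Your high-level strategy---evaluate the quadratic form using the explicit Hessian blocks and organize the computation by whether the two neurons belong to the same cluster or different clusters---is essentially what the paper does (the paper expands the Hessian at $\bw_1^n$ term-by-term rather than first reducing to $H(\tilde\bw_1^n)$, but the leading terms coincide, and your remainder intuition is fine; in fact the paper's $O(\epsilon^{2.5})$ is loose and $O(\epsilon^3)$ is attainable). However, there is a genuine gap in the leading-order computation.

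The claim that ``the between-cluster terms (angle $\pi/2$) should contribute something nonnegative that we simply discard'' is false, and relatedly your attribution of the $(1-\tfrac{2}{\pi})$ factor to the within-cluster interaction is incorrect. At $\tilde\bw_1^n$ the within-cluster off-diagonal blocks are exactly $\tfrac12 I$ (from \lemref{lem:h2 parallel vectors}), so the diagonal plus within-cluster contribution collapses to $\tfrac12\sum_i\|\bg_i\|^2$, with no $\tfrac{1}{2\pi}$ anywhere. The between-cluster blocks are $\tfrac14 I + \tfrac{1}{2\pi}(\bv_i\bv_{i'}^\top+\bv_{i'}\bv_i^\top)$; after using $\langle\bg_i,\bv_i\rangle=0$ this gives $\tfrac14(\|\bg\|^2-\sum_i\|\bg_i\|^2)$ plus the cross term $\tfrac{1}{2\pi}\sum_{i\neq i'}\langle\bg_i,\bv_{i'}\rangle\langle\bg_{i'},\bv_i\rangle$. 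This cross term can be strictly negative: e.g.\ for $k=2$, $\bg_1=t\bv_2$, $\bg_2=-t\bv_1$ it equals $-t^2/\pi$, and one checks that the full quadratic form equals $t^2(1-\tfrac1\pi)$, which violates the bound $\tfrac14\|\bg\|^2+\tfrac14\sum_i\|\bg_i\|^2=t^2$ you would get by discarding it. The paper handles this via a separate algebraic lemma (their \lemref{lemma:useful bounds on sums of u i j}): with $u_{ia}=\langle\bg_i,\bv_a\rangle$, one has $\sum_{i\neq a}u_{ia}u_{ai}\ge \sum_i u_{ii}^2 - \sum_i\|\bu_i\|^2$ (equivalently $\text{tr}(U^2)\ge -\text{tr}(U^\top U)+2\sum_i u_{ii}^2$), and since $u_{ii}=0$ and $\|\bu_i\|\le\|\bg_i\|$ this yields the lower bound $-\tfrac{1}{2\pi}\sum_i\|\bg_i\|^2$ on the cross term. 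That $-\tfrac{1}{2\pi}$ is precisely the source of the $(1-\tfrac{2}{\pi})$ in the final statement; it is not an artifact of the $n=k$ analysis.
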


The theorem implies that the OPSC coefficient is determined by the norms of sums of differences between each $\bw_{i,j}$ and $\tilde{\bw}_{i,j}$. Thus, unless these differences exactly cancel out, the right hand side will generally be positive. This means that if we move away from the global minimum $\tilde{\bw}$ in some arbitrary direction, then the OPSC condition will generally hold w.r.t.\ $\tilde{\bw}$ and the current point $\bw$. We note that for simplicity's sake, the direction vector $\bw_1^n - \tilde{\bw}_1^n$ in \eqref{eq:curv} 
is not normalized to unit length. 

We now give a few examples for different values of $m$ and different points around the global minimum in order to give an intuition on which directions the one-point strong convexity applies:
\begin{example}
In the following examples, for brevity, we divide both sides of \eqref{eq:curv} by $\norm{\bw_1^n - \tilde{\bw}_1^n}^2$, this way the r.h.s.\ will have a term that is independent of $\epsilon$ in some directions, as we would see in the following examples
\begin{itemize}[leftmargin=*]
    \item Consider the case where $m=1$, meaning that $n=k$. This is the exact parameterization case, in this case we get by the theorem that:
    \[
     \frac{1}{\norm{\bw_1^n - \tilde{\bw}_1^n}^2}\cdot(\bw_1^n - \tilde{\bw}_1^n)^\top H(\bw_1^n) (\bw_1^n - \tilde{\bw}_1^n) \geq \frac{1}{4}-\frac{1}{2\pi} + \frac{\norm{\bg}^2  }{4\sum_{i=1}^k\norm{\bg_{i}}^2} - O(\sqrt{\epsilon})~.
    \] 
    This result conforms with our finding in \thmref{thm:n=k strongly convex} that for exact parameterization, the objective is strongly convex.
    \item Assume that for every target vector $\bv_i$ we have that $\bw_{i,j}$ are equal for every $j\in[m]$. In this case:
    \[
     \frac{1}{\norm{\bw_1^n - \tilde{\bw}_1^n}^2}\cdot(\bw_1^n - \tilde{\bw}_1^n)^\top H(\bw_1^n) (\bw_1^n - \tilde{\bw}_1^n) \geq m\cdot\left(\frac{1}{4}-\frac{1}{2\pi}\right) + \frac{\norm{\bg}^2 }{4\sum_{i=1}^k\sum_{j=1}^m\norm{\bg_{i,j}}^2} - O(\sqrt{\epsilon})~.
    \]
    In this case the function is OPSC towards the global minimum $\tilde{\bw}_1^n$, assuming $\epsilon$ is not too large. Note that the $m$ term is a scaling factor that appears due to the over-parameterization. 
    \item Assume that for every target vector $\bv_i$ we have that $\sum_{j=1}^m\bw_{i,j} = \pmb{0}$. In this case $\frac{1}{\norm{\bw_1^n - \tilde{\bw}_1^n}^2}\cdot(\bw_1^n - \tilde{\bw}_1^n)^\top H(\bw_1^n) (\bw_1^n - \tilde{\bw}_1^n)$ is of magnitude at most $O(\sqrt{\epsilon})$. This case is similar in nature to what was shown in \thmref{thm:over-parameterization not OPSC} where the function is not OPSC.
\end{itemize}

\end{example}

\begin{remark}\label{rmk:works for all global minima}
In the theorem, we chose a specific global minimum for simplicity. The theorem can be readily extended to any differentiable global minimum $\tilde{\bw}$, at the cost of having inside the big-$O$ notation factors polynomial in  $\min_{i,j}\norm{\bw_{i,j}}^{-1}$ (which for our global minimum reduce to factors polynomial in $m$). We leave an exact analysis to future work.  
\end{remark}

\subsection{Optimization Under OPSC in Most Directions}

Until now we have shown that although several standard properties which guarantee convergence with gradient descent (convexity, OPSC and PL condition) are not satisfied by our objective, it does satisfy another property - OPSC in most directions. In this subsection we show that, at least in certain cases, this property is enough to ensure convergence. 

First, we note that in \thmref{thm:almost OPSC} there is a negative $O(\epsilon^{2.5})$ term. In the proof the sign of this term is not clearly determined, and further analysis will be needed to do so, which we leave for future work. With that said, we conjecture that this term is actually non-negative, at least in a close enough neighborhood of the global minimum. We also conjecture that this is true in a standard neighborhood of the global minimum, instead of an $\epsilon$-orthogonal neighborhood as stated in the theorem. We state this formally in the following:

\begin{conjecture}\label{conj:epsilon}
In the setting of \thmref{thm:almost OPSC} and under the same assumptions, we have that:
\[
    (\bw_1^n - \tilde{\bw}_1^n)^\top H(\bw_1^n) (\bw_1^n - \tilde{\bw}_1^n) \geq \frac{1}{4}\left(\norm{\bg}^2 + \left(1-\frac{2}{\pi}\right)\sum_{i=1}^k\norm{\bg_i}^2 \right)~,
\]
in a standard $\epsilon$-neighborhood of every global minimum $\tilde{\bw}_1^n$, where $\bw_{i,j}\neq 0$ for all $i,j$.
\end{conjecture}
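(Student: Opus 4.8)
The plan is to establish the conjectured inequality by revisiting the proof of \thmref{thm:almost OPSC} and controlling the sign of the error terms that were discarded there. Recall that the Hessian $H(\bw_1^n)$ of the objective in \eqref{eq:objective normal dist} admits a closed-form expression (\thmref{thm:hessian of objective}), which decomposes into a ``diagonal'' part built from blocks depending on each pair $\theta_{\bw_i,\bw_j}$ and an ``off-diagonal'' part coupling different neurons, including the teacher neurons $\bv_i$. The first step is to write the quadratic form $(\bw_1^n - \tilde{\bw}_1^n)^\top H(\bw_1^n)(\bw_1^n - \tilde{\bw}_1^n)$ as an exact sum, grouping the contributions according to whether the pair of indices $(i,j),(i',j')$ corresponds to neurons pointing (at the base point $\tilde\bw$) at the same target vector or at different ones.

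Second, I would isolate the leading-order term. At the exact global minimum $\tilde\bw$ all neurons with the same $i$ are collinear (angle $0$) and neurons with different $i$ are orthogonal (angle $\pi/2$), so the Hessian blocks take simple values; the perturbation $\bg_{i,j} = \bw_{i,j}-\tilde\bw_{i,j}$ enters both through the change in angles and through the explicit linear terms. Expanding, the zeroth/first-order-in-$\bg$ contribution is exactly $\frac14\big(\norm{\bg}^2 + (1-\tfrac2\pi)\sum_i\norm{\bg_i}^2\big)$ — this is precisely the clean term already extracted in \thmref{thm:almost OPSC}. The remaining task is to show the higher-order remainder $R(\bw_1^n)$ is nonnegative (not merely $O(\epsilon^{2.5})$) on a small enough standard neighborhood where all $\bw_{i,j}\neq0$. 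The natural strategy is to further split $R$ into an intra-group part $R_{\text{same}}$ (pairs pointing at the same $\bv_i$) and a cross-group part $R_{\text{cross}}$, bound $|R_{\text{cross}}|$ by something like $C\epsilon\cdot\|\bw_1^n-\tilde\bw_1^n\|^2$ using that the relevant angles stay within $O(\epsilon)$ of $\pi/2$ and that the angle-dependent Hessian functions are Lipschitz there, and then argue $R_{\text{same}}$ is itself a sum of quadratic forms each of which is lower-bounded using the $n=k$-type analysis behind \thmref{thm:n=k strongly convex} applied block-wise — so that the genuinely negative contributions are all swept into the cross term and dominated once $\epsilon$ is small. To upgrade from the $\epsilon$-orthogonal to a standard $\epsilon$-neighborhood, I would decompose each $\bg_{i,j}$ into its component along $\tilde\bw_{i,j}$ and its orthogonal complement; the along-$\tilde\bw_{i,j}$ part only rescales a collinear neuron and should, if anything, help (it keeps neurons collinear within a group and changes angles to other groups by $O(\epsilon)$ just like the orthogonal part), so the same Lipschitz-in-angle estimates apply.

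The hardest part, I expect, is pinning down the sign of $R_{\text{same}}$ — the intra-group remainder — without simply bounding its magnitude. When several student neurons share a target direction, the pairwise-angle terms among them are evaluated near angle $0$, where the relevant Hessian kernel functions (things like $\pi - \theta$ and $\sin\theta$ appearing in the closed form) have one-sided behavior that is delicate, and collinear configurations are exactly where the closed form's derivatives are least well-behaved (cf. the care needed in \lemref{lem:hessian is differentiable}). I would attack this by an explicit second-order Taylor expansion of each same-group block around $\theta = 0$, showing the quadratic-in-$\bg$ coefficient matrix is PSD (this is essentially what makes the $n=k$ case strongly convex and should persist here), and checking that the cubic-and-higher remainder in the Taylor expansion has a definite sign or is again absorbable. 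If the sign cannot be forced for all configurations, a fallback is to prove the weaker but still useful statement that $R\geq -c\,\epsilon\cdot\frac14(\norm{\bg}^2+(1-\tfrac2\pi)\sum_i\norm{\bg_i}^2)$ for a small constant $c<1$, which still yields OPSC-in-most-directions with a slightly degraded coefficient and suffices for the downstream optimization guarantee; but proving the conjecture as stated requires the clean nonnegativity, and that is the crux.

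Finally, once $R\geq 0$ is established pointwise on the neighborhood, the conjecture follows immediately by adding back the leading term, with no normalization needed since \eqref{eq:curv} is stated for the un-normalized direction vector. I would also double-check the boundary/degenerate cases — when some $\bg_i=0$ but the individual $\bg_{i,j}\neq0$, and when two student neurons from different groups happen to become collinear inside the neighborhood (possible only if $\epsilon$ is not small relative to the inter-group angle $\pi/2$, hence excluded by taking $\epsilon$ small) — to make sure the Hessian formula and the Lipschitz estimates remain valid throughout.
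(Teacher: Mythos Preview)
The statement you are attempting to prove is a \emph{conjecture} in the paper, not a theorem; the paper does not provide a proof. What the paper offers instead is empirical evidence (\appref{app:curv_experiment}): the authors sample points near global minima, both randomly (Gaussian $\bg_{i,j}$) and adversarially (choosing $\bg_{i,1}=-\sum_{j\ge 2}\bg_{i,j}$ so that the right-hand side vanishes), and verify numerically that the inequality holds in all cases tested. They explicitly state that determining the sign of the $O(\epsilon^{2.5})$ term ``will need further analysis, which we leave for future work.''

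Your proposal is therefore not comparable to a paper proof --- there is none --- and what you have written is a plan rather than a proof. The crux you identify, namely showing that the intra-group remainder $R_{\text{same}}$ is nonnegative near angle $0$, is exactly the obstruction the authors could not overcome analytically. Your suggestion to Taylor-expand the same-group Hessian blocks around $\theta=0$ and show the resulting quadratic form is PSD is reasonable in spirit, but you have not carried it out, and the delicate one-sided behavior of the kernel functions at $\theta=0$ (already flagged in \lemref{lem:hessian is differentiable}) means this step is genuinely nontrivial; the paper's own proof of \thmref{thm:almost OPSC} loses control of the sign precisely here, producing only an $O(\epsilon^{2.5})$ magnitude bound. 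Your fallback --- proving $R\ge -c\,\epsilon$ times the leading term --- would not establish the conjecture as stated and is essentially equivalent to what \thmref{thm:almost OPSC} already gives. In short: your outline is sensible but incomplete at the decisive step, and the paper itself does not claim to have resolved it.
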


We conduct thorough experiments to verify this conjecture empirically. They can be seen in Appendix~\ref{app:curv_experiment}.

We would like to show that under Conjecture \ref{conj:epsilon}, initializing close enough to the global minimum would ensure convergence using gradient methods. Using standard gradient descent will not be enough here, since there are points for which the OPSC parameter is zero (even under the above assumption). To ensure convergence we need to add random noise to the optimization process which can help to escape those "bad" points.

We use a simple form of perturbed gradient descent, for the exact algorithm, see Appendix \ref{appen: perturbed GD}. In simple words, the algorithm receives an initialized weights $\bw_1^n(0)$, a learning rate $\eta$ and noise level $\alpha$. At each iteration the algorithm updates the weights w.r.t the loss function $F$ similarly to gradient descent, and adds a perturbation in a random direction with magnitude $\alpha$. The perturbation is in the same direction for all the learned vectors $\bw_1,\dots,\bw_n$.

We show convergence for a general function that have the property from \thmref{thm:almost OPSC}, under an assumption similar to Conjecture \ref{conj:epsilon}. Even under this assumption, the OPSC parameter may be zero (or arbitrarily small) at some points. Nevertheless, using perturbed gradient descent we can show the following:

\begin{theorem}\label{thm: OPSC in most directions optimization}
Let $F:\reals^{d\cdot n}\rightarrow\reals$ and assume that it achieves a global minimum at $\tilde{\bw}_1^n = (\tilde{\bw}_1,\dots,\tilde{\bw}_n)\in\reals^{d\cdot n}$. Assume that there is an $\epsilon\in (0,1] $ and $\lambda >0$ such that in an $\epsilon$-neighborhood of $\tilde{\bw}_1^n$ the function $F$ is twice differentiable, has an $L$-Lipschitz gradient, and we have that 
\begin{align}\label{eq:optimization assumption}
            (\bw_1^n - \tilde{\bw}_1^n)^\top H(\bw_1^n) (\bw_1^n - \tilde{\bw}_1^n) \geq \lambda\norm{\bg}^2
\end{align}
where $ H(\bw_1^n)$ is the Hessian of $F$ at $\bw_1^n$, $\bg_i = \bw_i - \tilde{\bw}_i$ and $\bg = \sum_{i=1}^n \bg_i$. Let $\delta > 0$. Then, initializing $\bw_1^n(0)$ in an $\epsilon$-neighborhood of $\tilde{\bw}_1^n$ and using perturbed gradient descent (\algref{alg:pgd}) with learning rate $\eta < \frac{\lambda\delta^2}{64L^2}$ and noise $\alpha = \frac{\delta}{4n}$, after $T> \frac{\log\left(\delta\right)}{\log\left(1 - \frac{\eta\lambda\delta^2}{64}\right)}$ iterations w.p $> 1-Te^{-\Omega(d)}$ (over the random perturbations) we have that $\norm{\bw_1^n(T) - \tilde{\bw}_1^n}^2 \leq \delta$.  
\end{theorem}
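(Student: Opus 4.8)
The plan is to convert the Hessian hypothesis \eqref{eq:optimization assumption} into a one-point-strong-convexity-type gradient inequality, and then run a distance-decrease argument for perturbed gradient descent in which the perturbation is what makes the (direction-restricted) curvature bound bite. First, since $\tilde{\bw}_1^n$ is a global minimum and $F$ is differentiable there, $\nabla F(\tilde{\bw}_1^n)=\pmb{0}$. The $\epsilon$-neighborhood is convex, so for any $\bw_1^n$ in it the segment $\bw'(t)=\tilde{\bw}_1^n+t(\bw_1^n-\tilde{\bw}_1^n)$, $t\in[0,1]$, stays in it, and at $\bw'(t)$ the associated sum of differences is $t\bg$; hence \eqref{eq:optimization assumption} gives $(\bw_1^n-\tilde{\bw}_1^n)^\top H(\bw'(t))(\bw_1^n-\tilde{\bw}_1^n)\ge\lambda\norm{\bg}^2$ for every $t$. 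Integrating over $t\in[0,1]$ (i.e.\ applying the fundamental theorem of calculus to $t\mapsto F(\bw'(t))$ and using $\nabla F(\tilde{\bw}_1^n)=\pmb 0$) yields $\inner{\nabla F(\bw_1^n),\bw_1^n-\tilde{\bw}_1^n}\ge\lambda\norm{\bg}^2$ throughout the neighborhood, with $\bg=\sum_{i=1}^n(\bw_i-\tilde{\bw}_i)$. Note this is informative only where $\bg$ is bounded away from zero.

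Next I would set up the recursion for $D_t:=\norm{\bw_1^n(t)-\tilde{\bw}_1^n}^2$. Writing one perturbed step as $\bw_i(t+1)=\bw_i(t)-\eta\nabla_i F(\bw_1^n(t))+\alpha\bu_t$ with $\bu_t$ a uniform unit vector in $\reals^d$ drawn independently of the past, expanding $\norm{\cdot}^2$, and using the gradient inequality above together with $\norm{\nabla F(\bw_1^n(t))}\le L\sqrt{D_t}$ (from the $L$-Lipschitz gradient and $\nabla F(\tilde{\bw}_1^n)=\pmb 0$), I obtain
\[
D_{t+1}\le(1+\eta^2L^2)D_t-2\eta\lambda\norm{\bg(t)}^2+2\alpha\inner{\bu_t,\bz(t)}+n\alpha^2,
\]
where $\bz(t)=\sum_i\big(\bw_i(t)-\tilde{\bw}_i-\eta\nabla_iF(\bw_1^n(t))\big)$ is measurable with respect to the history before $\bu_t$ is drawn.

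The key step is the role of the perturbation. Observe $\bg(t+1)=\bz(t)+n\alpha\bu_t$. For a uniform unit $\bu_t\in\reals^d$ and any fixed $\bv$, concentration of measure gives $|\inner{\bu_t,\bv}|\le\tfrac12\norm{\bv}$ with probability $1-e^{-\Omega(d)}$; on this good event, $\norm{\bg(t+1)}^2=\norm{\bz(t)}^2+2n\alpha\inner{\bz(t),\bu_t}+n^2\alpha^2\ge\big(\norm{\bz(t)}-\tfrac{n\alpha}{2}\big)^2+\tfrac34 n^2\alpha^2\ge\tfrac34 n^2\alpha^2$, regardless of $\bz(t)$; with $\alpha=\tfrac{\delta}{4n}$ this forces $\norm{\bg(t)}^2\ge\tfrac{3}{64}\delta^2$ for every $t\ge1$, so the curvature term gives a genuine $\Omega(\eta\lambda\delta^2)$ decrease at each step, even in the ``bad'' directions. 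On the same good event the harmful cross term obeys $|2\alpha\inner{\bu_t,\bz(t)}|\le\alpha\norm{\bz(t)}\le\alpha\norm{\bg(t)}+\alpha\eta\sqrt n L\sqrt{D_t}$; the $\alpha\norm{\bg(t)}$ part is absorbed into $2\eta\lambda\norm{\bg(t)}^2$ by Young's inequality, and the leftover together with $\eta^2L^2D_t$ and $n\alpha^2$ is dominated by a constant fraction of $2\eta\lambda\norm{\bg(t)}^2$ once $\eta<\tfrac{\lambda\delta^2}{64L^2}$ (using $\epsilon\le1$ and the smallness of $\alpha$). Hence at every step with $D_t>\delta$ one gets $D_{t+1}\le\big(1-\tfrac{\eta\lambda\delta^2}{64}\big)D_t$, while $D_{t+1}\le D_t+O(\alpha^2)$ always, so from $D_0\le\epsilon^2$ the iterates never leave the $\epsilon$-neighborhood and the contraction is valid throughout. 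A union bound over the $T$ steps makes all good events hold with probability $\ge1-Te^{-\Omega(d)}$, giving $D_T\le\big(1-\tfrac{\eta\lambda\delta^2}{64}\big)^TD_0\le\delta$ once $T>\log(\delta)/\log\!\big(1-\tfrac{\eta\lambda\delta^2}{64}\big)$.

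I expect the main obstacle to be exactly this perturbation analysis. There is a real tension: \eqref{eq:optimization assumption} says nothing in the directions where $\bg=\pmb 0$, so we need the perturbation to keep $\norm{\bg(t)}$ bounded below; yet the same perturbation correlates with $\bw_1^n(t)-\tilde{\bw}_1^n$ and therefore tends to increase $D_t$, and because the theorem insists on $e^{-\Omega(d)}$ failure probability, concentration of a random unit vector only buys a constant-factor (not $1/\sqrt d$) bound on that correlation — so the perturbation's harmful contribution is of the same order as the $\alpha\norm{\bg(t)}$ term it must overcome. Closing the bookkeeping requires the precise relation between $\alpha,\eta,\lambda,\delta$ above, checking that the perturbed iterate stays in the region where the hypotheses hold, and carrying the $e^{-\Omega(d)}$ good events through the union bound over all $T$ iterations.
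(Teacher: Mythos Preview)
Your plan and the paper's proof share the same skeleton: (i) integrate the Hessian hypothesis along the segment from $\tilde{\bw}_1^n$ to obtain $\inner{\nabla F(\bw_1^n),\bw_1^n-\tilde{\bw}_1^n}\ge\lambda\norm{\bg}^2$; (ii) use the common perturbation plus concentration (split into two cases according to whether $\norm{\sum_i(\bw_i-\tilde{\bw}_i)}$ already exceeds $\delta$) to force $\norm{\bg}^2\gtrsim\delta^2$ with probability $1-e^{-\Omega(d)}$; (iii) plug this into the standard one-step distance recursion with the $L$-Lipschitz gradient; (iv) union-bound over $T$ iterations. Your step (i) is actually spelled out more carefully than in the paper, which simply asserts the passage from \eqref{eq:optimization assumption} to the gradient bound.

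The one structural discrepancy is the order of operations, and it is exactly what creates the difficulty you flag at the end. Algorithm~\ref{alg:pgd} perturbs \emph{first} and then takes the gradient at the perturbed point: $\hat{\bw}_1^n(t)=\bw_1^n(t)+\alpha\xi$, then $\bw_1^n(t+1)=\hat{\bw}_1^n(t)-\eta\nabla F(\hat{\bw}_1^n(t))$. You reverse this (gradient at $\bw_1^n(t)$, then perturb). The paper's order is what keeps the recursion clean: the two-case concentration argument bounds $\norm{\bg}^2$ \emph{at} $\hat{\bw}_1^n(t)$, the OPSC-type inequality is applied there, and one gets
\[
\norm{\bw_1^n(t+1)-\tilde{\bw}_1^n}^2\;\le\;\Bigl(1-\tfrac{\eta\lambda\delta^2}{64}\Bigr)\,\norm{\hat{\bw}_1^n(t)-\tilde{\bw}_1^n}^2
\]
with no perturbation--gradient cross term inside the contraction step. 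In your ordering the cross term $2\alpha\inner{\bu_t,\bz(t)}$ sits in the same expansion and you are forced to absorb $\alpha\norm{\bg(t)}$ into $2\eta\lambda\norm{\bg(t)}^2$ via Young's inequality, which---as your own final paragraph correctly anticipates---does not close with only the stated relations among $\eta,\alpha,\lambda,\delta$. If you follow the algorithm's order exactly, that ``tension'' disappears from the contraction step; all that remains is relating $\norm{\hat{\bw}_1^n(t)-\tilde{\bw}_1^n}^2$ back to $\norm{\bw_1^n(t)-\tilde{\bw}_1^n}^2$, which the paper folds into the same concentration events already used for the $\norm{\bg}$ bound and carries through the induction.
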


Note that the OPSC condition in this theorem is almost the same as in \thmref{thm:almost OPSC} for the case of having the property in a standard $\epsilon$-neighborhood of the minimum. In this case, the $\norm{\bg_i}^2$ terms can be absorbed in the $\norm{\bg}^2$ terms (by increasing the constant $\lambda)$.

The full proof can be found in \appref{appen: proof of optimization thm}. The idea is to split the analysis into two cases: 
(1) $\norm{\bg}^2$ is not too small, hence a single gradient step will get $\bw_1^n$ closer to $\tilde{\bw}_1^n$; (2) $\norm{\bg}^2$ is very small, but the perturbation from the algorithm will help escape from those bad points.

\thmref{thm: OPSC in most directions optimization} shows that even when the function is non-convex, if it has the OPSC in most directions property, gradient descent with small perturbations converges to a global minimum.  


\vspace{-10pt}

\section{Effects of Over-parameterization on Non-global Minima}\label{sec:non global minima}
\vspace{-5pt}

Having considered the effects of over-parameterization on the global minima of \eqref{eq:objective normal dist}, in this section we turn to study the effects of over-parameterization on the non-global minima. In what follows, we define $H_{i,i}(\bw_1^n)' \coloneqq H_{i,i}(\bw_1^n) - \frac{1}{2}I$
, the component of the $i$-th diagonal block of the Hessian at $\bw_1^n$, without the $\frac{1}{2}I$ term (see \eqref{eq:hess H_ii}). When the point $\bw_1^n$ is clear from context, we let $H_{i,i}'$ be shorthand for $H_{i,i}(\bw_1^n)'$. Given a point $\bw_1^n\in\reals^{nd}$, we let $\bw_1^{n}(\alpha,i) = (\bw_1,\ldots,\bw_{i-1},\alpha\bw_i,(1-\alpha)\bw_i,\bw_{i+1},\ldots,\bw_n)\in\reals^{(n+1)d}$ denote the point obtained from 
splitting the $i$-th neuron $\bw_1^n$ into two neurons, one with a factor of $\alpha$ and the other with a factor of $1-\alpha$. All proofs of theorems appearing in this section can be found in \appref{sec:saddle_proofs}.

\subsection{Over-parameterization Turns Non-global Minima into Saddle Points}

As was empirically shown in \citep{safran2017spurious}, very mild over-parameterization (adding one or two neurons) suffices for significantly improving the probability of gradient descent to recover global minima of \eqref{eq:objective normal dist}. Thus, it is interesting to understand how such minimal over-parameterization changes the optimization landscape, in a way that helps local search methods avoid non-global minima. One major obstacle for pursuing this direction is that only certain non-global minima of \eqref{eq:objective normal dist} are known to have an explicit characterization \citep{arjevani2020analytic}. However, if we are already given a local minimum $\bw_1^n$, a simple way to generate additional critical points is to split the $i$-th neuron to obtain a point $\bw_1^n(\alpha,i)$, for any $\alpha\in(0,1)$ (see \lemref{lem:over_hess} for a formal statement). Our main result in this section is to demonstrate that if $n\le k$ and $\sum_{i=1}^n\norm{\bw_i}\le k$, then there exists a neuron that when split, the critical point obtained is a saddle point:

\begin{theorem}\label{thm:norm_k}
    Suppose $n\le k$, $\bw_1^n = (\bw_1,\ldots,\bw_n)$ is a non-global minimum of the objective in \eqref{eq:objective normal dist} such that $\sum_{i=1}^n\norm{\bw_i}\le k$. Then $F$ is twice continuously differentiable at $\bw_1^n$ and there exists a neuron $\bw_i$ such that $\bw_1^n(\alpha,i)$ is a saddle point for all $\alpha\in(0,1)$. Moreover, for $\alpha\in\{0,1\}$ we have that $\bw_1^n(\alpha,i)$ is not a local minimum of $F$.
\end{theorem}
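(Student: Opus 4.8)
# Proof Proposal for Theorem~\ref{thm:norm_k}

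\textbf{Overall strategy.} The plan is to exhibit, for some index $i$, a direction of descent at the split point $\bw_1^n(\alpha,i)$, thereby showing it is not a local minimum; combined with the fact that splitting produces a critical point (this is \lemref{lem:over_hess}, which we may assume), this yields a saddle point. The natural place to look for negative curvature is the $2\times 2$ block of the Hessian corresponding to the two ``children'' of the split neuron $\bw_i$, since these two neurons point in the same direction and the explicit Hessian formula (\thmref{thm:hessian of objective}, specifically the expression behind \eqref{eq:hess H_ii}) degenerates in a controlled way when $\theta_{\bw_i,\bw_j}\in\{0,\pi\}$. First I would use \lemref{lem:hessian is differentiable} to confirm twice-differentiability at $\bw_1^n(\alpha,i)$: we need $\bw_i\neq 0$ for all $i$, which holds because $\bw_1^n$ is a non-global minimum, hence in particular no $\bw_i$ vanishes (a zero neuron could be removed, reducing to the $n-1\le k$ case, and one checks separately this can't be a local min — or this is handled in the cited appendix). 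Differentiability at the split point then follows since $\alpha\bw_i,(1-\alpha)\bw_i\neq 0$ for $\alpha\in(0,1)$.

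\textbf{Key steps.} (1) Write down the Hessian block at $\bw_1^n(\alpha,i)$ restricted to perturbing the two child neurons $\alpha\bw_i$ and $(1-\alpha)\bw_i$ in a common direction $\bu$ orthogonal to $\bw_i$ (or possibly along a cleverly chosen $\bu$). Using the closed-form Hessian, the cross terms between the two children and the diagonal $H'_{i,i}$-type terms combine; the key algebraic fact should be that the quadratic form in this block reduces to something proportional to $\bu^\top H'_{i,i}(\bw_1^n)\,\bu$ evaluated at the \emph{original} point, i.e.\ splitting does not change the ``interaction'' curvature but does remove the $\tfrac12 I$ self-term in a way that exposes $H'_{i,i}$. (2) Show that $\sum_i \bu^\top H'_{i,i}\bu$, suitably summed or for a well-chosen $i$ and $\bu$, must be negative — or more precisely that it \emph{cannot} be nonnegative for every $i$ — using the hypotheses $n\le k$ and $\sum_i\norm{\bw_i}\le k$ together with the fact that $\bw_1^n$ is a non-global minimum. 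This is where the norm condition enters: at a non-global minimum the gradient vanishes, which gives identities relating $\sum_i\norm{\bw_i}$, the angles $\theta_{\bw_i,\bv_j}$, and the $\bw_i$ themselves; plugging these into the trace (or into a specific quadratic form) of $\sum_i H'_{i,i}$ should force negativity when $\sum_i\norm{\bw_i}\le k$ and the point is not global. (3) Pick the index $i$ achieving the negative value and the corresponding $\bu$, perturb only the two children of $\bw_i$ along $\pm\bu$ (the signs chosen so the first-order term vanishes by criticality and the second-order term is the negative quantity from step 2), and conclude $\bw_1^n(\alpha,i)$ is not a local minimum, hence a saddle. (4) For the endpoint cases $\alpha\in\{0,1\}$: here $\bw_1^n(\alpha,i)$ has a zero neuron, so it is on the boundary of differentiability; I would argue directly that moving the zero neuron infinitesimally in the direction $\bu$ found above (or keeping $\bw_1^n$ unchanged and just noting it equals the original non-global min with a spurious zero neuron appended) strictly decreases $F$, using a first-order expansion in the newly-freed neuron — activating a zero neuron along a direction that correlates with the residual error lowers the loss.

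\textbf{Main obstacle.} The crux is step~(2): showing the hypotheses genuinely force some diagonal block $H'_{i,i}$ to have a negative-curvature direction. This requires extracting enough structure from the first-order optimality conditions at a non-global minimum. I expect one must compute $\mathrm{tr}(H'_{i,i})$ or a specific quadratic form $\bv_j^\top H'_{i,i}\bv_j$ or $\bw_i^\top H'_{i,i}\bw_i$ from the closed-form Hessian, sum over $i$, and recognize the resulting scalar — which will involve terms like $\sum_i\norm{\bw_i}$, $\sum_{i,j}\norm{\bw_i}\cos\theta_{\bw_i,\bv_j}$, and $\binom{n}{2}$-type angle sums — and then use the stationarity equations (the gradient-zero conditions from \thmref{thm:hessian of objective}'s companion gradient formula) to rewrite it so that the bound $\sum_i\norm{\bw_i}\le k$ makes it strictly negative unless we are at a global minimum (where $\sum\norm{\bw_i}=k$ and everything aligns). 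Pinning down exactly which test direction $\bu$ and which scalar invariant makes this clean, and verifying the non-global assumption is exactly what rules out the boundary case where the inequality becomes an equality, is the delicate part; everything else is bookkeeping with the explicit Hessian.
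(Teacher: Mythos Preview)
Your outline is essentially the paper's proof: test the direction $(\bu,-\bu)$ on the two children of the split (so that the $\tfrac12 I$ self-terms and the $\tfrac12 I$ cross-terms cancel), reducing the quadratic form to $\bigl(\tfrac{1}{\alpha}+\tfrac{1}{1-\alpha}\bigr)\,\bu^\top H'_{i,i}\bu$, and then show this is negative for some $i$ and some $\bu$. Your guess to look at $\mathrm{tr}(H'_{i,i})$ is exactly what the paper does (phrased as $\E_{\hat{\bu}}[\hat{\bu}^\top H'_{i,i}\hat{\bu}]$ over the uniform sphere), which evaluates to
\[
\lambda_i=\frac{1}{2\pi\|\bw_i\|}\Bigl(\sum_{j\neq i}\sin(\theta_{\bw_i,\bw_j})\|\bw_j\|-\sum_{j=1}^k\sin(\theta_{\bw_i,\bv_j})\Bigr).
\]

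Where your plan diverges from the paper is in how to force $\lambda_i<0$. You anticipate needing the gradient-zero identities at the minimum, but the paper does \emph{not} use stationarity at all for this step. Instead it argues directly from the two hypotheses: bound the positive sum by $\sum_{j\neq i}\|\bw_j\|$ (via $\sin\le1$), then use $\sum_m\|\bw_m\|\le k$ together with $n\le k$ and a pigeonhole choice of $i$ (take the neuron of largest norm, so $\|\bw_i\|\ge\tfrac{1}{n}\sum_m\|\bw_m\|$) to get the positive sum $\le k-1$; separately bound the negative sum below via $\sin(\theta_{\bw_i,\bv_j})=\sqrt{1-w_{i,j}^2/\|\bw_i\|^2}\ge 1-w_{i,j}^2/\|\bw_i\|^2$ and sum to get $\ge k-1$. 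The non-global assumption is invoked only to break the equality case (which forces $n=k$ and all $\|\bw_j\|=1$, and then some $\bw_i$ must have two nonzero coordinates, making $\sqrt{x}>x$ strict). So step~(2) is considerably simpler than you feared; the endpoint cases $\alpha\in\{0,1\}$ are also handled by a limiting argument from the interior rather than by activating the zero neuron directly.
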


Although we do not have a proof that the assumption $\sum_{i=1}^n\norm{\bw_i}\le k$ holds for all minima of the objective, in \subsecref{subsec:experiment} we demonstrate empirically that this appears to be the case, at least for the minima found by gradient descent.
Moreover, this assumption provably holds for the global minima (see \thmref{thm:global_minima_form}). Finally, we can prove the following weaker bound for any minimum:
\begin{proposition}\label{prop:norm sum nk}
     Suppose $n\geq 1$, $\bw_1^n = (\bw_1,\ldots,\bw_n)$ is a local minimum of the objective in \eqref{eq:objective normal dist}. Then $\sum_{i=1}^n\norm{\bw_i}\le kn$.
\end{proposition}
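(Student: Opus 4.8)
The plan is to extract from the first-order optimality condition an $L^2$ identity relating the student's output to the teacher's, and then convert it into a norm bound via Cauchy--Schwarz and the elementary second-moment formula $\E\bigl[[\langle\bu,\bx\rangle]_+^2\bigr]=\tfrac12\norm{\bu}^2$. First I would reduce to the non-degenerate case. If $J=\{i\in[n]:\bw_i\neq\mathbf{0}\}$ has size $m$, then since $[\langle\mathbf{0},\bx\rangle]_+=0$ and $F$ is invariant under permutations of its neurons, the sub-tuple $(\bw_i)_{i\in J}$ is a local minimum of the objective with $m\le n$ student neurons (and the same teacher), at which $F$ is twice continuously differentiable by \lemref{lem:hessian is differentiable}. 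Since $\sum_{i=1}^n\norm{\bw_i}=\sum_{i\in J}\norm{\bw_i}$ and $km\le kn$, it suffices to prove the bound under the extra assumption that every $\bw_i\neq\mathbf{0}$, which I assume henceforth.

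Now $F$ is differentiable at $\bw_1^n$, so $\nabla F(\bw_1^n)=\mathbf{0}$, and in particular $\langle\nabla_{\bw_i}F(\bw_1^n),\bw_i\rangle=0$ for each $i\in[n]$. Write $g(\bx)=\sum_{j=1}^n[\langle\bw_j,\bx\rangle]_+-\sum_{l=1}^k[\langle\bv_l,\bx\rangle]_+$, so that $F=\E[g(\bx)^2]$; the standard closed-form gradient is $\nabla_{\bw_i}F=2\,\E\bigl[g(\bx)\,\mathbf{1}\{\langle\bw_i,\bx\rangle>0\}\,\bx\bigr]$. The key point is that $\bu\mapsto[\langle\bu,\bx\rangle]_+$ is positively $1$-homogeneous, so Euler's identity gives $\langle\mathbf{1}\{\langle\bw_i,\bx\rangle>0\}\,\bx,\bw_i\rangle=[\langle\bw_i,\bx\rangle]_+$, and hence $\langle\nabla_{\bw_i}F,\bw_i\rangle=2\,\E\bigl[g(\bx)\,[\langle\bw_i,\bx\rangle]_+\bigr]$. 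Summing over $i$ and setting $S=\sum_{i=1}^n[\langle\bw_i,\bx\rangle]_+$ and $T=\sum_{l=1}^k[\langle\bv_l,\bx\rangle]_+$, the $n$ stationarity equations collapse to $\E[g(\bx)S]=0$, i.e.\ $\E[S^2]=\E[ST]$.

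From here the bound is immediate. Cauchy--Schwarz gives $\E[S^2]=\E[ST]\le\sqrt{\E[S^2]}\sqrt{\E[T^2]}$, so $\E[S^2]\le\E[T^2]$. Since $\langle\bv_l,\bx\rangle$ is a standard Gaussian (as $\norm{\bv_l}=1$), $\E\bigl[[\langle\bv_l,\bx\rangle]_+^2\bigr]=\tfrac12$, whence $\E[T^2]\le k\sum_{l=1}^k\E\bigl[[\langle\bv_l,\bx\rangle]_+^2\bigr]=\tfrac{k^2}{2}$; and since $S$ is a sum of nonnegative terms, $\E[S^2]\ge\sum_{i=1}^n\E\bigl[[\langle\bw_i,\bx\rangle]_+^2\bigr]=\tfrac12\sum_{i=1}^n\norm{\bw_i}^2$. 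Chaining these inequalities yields $\sum_{i=1}^n\norm{\bw_i}^2\le k^2$, and one final Cauchy--Schwarz gives $\sum_{i=1}^n\norm{\bw_i}\le\sqrt{n}\,\bigl(\sum_{i=1}^n\norm{\bw_i}^2\bigr)^{1/2}\le k\sqrt{n}\le kn$, using $n\ge1$; this in fact proves the slightly stronger bound $k\sqrt n$.

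I do not anticipate a genuine obstacle. Once the degenerate case is cleared, the argument rests only on first-order optimality, the $1$-homogeneity of the ReLU, and two applications of Cauchy--Schwarz, with the Gaussian assumption entering only through the moment identity $\E\bigl[[\langle\bu,\bx\rangle]_+^2\bigr]=\tfrac12\norm{\bu}^2$ that is already used throughout the paper. The only place requiring a bit of care is precisely the possibility of a vanishing neuron, where $F$ need not be differentiable and $\nabla F=\mathbf{0}$ cannot be invoked directly; the reduction in the first paragraph is designed to handle exactly that.
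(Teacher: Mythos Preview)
Your proof is correct and in fact yields the sharper bound $\sum_{i=1}^n\norm{\bw_i}\le k\sqrt{n}$. Both your argument and the paper's rest on the same first-order condition---stationarity along the ray $t\mapsto(t\bw_1,\ldots,t\bw_n)$---but exploit it differently. The paper writes $F$ along this ray as an explicit quadratic in $t$ using the closed-form kernel $f(\bw,\bv)=\frac{1}{2\pi}\norm{\bw}\norm{\bv}\bigl(\sin\theta+(\pi-\theta)\cos\theta\bigr)$, notes that a local minimum forces $t^*=1$, and then bounds the resulting ratio via $0\le\sin\theta+(\pi-\theta)\cos\theta\le\pi$ together with one final Cauchy--Schwarz, arriving at $kn$. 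You instead stay at the level of random variables: summing the Euler identities collapses stationarity to $\E[S^2]=\E[ST]$, and Cauchy--Schwarz on $S,T$ plus the elementary bounds $\E[S^2]\ge\tfrac12\sum_i\norm{\bw_i}^2$ and $\E[T^2]\le\tfrac{k^2}{2}$ give $\sum_i\norm{\bw_i}^2\le k^2$. Your route is shorter, avoids the closed-form kernel entirely, and loses less---hence the improvement from $kn$ to $k\sqrt n$. It also generalizes directly to any symmetric input distribution with finite second moment and any positively $1$-homogeneous activation.

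One tiny gap to close: your reduction to the non-degenerate case silently assumes $J\neq\emptyset$. If $J=\emptyset$ then $\bw_1^n$ is the origin, which by \lemref{lem:origin not min or max} is not a local minimum, contradicting the hypothesis; so $m\ge1$ and the reduction goes through. Just say so.
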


We also remark that the theorem applies to the critical points obtained when splitting local minima where $n\le k$, and it is possible that there are new local minima formed when $n>k$ that did not exist when $n\le k$, which our analysis does not touch upon. However, current empirical evidence (see \citep{safran2017spurious}) suggests that these minima are less common and pose a much less significant obstacle to optimization.

Combining \thmref{thm:norm_k} with \thmref{thm:global_minima_form}, we see that global minima can be split arbitrarily and remain global minima, whereas non-global minima can only be split in restricted ways before turning into saddle points. This provides an indication for why over-parameterization makes the landscape more favorable to optimization, and possibly explains why recovering the global minimum becomes easier when over-parameterizing.

The key in proving \thmref{thm:norm_k} is the observation that when we split the $i$-th neuron in $\bw_1^n$, we obtain a critical point of $F$, and the Hessian of this new point cannot be PSD if the $H_{i,i}'$ is not PSD.
Indeed, the role of the norm sum bound assumption in \thmref{thm:norm_k} is to show that there must exist at least one neuron having a component $H_{i,i}'$ which is not PSD. However, if we make the stronger assumption that \emph{for several} $i$'s, $H_{i,i}'$ is not PSD (which based on the proof of \thmref{thm:norm_k}, we can expect to happen when for each such $i$, $\bw_i$ has roughly unit norm and $\min_{j\in[k]}\theta_{\bw_i,\bv_j}$ is not too small) then this implies a stronger result, that when we split any such neuron $i$ with non-PSD $H_{i,i}'$, this would necessarily turn the local minimum into a saddle point. More formally, we have the following theorem:
\begin{theorem}\label{thm:all_blocks_neg}
    Suppose $n\ge1$, $\bw_1^n = (\bw_1,\ldots,\bw_n)$ is a differentiable, non-global minimum of the objective in \eqref{eq:objective normal dist}. Then for all $i\in[n]$ such that $H_{i,i}^{\prime}$ is not PSD, $\bw_1^n(\alpha,i)$ is a saddle point for all $\alpha\in(0,1)$. Moreover, for $\alpha\in\{0,1\}$ and any such $i\in[n]$, $\bw_1^n(\alpha,i)$ is not a local minimum of $F$.
\end{theorem}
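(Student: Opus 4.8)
The statement should follow quickly from \lemref{lem:over_hess} plus a rank-one test vector, so the plan is: (i) reduce to the $2d\times2d$ Hessian block at the split point, (ii) exhibit a negative-curvature direction inside it, (iii) check non-maximality, and (iv) deal with the endpoints $\alpha\in\{0,1\}$ by a limiting argument. Fix $i\in[n]$ with $H_{i,i}':=H_{i,i}(\bw_1^n)'$ not PSD and pick $\bu\in\reals^d$ with $\bu^\top H_{i,i}'\bu<0$. First take $\alpha\in(0,1)$. Since $\bw_1^n$ is a differentiable (hence all-nonzero) non-global minimum, $\bw_1^n(\alpha,i)$ also has all $n+1$ neurons nonzero, so by \lemref{lem:hessian is differentiable} $F$ is twice continuously differentiable there, and by \lemref{lem:over_hess} it is a critical point of $F$ whose Hessian, restricted to the $2d$ coordinates of the two copies of $\bw_i$ (with factors $\alpha$ and $1-\alpha$), is the block
\[
M ~=~ \begin{pmatrix} \tfrac12 I + \tfrac1\alpha H_{i,i}' & \tfrac12 I \\[3pt] \tfrac12 I & \tfrac12 I + \tfrac1{1-\alpha} H_{i,i}' \end{pmatrix}.
\]
The two facts behind this (the content of \lemref{lem:over_hess}) are that splitting a neuron along its own direction leaves the network output, hence the residual, unchanged, and that the ``interaction'' component $H_{i,i}'$ of a diagonal block scales like $1/\norm{\bw_i}$ under $\bw_i\mapsto\alpha\bw_i$, since the defining integral lives on the hyperplane $\{\bx:\inner{\bw_i,\bx}=0\}$ on which the parallel copy contributes nothing; the off-diagonal block carries no such term and equals the constant part of the diagonal.

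Now test $M$ against $\bz=(\bu,-\bu)\in\reals^{2d}$: the off-diagonal $\tfrac12 I$ blocks contribute $2\bigl(\bu^\top(\tfrac12 I)(-\bu)\bigr)=-\norm\bu^2$, the diagonal $\tfrac12 I$ pieces contribute $+\norm\bu^2$, and what survives is
\[
\bz^\top M\bz ~=~ \Bigl(\tfrac1\alpha+\tfrac1{1-\alpha}\Bigr)\bu^\top H_{i,i}'\bu ~=~ \frac{1}{\alpha(1-\alpha)}\,\bu^\top H_{i,i}'\bu ~<~ 0 .
\]
Padding $\bz$ with zeros in the remaining coordinates gives a direction of strictly negative curvature for $F$ at $\bw_1^n(\alpha,i)$, so it is not a local minimum. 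It is also not a local maximum: rescaling one split copy, $\alpha\bw_i\mapsto(1+t)\alpha\bw_i$, changes the residual by $t\alpha[\inner{\bw_i,\bx}]_+$, so $F$ changes by $t\alpha\,\E\bigl[R[\inner{\bw_i,\bx}]_+\bigr]+\tfrac{t^2\alpha^2}{4}\norm{\bw_i}^2$, and the linear coefficient equals $\alpha\inner{\bw_i,\,\E[R\,\mathbbm{1}[\inner{\bw_i,\bx}>0]\bx]}$, which vanishes because the point is critical; hence $F$ strictly increases along $t$. So $\bw_1^n(\alpha,i)$ is a saddle point with a direction of descent.

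For $\alpha\in\{0,1\}$ one split copy is $\mathbf 0$ and $F$ may fail to be differentiable, so I would argue by continuity rather than directly. Note $F\bigl(\bw_1^n(\alpha,i)\bigr)=F(\bw_1^n)$ for all $\alpha\in[0,1]$ (unchanged residual), and $\norm{\bw_1^n(\alpha,i)-\bw_1^n(0,i)}=\sqrt2\,\alpha\norm{\bw_i}\to0$ as $\alpha\to0^+$. Given $\epsilon>0$, pick $\alpha\in(0,1)$ with this distance below $\epsilon/2$; by the previous paragraph $\bw_1^n(\alpha,i)$ is not a local minimum, so some point within $\epsilon/2$ of it — hence within $\epsilon$ of $\bw_1^n(0,i)$ — has $F$-value strictly below $F(\bw_1^n(\alpha,i))=F(\bw_1^n(0,i))$. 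Thus $\bw_1^n(0,i)$ is not a local minimum, and symmetrically for $\alpha=1$.

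I expect the only genuinely delicate step to be extracting the precise block form of $M$ from \lemref{lem:over_hess} — in particular the residual-invariance of the split and the $1/\norm{\bw_i}$ scaling of the diagonal-block interaction term; note, though, that the cancellation in $\bz^\top M\bz$ is insensitive to the exact constant multiplying $I$, as long as the off-diagonal block matches the constant part of the diagonals, so the argument is robust to normalization conventions. Everything after obtaining $M$ — the rank-one test vector, the non-maximality check, and the endpoint limiting argument — is routine.
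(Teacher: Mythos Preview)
Your proposal is correct and essentially identical to the paper's proof, which packages the same calculation as \lemref{lem:negative block} and then simply invokes it for each admissible $i$. The only minor variation is the non-maximality check: you scale a single split neuron and use criticality to kill the linear term, whereas the paper appeals to its \lemref{lem:no maxima} (scaling all neurons simultaneously makes $F$ a strongly convex quadratic in the scale); the $(\bu,-\bu)$ test vector and the endpoint limiting argument for $\alpha\in\{0,1\}$ match the paper exactly.
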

In particular, if $H_{i,i}'$ is not PSD for \emph{all} $i\in[n]$, then splitting $\bw_1^n$ would necessarily turn it into a saddle point, regardless of which neuron is being split. In the next subsection, we show empirically that this indeed appears to be the case in general.

\subsection{An Experiment}\label{subsec:experiment}

In this subsection,\footnote{The code can be found at \url{https://github.com/ItaySafran/Overparameterization}} we wish to substantiate empirically the assumption $\sum_{i=1}^{n}\norm{\bw_i}\leq k$ made in \thmref{thm:norm_k}, as well as the claim that $H_{i,i}'$ tends to be a non-PSD matrix. To that end, for each $n=k$ between $6$ and $100$, we ran $500$ instantiations of gradient descent on the objective in \eqref{eq:objective normal dist}, each using an independent and standard Xavier random initialization and a fixed step size of $5/k$,\footnote{Empirically, this step size resulted in satisfactory convergence rates for all values of $k$ we tested.} till the norm of the gradient was at most $10^{-12}$. \revision{Moreover, we ran 100 additional instantiations where we initialized at a point having a large norm-sum of roughly $2k^2$ (note that \propref{prop:norm sum nk} guarantees that there are no minima with norm-sum more than $k^2$)}. We identified points that were equivalent up to permutations of the neurons and their coordinates (up to Frobenius norm of at most $5\cdot10^{-9}$). For each group of equivalent points, we computed the spectrum of the Hessian to ensure that its minimal eigenvalue is positive (using floating point computations), which was always the case.

Once the local minima we converged to were processed, we first validated the norm sum assumption of $\sum_{i=1}^k\norm{\bw_i}\le k$ which we made in \thmref{thm:norm_k}. \emph{All} local minima found in our experiment indeed satisfy this bound. Moreover, histogram plots of a few selected values for $k$ are presented in \figref{fig:norm sum}, suggesting that the norm sum tends to be tightly concentrated at a value slightly below $k$.

\begin{figure}[t]
\centering
{\includegraphics[width=5.4in]{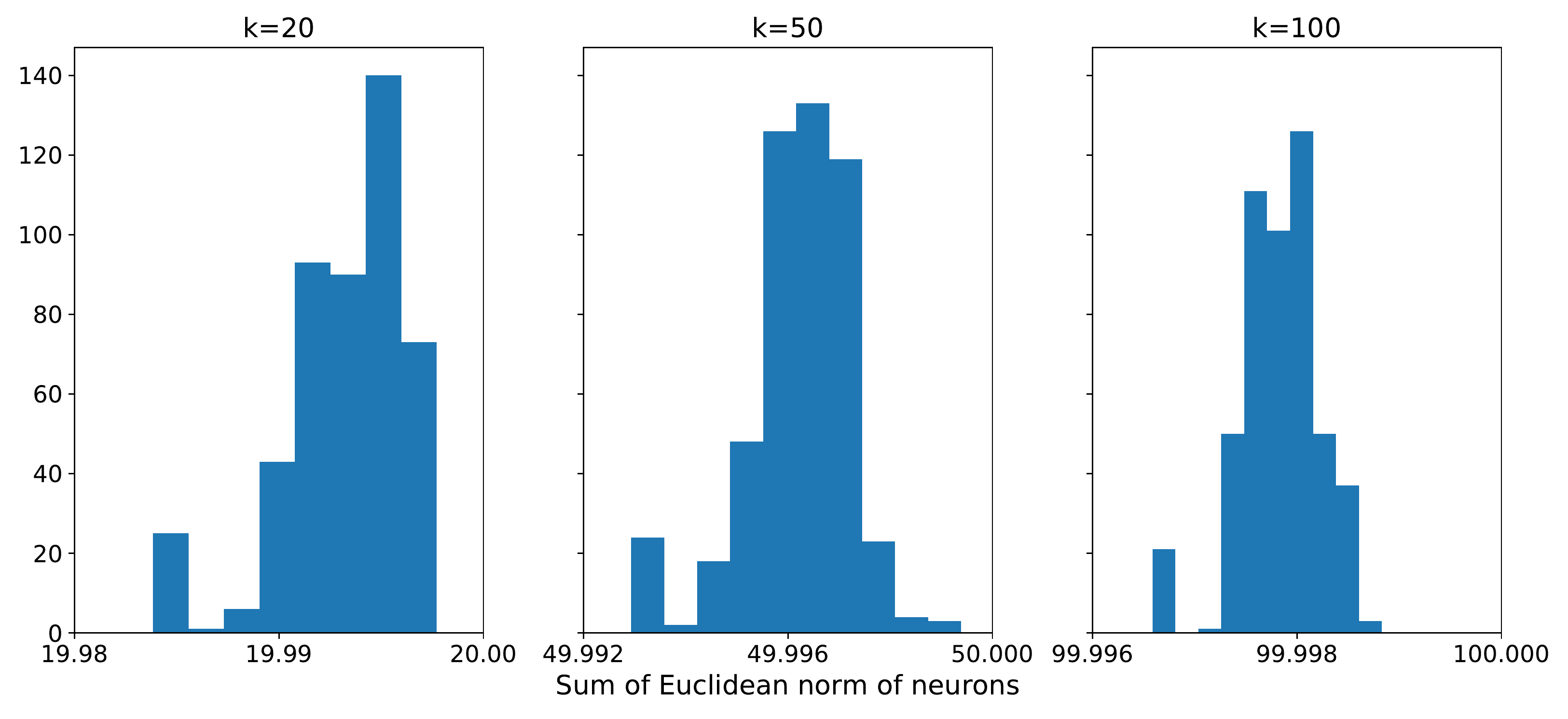}}
\caption{Histograms of the distributions of the sum of Euclidean norms of the neurons in the points converged to in the experiment, for $k=20,50,100$.} 
\label{fig:norm sum}
\end{figure}

Next, we computed the eigenvalues of the $H_{i,i}'$ in the Hessians of the local minima found, for all $i$. As it turns out, \emph{all} block components for \emph{all} minima found have a negative eigenvalue, which by virtue of \thmref{thm:all_blocks_neg} implies that for any minimum point $\bw_1^k$ found, $\bw_1^k(\alpha,i)$ is a saddle point for all $i\in[k]$ and any $\alpha\in(0,1)$ (and not a minimum for $\alpha\in\{0,1\}$).


\subsection*{Acknowledgements}
This research is supported in part by European Research Council (ERC) Grant 754705.

\setcitestyle{numbers}
\bibliographystyle{abbrvnat}
\bibliography{my_bib}

\begin{thebibliography}{35}
\providecommand{\natexlab}[1]{#1}
\providecommand{\url}[1]{\texttt{#1}}
\expandafter\ifx\csname urlstyle\endcsname\relax
  \providecommand{\doi}[1]{doi: #1}\else
  \providecommand{\doi}{doi: \begingroup \urlstyle{rm}\Url}\fi

\bibitem[Allen-Zhu and Li(2019{\natexlab{a}})]{allen2019bcan}
Z.~Allen-Zhu and Y.~Li.
\newblock What can resnet learn efficiently, going beyond kernels?
\newblock In \emph{Advances in Neural Information Processing Systems}, pages
  9017--9028, 2019{\natexlab{a}}.

\bibitem[Allen-Zhu and Li(2019{\natexlab{b}})]{allen2019can}
Z.~Allen-Zhu and Y.~Li.
\newblock Can {SGD} learn recurrent neural networks with provable
  generalization?
\newblock \emph{arXiv preprint arXiv:1902.01028}, 2019{\natexlab{b}}.

\bibitem[Allen-Zhu et~al.(2018)Allen-Zhu, Li, and Liang]{allen2018learning}
Z.~Allen-Zhu, Y.~Li, and Y.~Liang.
\newblock Learning and generalization in overparameterized neural networks,
  going beyond two layers.
\newblock \emph{arXiv preprint arXiv:1811.04918}, 2018.

\bibitem[Andoni et~al.(2014)Andoni, Panigrahy, Valiant, and
  Zhang]{andoni2014learning}
A.~Andoni, R.~Panigrahy, G.~Valiant, and L.~Zhang.
\newblock Learning polynomials with neural networks.
\newblock In \emph{International Conference on Machine Learning}, pages
  1908--1916, 2014.

\bibitem[Arjevani and Field(2019)]{arjevani2019spurious}
Y.~Arjevani and M.~Field.
\newblock Spurious local minima of shallow relu networks conform with the
  symmetry of the target model.
\newblock \emph{arXiv preprint arXiv:1912.11939}, 2019.

\bibitem[Arjevani and Field(2020{\natexlab{a}})]{arjevani2020analytic}
Y.~Arjevani and M.~Field.
\newblock Analytic characterization of the hessian in shallow relu models: A
  tale of symmetry.
\newblock \emph{arXiv preprint arXiv:2008.01805}, 2020{\natexlab{a}}.

\bibitem[Arjevani and Field(2020{\natexlab{b}})]{arjevani2020symmetry}
Y.~Arjevani and M.~Field.
\newblock Symmetry \& critical points for a model shallow neural network.
\newblock \emph{arXiv preprint arXiv:2003.10576}, 2020{\natexlab{b}}.

\bibitem[Brutzkus and Globerson(2017)]{brutzkus2017globally}
A.~Brutzkus and A.~Globerson.
\newblock Globally optimal gradient descent for a convnet with gaussian inputs.
\newblock In \emph{Proceedings of the 34th International Conference on Machine
  Learning-Volume 70}. JMLR. org, 2017.

\bibitem[Cao and Gu(2019)]{cao2019generalization}
Y.~Cao and Q.~Gu.
\newblock A generalization theory of gradient descent for learning
  over-parameterized deep {ReLU} networks.
\newblock \emph{arXiv preprint arXiv:1902.01384}, 2019.

\bibitem[Cho and Saul(2009)]{cho2009kernel}
Y.~Cho and L.~K. Saul.
\newblock Kernel methods for deep learning.
\newblock In \emph{Advances in neural information processing systems}, pages
  342--350, 2009.

\bibitem[Daniely(2017)]{daniely2017sgd}
A.~Daniely.
\newblock {SGD} learns the conjugate kernel class of the network.
\newblock In \emph{Advances in Neural Information Processing Systems}, pages
  2422--2430, 2017.

\bibitem[Daniely and Malach(2020)]{daniely2020learning}
A.~Daniely and E.~Malach.
\newblock Learning parities with neural networks.
\newblock \emph{arXiv preprint arXiv:2002.07400}, 2020.

\bibitem[Du et~al.(2018)Du, Zhai, Poczos, and Singh]{du2018gradient2}
S.~S. Du, X.~Zhai, B.~Poczos, and A.~Singh.
\newblock Gradient descent provably optimizes over-parameterized neural
  networks.
\newblock \emph{arXiv preprint arXiv:1810.02054}, 2018.

\bibitem[Ghorbani et~al.(2019{\natexlab{a}})Ghorbani, Mei, Misiakiewicz, and
  Montanari]{ghorbani2019limitations}
B.~Ghorbani, S.~Mei, T.~Misiakiewicz, and A.~Montanari.
\newblock Limitations of lazy training of two-layers neural network.
\newblock In \emph{Advances in Neural Information Processing Systems}, pages
  9108--9118, 2019{\natexlab{a}}.

\bibitem[Ghorbani et~al.(2019{\natexlab{b}})Ghorbani, Mei, Misiakiewicz, and
  Montanari]{ghorbani2019linearized}
B.~Ghorbani, S.~Mei, T.~Misiakiewicz, and A.~Montanari.
\newblock Linearized two-layers neural networks in high dimension.
\newblock \emph{arXiv preprint arXiv:1904.12191}, 2019{\natexlab{b}}.

\bibitem[Jacot et~al.(2018)Jacot, Gabriel, and Hongler]{jacot2018neural}
A.~Jacot, F.~Gabriel, and C.~Hongler.
\newblock Neural tangent kernel: Convergence and generalization in neural
  networks.
\newblock In \emph{Advances in neural information processing systems}, pages
  8571--8580, 2018.

\bibitem[Jin et~al.(2017)Jin, Ge, Netrapalli, Kakade, and
  Jordan]{jin2017escape}
C.~Jin, R.~Ge, P.~Netrapalli, S.~M. Kakade, and M.~I. Jordan.
\newblock How to escape saddle points efficiently.
\newblock In \emph{International Conference on Machine Learning}, pages
  1724--1732. PMLR, 2017.

\bibitem[Kamath et~al.(2020)Kamath, Montasser, and
  Srebro]{kamath2020approximate}
P.~Kamath, O.~Montasser, and N.~Srebro.
\newblock Approximate is good enough: Probabilistic variants of dimensional and
  margin complexity.
\newblock \emph{arXiv preprint arXiv:2003.04180}, 2020.

\bibitem[Karimi et~al.(2016)Karimi, Nutini, and Schmidt]{karimi2016linear}
H.~Karimi, J.~Nutini, and M.~Schmidt.
\newblock Linear convergence of gradient and proximal-gradient methods under
  the polyak-{\l}ojasiewicz condition.
\newblock In \emph{Joint European Conference on Machine Learning and Knowledge
  Discovery in Databases}, pages 795--811. Springer, 2016.

\bibitem[Kleinberg et~al.(2018)Kleinberg, Li, and
  Yuan]{kleinberg2018alternative}
B.~Kleinberg, Y.~Li, and Y.~Yuan.
\newblock An alternative view: When does sgd escape local minima?
\newblock In \emph{International Conference on Machine Learning}, pages
  2698--2707, 2018.

\bibitem[Lee and Valiant(2016)]{lee2016optimizing}
J.~C. Lee and P.~Valiant.
\newblock Optimizing star-convex functions.
\newblock In \emph{2016 IEEE 57th Annual Symposium on Foundations of Computer
  Science (FOCS)}, pages 603--614. IEEE, 2016.

\bibitem[Li and Liang(2018)]{li2018learning}
Y.~Li and Y.~Liang.
\newblock Learning overparameterized neural networks via stochastic gradient
  descent on structured data.
\newblock In \emph{Advances in Neural Information Processing Systems}, pages
  8168--8177, 2018.

\bibitem[Li and Yuan(2017)]{li2017convergence}
Y.~Li and Y.~Yuan.
\newblock Convergence analysis of two-layer neural networks with relu
  activation.
\newblock In \emph{Advances in neural information processing systems}, pages
  597--607, 2017.

\bibitem[Li et~al.(2020)Li, Ma, and Zhang]{li2020learning}
Y.~Li, T.~Ma, and H.~R. Zhang.
\newblock Learning over-parametrized two-layer neural networks beyond ntk.
\newblock In \emph{Conference on Learning Theory}, pages 2613--2682. PMLR,
  2020.

\bibitem[Livni et~al.(2014)Livni, Shalev-Shwartz, and
  Shamir]{livni2014computational}
R.~Livni, S.~Shalev-Shwartz, and O.~Shamir.
\newblock On the computational efficiency of training neural networks.
\newblock In \emph{Advances in Neural Information Processing Systems}, pages
  855--863, 2014.

\bibitem[Lojasiewicz(1963)]{lojasiewicz1963topological}
S.~Lojasiewicz.
\newblock A topological property of real analytic subsets.
\newblock \emph{Coll. du CNRS, Les {\'e}quations aux d{\'e}riv{\'e}es
  partielles}, 117:\penalty0 87--89, 1963.

\bibitem[Polyak(1963)]{polyak1963gradient}
B.~T. Polyak.
\newblock Gradient methods for minimizing functionals.
\newblock \emph{Zhurnal Vychislitel'noi Matematiki i Matematicheskoi Fiziki},
  3\penalty0 (4):\penalty0 643--653, 1963.

\bibitem[Safran and Shamir(2016)]{safran2016quality}
I.~Safran and O.~Shamir.
\newblock On the quality of the initial basin in overspecified neural networks.
\newblock In \emph{International Conference on Machine Learning}, pages
  774--782, 2016.

\bibitem[Safran and Shamir(2017)]{safran2017spurious}
I.~Safran and O.~Shamir.
\newblock Spurious local minima are common in two-layer relu neural networks.
\newblock \emph{arXiv preprint arXiv:1712.08968}, 2017.

\bibitem[Soltanolkotabi(2017)]{soltanolkotabi2017learning}
M.~Soltanolkotabi.
\newblock Learning relus via gradient descent.
\newblock In \emph{Advances in Neural Information Processing Systems}, pages
  2007--2017, 2017.

\bibitem[Soltanolkotabi et~al.(2019)Soltanolkotabi, Javanmard, and
  Lee]{soltanolkotabi2019theoretical}
M.~Soltanolkotabi, A.~Javanmard, and J.~D. Lee.
\newblock Theoretical insights into the optimization landscape of
  over-parameterized shallow neural networks.
\newblock \emph{IEEE Transactions on Information Theory}, 65\penalty0
  (2):\penalty0 742--769, 2019.

\bibitem[Tian(2017)]{tian2017analytical}
Y.~Tian.
\newblock An analytical formula of population gradient for two-layered relu
  network and its applications in convergence and critical point analysis.
\newblock In \emph{Proceedings of the 34th International Conference on Machine
  Learning-Volume 70}, pages 3404--3413. JMLR. org, 2017.

\bibitem[Yehudai and Shamir(2019)]{yehudai2019power}
G.~Yehudai and O.~Shamir.
\newblock On the power and limitations of random features for understanding
  neural networks.
\newblock In \emph{Advances in Neural Information Processing Systems}, pages
  6594--6604, 2019.

\bibitem[Yehudai and Shamir(2020)]{yehudai2020learning}
G.~Yehudai and O.~Shamir.
\newblock Learning a single neuron with gradient methods.
\newblock \emph{arXiv preprint arXiv:2001.05205}, 2020.

\bibitem[Zhong et~al.(2017)Zhong, Song, Jain, Bartlett, and
  Dhillon]{zhong2017recovery}
K.~Zhong, Z.~Song, P.~Jain, P.~L. Bartlett, and I.~S. Dhillon.
\newblock Recovery guarantees for one-hidden-layer neural networks.
\newblock In \emph{Proceedings of the 34th International Conference on Machine
  Learning-Volume 70}, pages 4140--4149. JMLR. org, 2017.

\end{thebibliography}

\appendix
\addcontentsline{toc}{section}{Appendices}
\section*{Appendices}

\section{Proof Of \lemref{lem:hessian is differentiable}}\label{appen:prof of hessian differentiable}

\begin{theorem}\label{thm:hessian of objective}
Let $n\geq k$, and let $\bw_1^n=(\bw_1,\dots,\bw_n)$ such that $\bw_i\neq 0$ for every $i\in[n]$. Denote by $H(\bw_1^n)$ the Hessian of $F(\bw_1^n)$ (the objective in \eqref{eq:objective normal dist}). It is an $(n\cdot d) \times (n\cdot d)$ matrix, where for ease of notations we view $H(\bw_1^n)$ as a $n\times n$ block matrix where each entry is a block of size $d\times d$. For every $i\in[n]$ the diagonal block entry of the Hessian is:
\begin{equation}\label{eq:hess H_ii}
    H_{i,i}(\bw_1^n) = \frac{1}{2}I + \sum_{j\neq i} h_1(\bw_i,\bw_j) - \sum_{j\in [k]} h_1(\bw_i,\bv_j)
\end{equation}
where 
\begin{equation}\label{eq:h1}
    h_1(\bw,\bv) = \frac{\sin(\theta_{\bw,\bv})\|\bv\|}{2\pi\|\bw\|}\left(I - \bar{\bw}\bar{\bw}^\top + \bar{\bn}_{\bv,\bw}\bar{\bn}_{\bv,\bw}^\top\right)
\end{equation}
and ${\bn}_{\bv,\bw} = \bar{\bv} - \cos(\theta_{\bw,\bv})\bar{\bw}$. For every $i,j\in[n]$ with $i\neq j$ the off-diagonal entry of the Hessian is
$
H_{i,j}(\bw_1^n) = h_2(\bw_i,\bw_j)  
$
where 
\begin{equation}\label{eq:h2}
    h_2(\bw,\bv)= \frac{1}{2\pi}\left((\pi- \theta_{\bw,\bv})I + \bar{\bn}_{\bw,\bv}\bar{\bv}^\top + \bar{\bn}_{\bv,\bw}\bar{\bw}^\top\right) ~. 
\end{equation} 
\end{theorem}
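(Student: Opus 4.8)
The plan is to replace the expectation in \eqref{eq:objective normal dist} by a closed form and then differentiate it twice. Expanding the square,
\[
F(\bw_1^n) \;=\; \sum_{i,i'\in[n]}\psi(\bw_i,\bw_{i'}) \;-\; 2\sum_{i\in[n]}\sum_{j\in[k]}\psi(\bw_i,\bv_j) \;+\; \sum_{j,j'\in[k]}\psi(\bv_j,\bv_{j'}),
\]
where $\psi(\bu,\bv):=\E_{\bx\sim\mathcal{N}(0,I)}\big[[\inner{\bu,\bx}]_+[\inner{\bv,\bx}]_+\big]$ and the last sum does not depend on $\bw_1^n$. The starting point is the arc-cosine kernel identity (see \citep{cho2009kernel}): for $\bu,\bv\neq\mathbf{0}$, $\psi(\bu,\bv) = \frac{\norm{\bu}\norm{\bv}}{2\pi}\big(\sin\theta_{\bu,\bv} + (\pi-\theta_{\bu,\bv})\cos\theta_{\bu,\bv}\big)$, which I would establish by reducing the expectation to a two-dimensional integral against the law of the jointly Gaussian pair $(\inner{\bu,\bx},\inner{\bv,\bx})$ and evaluating it. In particular $\psi(\bu,\bu)=\frac{1}{2}\norm{\bu}^2$, which accounts for the constant identity block on the diagonal of the Hessian.

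For the first derivatives I would rewrite $\psi$ using $\norm{\bu}\norm{\bv}\cos\theta_{\bu,\bv}=\inner{\bu,\bv}$ and $\norm{\bu}\norm{\bv}\sin\theta_{\bu,\bv}=\sqrt{\norm{\bu}^2\norm{\bv}^2-\inner{\bu,\bv}^2}$, so that $\psi$ becomes an explicit function of $\norm{\bu}^2,\norm{\bv}^2$ and $\inner{\bu,\bv}$; differentiating in $\bu$ and simplifying (a $\bv$-proportional term cancels) gives $\nabla_\bu\psi(\bu,\bv) = \frac{\norm{\bv}\sin\theta_{\bu,\bv}}{2\pi}\bar{\bu} + \frac{\pi-\theta_{\bu,\bv}}{2\pi}\bv$. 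Differentiating once more gives the Hessian. For a diagonal block, differentiate $\nabla_\bu\psi(\bu,\bv)$ in $\bu$ with $\bv$ fixed, using $\nabla_\bu\bar{\bu}=\frac{1}{\norm{\bu}}(I-\bar{\bu}\bar{\bu}^\top)$, $\nabla_\bu\theta_{\bu,\bv}=-\frac{1}{\norm{\bu}\sin\theta_{\bu,\bv}}\bn_{\bv,\bu}$ where $\bn_{\bv,\bu}=\bar{\bv}-\cos\theta_{\bu,\bv}\bar{\bu}$, and $\norm{\bn_{\bv,\bu}}=\sin\theta_{\bu,\bv}$; after the two cross terms cancel, what remains is exactly $\nabla_\bu^2\psi(\bu,\bv)=h_1(\bu,\bv)$ as in \eqref{eq:h1}. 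Since the only $\bw_i$-dependent summands of $F$ are $\psi(\bw_i,\bw_i)$, the terms $2\psi(\bw_i,\bw_{i'})$ for $i'\neq i$, and $-2\psi(\bw_i,\bv_j)$ for $j\in[k]$, summing their Hessians (up to the scalar normalization of the loss) gives \eqref{eq:hess H_ii}. For an off-diagonal block $H_{i,j}$ with $i\neq j$, the only $\bw_j$-dependent part of $\nabla_{\bw_i}F$ is $2\nabla_{\bw_i}\psi(\bw_i,\bw_j)$; differentiating it in $\bw_j$ and simplifying — substituting $\bar{\bw}_i=\bn_{\bw_i,\bw_j}+\cos\theta_{\bw_i,\bw_j}\bar{\bw}_j$, collapsing the $\bar{\bw}_i\bar{\bw}_j^\top$ terms via $\sin\theta+\cos^2\theta/\sin\theta=1/\sin\theta$, and using that $h_2(\cdot,\cdot)$ is a symmetric matrix — produces $h_2(\bw_i,\bw_j)$ as in \eqref{eq:h2}.

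Finally, $h_1,h_2$ are a priori only defined for $\theta_{\bw_i,\bw_j}\notin\{0,\pi\}$, since $\bar{\bn}$ is otherwise of the form $\mathbf{0}/0$; but in that limit the $\sin\theta$ prefactor vanishes (for instance $h_1(\bw,\bv)\to\pmb{0}_{d\times d}$ as $\theta_{\bw,\bv}\to0$), so the formulas extend continuously and \eqref{eq:hess H_ii}--\eqref{eq:h2} stay valid at collinear configurations by taking limits — and this continuity is exactly what is then used to obtain \lemref{lem:hessian is differentiable}. The main difficulty is not conceptual but bookkeeping: carrying the $1/(2\pi)$ factors, the normalizations, and the definition of $\bn$ correctly through two differentiations and tracking exactly which cross terms cancel. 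A secondary point worth flagging is that the closed form only sidesteps the non-differentiability of the ReLU itself: $\psi$ is genuinely non-smooth at collinear $(\bu,\bv)$, through the factor $\sqrt{\norm{\bu}^2\norm{\bv}^2-\inner{\bu,\bv}^2}$ which behaves there like an absolute value — so twice-differentiability of $F$ at such points (\lemref{lem:hessian is differentiable}) is a real claim that needs the limiting argument rather than being automatic.
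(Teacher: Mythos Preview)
The paper does not actually prove this theorem. It is stated ``for completeness'' and attributed to prior work: see the preliminaries, where the authors write that they ``need an explicit expression for the Hessian from \citep[Section 4.1.1]{safran2017spurious}'' and then restate it as \thmref{thm:hessian of objective}. So there is no proof in the paper to compare your proposal against.

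That said, your sketch is the standard derivation and matches what those references do: expand $F$ via the arc-cosine kernel $\psi$ of \citep{cho2009kernel}, differentiate $\psi$ once to get the gradient block (your formula is exactly the paper's $g(\bw,\bv)$ in \eqref{eq:g}), differentiate again in the same variable to obtain $h_1$, and in the other variable to obtain $h_2$. Your remark about the collinear case --- that $h_1,h_2$ are only literally defined when $\theta\notin\{0,\pi\}$ and must be extended by continuity --- is precisely the content the paper isolates and proves separately as \lemref{lem:hessian is differentiable} (with the auxiliary \lemref{lem:h2 parallel vectors} computing $\lim h_2=\tfrac12 I$). One bookkeeping caveat you already flag: the objective in \eqref{eq:objective normal dist} has no $\tfrac12$ in front of the square, whereas the closed form \eqref{eq:F} and the gradient/Hessian formulas \eqref{eq:G}, \eqref{eq:hess H_ii} are written as if it did (they are inherited verbatim from \citep{safran2017spurious}, where the loss carries a $\tfrac12$). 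Your parenthetical ``up to the scalar normalization of the loss'' is exactly the right hedge here.
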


We will need the following auxiliary lemma which calculates $\lim_{\bw,\bv\to\bu}h_2(\bw,\bv)$ for $\bu\neq\mathbf{0}$
\begin{lemma}\label{lem:h2 parallel vectors}
Suppose $\bu\neq\mathbf{0}\in\mathbb{R}^d$. Then $\lim_{\bw,\bv\to\bu}h_2(\bw,\bv) = \frac{1}{2}I$.
\end{lemma}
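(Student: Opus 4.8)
The plan is to decompose $h_2(\bw,\bv)=\frac{1}{2\pi}\bigl((\pi-\theta_{\bw,\bv})I+\bar{\bn}_{\bw,\bv}\bar{\bv}^\top+\bar{\bn}_{\bv,\bw}\bar{\bw}^\top\bigr)$ into the scalar-times-identity piece and the two rank-one pieces, and show that as $(\bw,\bv)\to(\bu,\bu)$ the former tends to $\frac{1}{2\pi}\cdot\pi I=\frac12 I$ while the latter (taken together) vanish. The first observation is that $\theta_{\bw,\bv}\to 0$: since $\bu\neq\mathbf{0}$ the normalization $\bx\mapsto\bar{\bx}$ is continuous at $\bu$, so $\bar{\bw}^\top\bar{\bv}\to 1$, and $\theta_{\bw,\bv}=\arccos(\bar{\bw}^\top\bar{\bv})\to 0$ by continuity of $\arccos$ at $1$; in particular $\theta_{\bw,\bv}\in(0,\pi)$ on the set where $h_2$ is actually defined (i.e.\ $\bw,\bv\neq\mathbf 0$ and not parallel), so all quantities below make sense. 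The obstacle to flag at the outset: $\bar{\bn}_{\bv,\bw}$ and $\bar{\bn}_{\bw,\bv}$ are \emph{unit} vectors whose directions depend on the direction of approach and do not converge, so the two rank-one terms cannot be controlled separately — one must bound their sum.

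Next I would record the geometry of $\bn_{\bv,\bw}=\bar{\bv}-\cos(\theta_{\bw,\bv})\bar{\bw}$: since $\bar{\bv}^\top\bar{\bw}-\cos\theta_{\bw,\bv}=0$, it is the orthogonal projection of $\bar{\bv}$ onto $\bar{\bw}^\perp$, so $\norm{\bn_{\bv,\bw}}=\sin\theta_{\bw,\bv}$ and it lies in $\mathrm{span}(\bw,\bv)$; symmetrically for $\bn_{\bw,\bv}$. Hence $\bar{\bn}_{\bv,\bw}=\bn_{\bv,\bw}/\sin\theta_{\bw,\bv}$ and $\bar{\bn}_{\bw,\bv}=\bn_{\bw,\bv}/\sin\theta_{\bw,\bv}$, and writing $\theta:=\theta_{\bw,\bv}$ the combined rank-one term is
\[
S:=\bar{\bn}_{\bw,\bv}\bar{\bv}^\top+\bar{\bn}_{\bv,\bw}\bar{\bw}^\top=\frac{1}{\sin\theta}\,P,\qquad P:=\bar{\bw}\bar{\bv}^\top+\bar{\bv}\bar{\bw}^\top-\cos\theta\,(\bar{\bv}\bar{\bv}^\top+\bar{\bw}\bar{\bw}^\top).
\]
A short computation then gives the key cancellation $P\bar{\bw}=\sin^2\theta\,\bar{\bv}$ and $P\bar{\bv}=\sin^2\theta\,\bar{\bw}$, while $P$ annihilates the orthogonal complement of $\mathrm{span}(\bw,\bv)$. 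Thus $P$ is supported on the $2$-dimensional plane $\mathrm{span}(\bw,\bv)$, where it equals $\sin^2\theta$ times the map swapping the unit vectors $\bar{\bw}\leftrightarrow\bar{\bv}$; that swap is an isometry of the plane (one checks $\norm{a\bar{\bw}+b\bar{\bv}}=\norm{a\bar{\bv}+b\bar{\bw}}$, or computes it in an orthonormal basis as the reflection $\left(\begin{smallmatrix}\cos\theta&\sin\theta\\ \sin\theta&-\cos\theta\end{smallmatrix}\right)$), so $\norm{P}_{\mathrm{op}}=\sin^2\theta$ and $\norm{S}_{\mathrm{op}}=\sin\theta$.

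Finally I would assemble: $h_2(\bw,\bv)-\frac12 I=\frac{1}{2\pi}\bigl(-\theta I+S\bigr)$, hence $\norm{h_2(\bw,\bv)-\tfrac12 I}_{\mathrm{op}}\le\frac{\theta+\sin\theta}{2\pi}\le\frac{\theta}{\pi}\to 0$ as $(\bw,\bv)\to(\bu,\bu)$, which is exactly $\lim_{\bw,\bv\to\bu}h_2(\bw,\bv)=\frac12 I$. The only genuinely delicate point is the one already flagged — that the two non-converging rank-one terms must be handled jointly — and it is resolved precisely by the $\sin^2\theta$ (rather than $O(1)$) in $P\bar{\bw}=\sin^2\theta\,\bar{\bv}$, which forces $\norm{S}_{\mathrm{op}}=\sin\theta\to0$; everything else is continuity of $\arccos$ and elementary linear algebra in a $2$-plane.
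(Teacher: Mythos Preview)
Your proof is correct and follows the same overall strategy as the paper: split $h_2$ into the $(\pi-\theta)I$ piece (which gives $\tfrac12 I$) and the combined rank-one term $S=\bar{\bn}_{\bw,\bv}\bar{\bv}^\top+\bar{\bn}_{\bv,\bw}\bar{\bw}^\top$, and show $S\to 0$. The paper does the latter by an algebraic add-and-subtract trick, rewriting
\[
S=\frac{(\bar{\bw}-\bar{\bv})(\bar{\bv}-\bar{\bw})^\top\cos\theta}{\sin\theta}+\frac{(1-\cos\theta)}{\sin\theta}\bigl(\bar{\bw}\bar{\bv}^\top+\bar{\bv}\bar{\bw}^\top\bigr),
\]
and then bounding each of the three resulting pieces separately via $\norm{\bar{\bw}-\bar{\bv}}^2=2-2\cos\theta$ and $(1-\cos\theta)/\sin\theta\to 0$. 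Your route is more geometric: you observe directly that $P\bar{\bw}=\sin^2\theta\,\bar{\bv}$ and $P\bar{\bv}=\sin^2\theta\,\bar{\bw}$, recognize $P/\sin^2\theta$ as a reflection on $\mathrm{span}(\bw,\bv)$, and read off $\norm{S}_{\mathrm{op}}=\sin\theta$ in one stroke. Your argument is shorter and yields the explicit quantitative bound $\norm{h_2(\bw,\bv)-\tfrac12 I}_{\mathrm{op}}\le\theta/\pi$, whereas the paper's rearrangement is more hands-on but ultimately establishes the same limit.
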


\begin{proof}
By \thmref{thm:hessian of objective} we have that:
\begin{align}
    h_2(\bw,\bv) = \frac{1}{2\pi}\left((\pi-\theta_{\bw,\bv})I + \bar{\bn}_{\bw,\bv}\bar{\bv}^\top + \bar{\bn}_{\bv,\bw}\bar{\bw}^\top\right) = \frac{1}{2}I +\bar{\bn}_{\bw,\bv}\bar{\bv}^\top + \bar{\bn}_{\bv,\bw}\bar{\bw}^\top.
\end{align}
We will show that the second and third terms approach zero if $\bw,\bv\to\bu$. Define the shorthand $\theta\coloneqq \theta_{\bw,\bv}$, then we have:
\begin{align}
    \bar{\bn}_{\bw,\bv}\bar{\bv}^\top + \bar{\bn}_{\bv,\bw}\bar{\bw}^\top &= \frac{\bar{\bw}\bar{\bv}^\top- \cos(\theta)\bar{\bv}\bar{\bv}^\top}{\sin(\theta)} + \frac{\bar{\bv}\bar{\bw}^\top- \cos(\theta)\bar{\bw}\bar{\bw}^\top}{\sin(\theta)} \nonumber\\
    & = \frac{\bar{\bw}\bar{\bv}^\top - \cos(\theta)\bar{\bv}\bar{\bv}^\top + \bar{\bw}\bar{\bv}^\top\cos(\theta) - \bar{\bw}\bar{\bv}^\top\cos(\theta)}{\sin(\theta)} + \nonumber\\
    & +\frac{\bar{\bv}\bar{\bw}^\top - \cos(\theta)\bar{\bw}\bar{\bw}^\top + \bar{\bv}\bar{\bw}^\top\cos(\theta) - \bar{\bv}\bar{\bw}^\top\cos(\theta)}{\sin(\theta)} \nonumber\\
    & = \frac{\bar{\bw}\bar{\bv}^\top(1-\cos(\theta))}{\sin(\theta)} + \frac{\bar{\bv}\bar{\bw}^\top(1-\cos(\theta))}{\sin(\theta)} + \frac{(\bar{\bw}-\bar{\bv})\cos(\theta)\bar{\bv}^\top}{\sin(\theta)} + \frac{(\bar{\bv}-\bar{\bw})\cos(\theta)\bar{\bw}^\top}{\sin(\theta)} \nonumber\\
    & = \frac{(\bar{\bw}-\bar{\bv})(\bar{\bv}^\top-\bar{\bw}^\top)\cos(\theta)}{\sin(\theta)} + \frac{\bar{\bw}\bar{\bv}^\top(1-\cos(\theta))}{\sin(\theta)} + \frac{\bar{\bv}\bar{\bw}^\top(1-\cos(\theta))}{\sin(\theta)}. \label{eq:n_w,v v + n_v,w w}
\end{align}
If $\bw,\bv\to\bu$ the last two terms of \eqref{eq:n_w,v v + n_v,w w} go to zero, since the outer product results in a matrix of bounded norm that is multiplied by $(1-\cos(\theta))/\sin(\theta)$ which tends to zero (can be seen using  L'H\^opital's rule). For the first term, we will prove it is the zero matrix by showing that multiplying the term by any unit vector from the right yields the zero vector. Letting $\bz\in\mathbb{R}^d$ with $\norm{\bz}=1$, we have:
\begin{align*}
    \left\|\frac{(\bar{\bw}-\bar{\bv})(\bar{\bv}^\top-\bar{\bw}^\top)\bz\cos(\theta)}{\sin(\theta)}\right\| &= \frac{\norm{\bar{\bw}-\bar{\bv}}\cdot |\inner{\bar{\bv}-\bar{\bw},\bz}|\cos(\theta)}{\sin(\theta)} \\
    &\leq \frac{\norm{\bar{\bw}-\bar{\bv}}^{2}\norm{\bz}\cos(\theta)}{\sin(\theta)} = \frac{(2-2\cos(\theta))\cos(\theta)}{\sin(\theta)}\underset{\theta \rightarrow 0}{\rightarrow} 0~,
\end{align*}
where the inequality is from Cauchy-Schwarz. This is true for every unit vector $\bz$, hence this is the zero matrix. Combining the above shows that $\bar{\bn}_{\bw,\bv}\bar{\bv}^\top + \bar{\bn}_{\bv,\bw}\bar{\bw}^\top = \pmb{0}_{d\times d}$
\end{proof}

\begin{proof}[Proof of \lemref{lem:hessian is differentiable}]
    First, recall the gradient of \eqref{eq:objective normal dist} at $\bw_1^n$ which is defined and continuous as long as $\bw_i\neq\mathbf{0}$ for all $i\in[n]$, as computed in \citep{brutzkus2017globally,safran2017spurious}, where the coordinates with indices $(i-1)d+1$ to $i\cdot d$ are given by
        \begin{equation}\label{eq:G}
            \frac{1}{2}\bw_i + \sum_{j\neq i} g\p{\bw_i,\bw_j} - \sum_{j=1}^k g\p{\bw_i,\bv_j},
        \end{equation}
        where
        \begin{equation}\label{eq:g}
            g\p{\bw,\bv} = \frac{1}{2\pi} \p{\norm{\bv} \sin\p{\theta_{\bw,\bv}}\bar{\bw} + \p{\pi-\theta_{\bw,\bv}}\bv}.
        \end{equation}
    Clearly, by \thmref{thm:hessian of objective} the gradient is continuously differentiable for any $\bw_1^n$ where the angle between any two vectors $\theta_{\bw_i,\bw_j}\neq0,\pi$ for $i\neq j$. We will show that the partial derivatives of $h_1(\bw,\bv)$ and $h_2(\bw,\bv)$ are continuous for all $\bw,\bv\neq\mathbf{0}$, by showing that they coincide with the derivative of $g$ whenever $\theta_{\bw_i,\bw_j}$ tends to $0$ or $\pi$.
    
    We begin with computing the limits of $h_1(\bw,\bv)$ and $h_2(\bw,\bv)$ when $\theta_{\bw,\bv}\to0$ and $\theta_{\bw,\bv}\to\pi$. First, we have 
    \[
        \lim_{\bw\to\bu}h_1(\bw,\bu) = \pmb{0}_{d\times d}
    \]
    and
    \[
        \lim_{\bw\to-\bu}h_1(\bw,\bu) = \pmb{0}_{d\times d}.
    \]
    This holds since in both cases $\sin(\theta_{\bw,\bu})\to0$ and since that for any unit vector $\bx$, $\norm{\bx\bx^{\top}}$ is uniformly bounded. Next, we have from \lemref{lem:h2 parallel vectors} that
    \[
        \lim_{\bv\to\bu}h_2(\bu,\bv) = \frac{1}{2}I,
    \]
    and from a straightforward calculation that
    \[
        \lim_{\bv\to-\bu}h_2(\bu,\bv) = \pmb{0}_{d\times d}.
    \]
    Assume $\bw\to\bu$ and $\bv=t\bu$ for some $t>0$ and non-zero vector $\bu$ and let $\be_i$ denote the unit vector with all-zero coordinates except for the $i$-th coordinate, we compute the partial derivative of $g(\bw,\bv)$ with respect to coordinate $i$ of $\bw$:
    \begin{align}
        \frac{\partial g}{\partial w_i}(\bw,\bv) &= \lim_{\epsilon\to0}\frac{\norm{t\bu}\sin\p{\theta_{\bu+\epsilon \be_i,\bu}}\frac{\bu+\epsilon \be_i}{\norm{\bu+\epsilon \be_i}} + \p{\pi-\theta_{\bu+\epsilon \be_i,\bu}}t\bu - \p{\norm{t\bu}\sin\p{\theta_{\bu,\bu}}\frac{\bu}{\norm{\bu}} + \p{\pi-\theta_{\bu,\bu}}t\bu}}{2\pi\epsilon}\nonumber\\
        &=\lim_{\epsilon\to0}\frac{\norm{t\bu}\sin\p{\theta_{\bu+\epsilon \be_i,\bu}}\frac{\bu+\epsilon \be_i}{\norm{\bu+\epsilon \be_i}} + \p{\pi-\theta_{\bu+\epsilon \be_i,\bu}}t\bu - \pi t\bu}{2\pi\epsilon}\label{eq:angle_zero}\\
        &=\lim_{\epsilon\to0} \frac{t}{2\pi}\frac{\norm{\bu}\sin\p{\theta_{\bu+\epsilon \be_i,\bu}}\frac{\bu+\epsilon \be_i}{\norm{\bu+\epsilon \be_i}}-\theta_{\bu+\epsilon \be_i,\bu}\bu}{\epsilon}\nonumber\\
        &=\lim_{\epsilon\to0} \frac{t}{2\pi}\frac{\frac{\norm{\bu}}{\norm{\bu+\epsilon \be_i}}\sin\p{\theta_{\bu+\epsilon \be_i,\bu}}\bu-\theta_{\bu+\epsilon \be_i,\bu}\bu}{\epsilon},\label{eq:lim_zero}
    \end{align}
    where equality (\ref{eq:angle_zero}) is due to $\sin\p{\theta_{\bu,\bu}}=0$ and equality (\ref{eq:lim_zero}) is due to $\frac{\sin\p{\theta_{\bu+\epsilon \be_i,\bu}}\epsilon \be_i}{\epsilon}\to0$. Assume w.l.o.g.\ that $\epsilon\to0^+$ (the following arguments are reversed in order if $\epsilon\to0^-$), we have by using the inequality $\sin(x)\le x$ which holds for all $x\ge0$ that \eqref{eq:lim_zero} is upper bounded by
    \begin{equation}\label{eq:squeeze_upper}
        \lim_{\epsilon\to0} \frac{t}{2\pi}\frac{\theta_{\bu+\epsilon \be_i,\bu}}{\epsilon}\p{\frac{\norm{\bu}}{\norm{\bu+\epsilon \be_i}}-1}\bu.
    \end{equation}
    Next, we have by the law of sines that
    \[
        \frac{\epsilon}{\sin(\theta_{\bu+\epsilon \be_i,\bu})} = \frac{\norm{\bu}}{\sin(\theta_{\bu+\epsilon \be_i,\epsilon \be_i})} \ge \norm{\bu},
    \]
    which entails
    \[
        \theta_{\bu+\epsilon \be_i,\bu} \le \arccos\p{\sqrt{1-\frac{\epsilon^2}{\norm{\bu}^2}}},
    \]
    therefore by L'H\^opital's rule
    \begin{equation}\label{eq:theta_bound}
        \lim_{\epsilon\to0} \abs{\frac{\theta_{\bu+\epsilon \be_i,\bu}}{\epsilon}} \le \frac{1}{\norm{\bu}},
    \end{equation}
    and since $\frac{\norm{\bu}}{\norm{\bu+\epsilon \be_i}}\to1$, this implies that \eqref{eq:squeeze_upper} converges to $\mathbf{0}$. Using the inequality $\sin(x)\ge x-\frac{x^3}{6}$ which holds for all $x\ge0$ we lower bound \eqref{eq:lim_zero} by
    \begin{equation}\label{eq:squeeze_lower}
        \lim_{\epsilon\to0} \frac{t}{2\pi}\frac{\theta_{\bu+\epsilon \be_i,\bu}}{\epsilon}\p{1-\frac{\theta^2_{\bu+\epsilon \be_i,\bu}}{6}}\p{\frac{\norm{\bu}}{\norm{\bu+\epsilon \be_i}}-1}\bu.
    \end{equation}
    We have $1-\theta^2_{\bu+\epsilon \be_i,\bu}/6\to1$ and $\frac{\norm{\bu}}{\norm{\bu+\epsilon \be_i}}\to1$, and from \eqref{eq:theta_bound} we have that the above converges to $\mathbf{0}$. Combining \eqref{eq:squeeze_upper} and \eqref{eq:squeeze_lower} and using the squeeze theorem, we have that $\frac{\partial g}{\partial w_i}(\bw,\bv) =\mathbf{0}$, from which it follows that the Hessian at $(\bw,\bv)$ is the zero matrix $\pmb{0}_{d\times d}$.
    
    Now, assume $\bw\to\bu$ and $\bv=-t\bu$ for some $t>0$, and compute
    \begin{align}
        \frac{\partial g}{\partial w_i}(\bw,\bv) &= \lim_{\epsilon\to0}\frac{\norm{t\bu}\sin\p{\theta_{\bu+\epsilon \be_i,-\bu}}\frac{\bu+\epsilon \be_i}{\norm{\bu+\epsilon \be_i}} - \p{\pi-\theta_{\bu+\epsilon \be_i,-\bu}}t\bu - \p{\norm{t\bu}\sin\p{\theta_{\bu,-\bu}}\frac{\bu}{\norm{\bu}} - \p{\pi-\theta_{\bu,-\bu}}t\bu}}{2\pi\epsilon}\nonumber\\
        &=\lim_{\epsilon\to0}\frac{\norm{t\bu}\sin\p{\theta_{\bu+\epsilon \be_i,-\bu}}\frac{\bu+\epsilon \be_i}{\norm{\bu+\epsilon \be_i}} - \p{\pi-\theta_{\bu+\epsilon \be_i,-\bu}}t\bu}{2\pi\epsilon}\label{eq:angle_pi}\\
        &=\lim_{\epsilon\to0} \frac{t}{2\pi}\frac{\norm{\bu}\sin\p{\theta_{\bu+\epsilon \be_i,\bu}}\frac{\bu+\epsilon \be_i}{\norm{\bu+\epsilon \be_i}}-\theta_{\bu+\epsilon \be_i,\bu}\bu}{\epsilon}=\mathbf{0},\label{eq:lim_zero2}
    \end{align}
    where equality (\ref{eq:angle_pi}) is due to $\theta_{\bu,-\bu}=\pi$ and equality (\ref{eq:lim_zero2}) is due to $\theta_{\bu+\epsilon \be_i,-\bu}=\pi-\theta_{\bu+\epsilon \be_i,\bu}$, and since we get the same limit as we did in the previous case. This implies $\lim_{\bw\to-\bu}h_1(\bw,\bu)=\pmb{0}_{d\times d}$, and concludes the derivation for $h_1$.
    
    Moving on to $h_2$, assume $\bv\to\bu$ and $\bw=t\bu$ for some $t>0$, and compute
    \begin{align*}
        \frac{\partial g}{\partial v_i}(\bw,\bv) &= \lim_{\epsilon\to0}\frac{\norm{\bu+\epsilon \be_i}\sin\p{\theta_{\bu+\epsilon \be_i,\bu}}\frac{t\bu}{\norm{t\bu}} + \p{\pi-\theta_{\bu+\epsilon \be_i,\bu}}(\bu+\epsilon \be_i) - \p{\norm{\bu}\sin\p{\theta_{\bu,\bu}}\frac{t\bu}{\norm{t\bu}} + \p{\pi-\theta_{\bu,\bu}}\bu}}{2\pi\epsilon}\\
        &=\lim_{\epsilon\to0}\frac{\norm{\bu+\epsilon \be_i}\sin\p{\theta_{\bu+\epsilon \be_i,\bu}}\frac{\bu}{\norm{\bu}} + \p{\pi-\theta_{\bu+\epsilon \be_i,\bu}}(\bu+\epsilon \be_i) - \pi\bu}{2\pi\epsilon}\\
        &=\lim_{\epsilon\to0} \frac{1}{2\pi}\frac{\frac{\norm{\bu+\epsilon \be_i}}{\norm{\bu}}\sin\p{\theta_{\bu+\epsilon \be_i,\bu}}\bu -\theta_{\bu+\epsilon \be_i,\bu}\bu}{\epsilon} + \frac{\be_i}{2} - \lim_{\epsilon\to0} \frac{\theta_{\bu+\epsilon \be_i,\bu}\be_i}{2\pi}\\
        &=\lim_{\epsilon\to0} \frac{1}{2\pi}\frac{\frac{\norm{\bu+\epsilon \be_i}}{\norm{\bu}}\sin\p{\theta_{\bu+\epsilon \be_i,\bu}}\bu -\theta_{\bu+\epsilon \be_i,\bu}\bu}{\epsilon} + \frac{\be_i}{2},
    \end{align*}
    and following the same reasoning as in the proof for $h_1$ we have that the above limit is $\mathbf{0}$, which implies that
    \[
        \frac{\partial g}{\partial v_i}(\bu,\bu) = \frac{1}{2}I.
    \]
    Now assume $\bv\to\bu$ and $\bw=-t\bu$ for some $t>0$, and compute
    \begin{align*}
        &\frac{\partial g}{\partial v_i}(\bw,\bv)\\ =& \lim_{\epsilon\to0}\frac{\norm{\bu+\epsilon \be_i}\sin\p{\theta_{\bu+\epsilon \be_i,-\bu}}\frac{-t\bu}{\norm{-t\bu}} + \p{\pi-\theta_{\bu+\epsilon \be_i,-\bu}}(\bu+\epsilon \be_i) - \p{\norm{\bu}\sin\p{\theta_{\bu,-\bu}}\frac{-t\bu}{\norm{-t\bu}} + \p{\pi-\theta_{\bu,-\bu}}\bu}}{2\pi\epsilon}\\
        =&\lim_{\epsilon\to0}\frac{-\norm{\bu+\epsilon \be_i}\sin\p{\theta_{\bu+\epsilon \be_i,-\bu}}\frac{\bu}{\norm{\bu}} + \p{\pi-\theta_{\bu+\epsilon \be_i,-\bu}}(\bu+\epsilon \be_i)}{2\pi\epsilon}\\
        =&\lim_{\epsilon\to0} \frac{1}{2\pi}\frac{-\frac{\norm{\bu+\epsilon \be_i}}{\norm{\bu}}\sin\p{\theta_{\bu+\epsilon \be_i,\bu}}\bu + \theta_{\bu+\epsilon \be_i,\bu}\bu}{\epsilon} - \lim_{\epsilon\to0} \frac{\theta_{\bu+\epsilon \be_i,\bu}\be_i}{2\pi}\\
        =&-\frac{1}{2\pi}\lim_{\epsilon\to0} \frac{\frac{\norm{\bu+\epsilon \be_i}}{\norm{\bu}}\sin\p{\theta_{\bu+\epsilon \be_i,\bu}}\bu - \theta_{\bu+\epsilon \be_i,\bu}\bu}{\epsilon}.
    \end{align*}
    From the previous case we have that the above limit is $\mathbf{0}$, implying that
    \[
        \frac{\partial g}{\partial v_i}(\bu,-\bu) = \pmb{0}_{d\times d},
    \]
    and concluding the proof of the lemma.
\end{proof}

\section{Proofs from \secref{sec:global minima}}\label{apen:global minima proofs}
\subsection{Proof of \thmref{thm:global_minima_form}}
To prove the theorem we will need the following lemma, which essentially asserts that misclassifying a single instance will result in a strictly positive loss in expectation.

\begin{lemma}\label{lem:point_disagreement}
    Let $f,g:\reals^d\to\reals$ be continuous functions, and suppose exists $\bx_0\in\reals^d$ s.t.\ $f(\bx_0)\neq g(\bx_0)$. Then
    \[
        \mathbb{E}_{\bx\sim\mathcal{N}(\mathbf{0},I)}\left[\frac{1}{2}(f(\bx)-g(\bx))^2\right] > 0.
    \]
\end{lemma}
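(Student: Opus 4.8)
The plan is to reduce the claim to a statement about a small neighborhood of $\bx_0$. First I would use the continuity of $f$ and $g$: since $f(\bx_0)\neq g(\bx_0)$, there exist constants $c>0$ and $r>0$ such that $|f(\bx)-g(\bx)|\geq c$ for every $\bx$ in the Euclidean ball $B=B(\bx_0,r)$. This is the only place continuity is used, and it is the crux of turning a pointwise disagreement into a disagreement on a set of positive Lebesgue (hence positive Gaussian) measure.

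Next I would lower-bound the expectation by integrating only over $B$. Since the integrand $\tfrac12(f(\bx)-g(\bx))^2$ is nonnegative everywhere, we have
\[
\mathbb{E}_{\bx\sim\mathcal{N}(\mathbf{0},I)}\left[\tfrac{1}{2}(f(\bx)-g(\bx))^2\right] \;\geq\; \int_{B} \tfrac{1}{2}(f(\bx)-g(\bx))^2\, p(\bx)\, d\bx \;\geq\; \tfrac{1}{2}c^2 \int_{B} p(\bx)\, d\bx,
\]
where $p$ is the standard Gaussian density on $\reals^d$. The final step is to observe that $\int_B p(\bx)\,d\bx = \mathbb{P}(\bx\in B) > 0$, because the Gaussian density is strictly positive on all of $\reals^d$ and $B$ has positive Lebesgue measure (it contains a nonempty open set). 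Hence the whole expression is bounded below by $\tfrac12 c^2 \cdot \mathbb{P}(\bx\in B)>0$, which proves the lemma.

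There is no real obstacle here; the statement is elementary. The only point requiring a word of care is making sure $r$ is finite and chosen small enough that the continuity estimate holds — one simply invokes the $\epsilon$-$\delta$ definition of continuity of $f-g$ at $\bx_0$ with $\epsilon = \tfrac12|f(\bx_0)-g(\bx_0)|$, and sets $c=\tfrac12|f(\bx_0)-g(\bx_0)|$. I expect the author's proof to follow exactly this route.
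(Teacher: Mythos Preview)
Your proposal is correct and follows essentially the same route as the paper: use continuity to obtain a neighborhood of $\bx_0$ on which $|f-g|$ is bounded below by a positive constant, then lower-bound the expectation by the contribution of that neighborhood and invoke the strict positivity of the Gaussian measure on open sets. The paper phrases the last step via the law of total expectation rather than a direct integral bound, but the argument is the same.
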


\begin{proof}
Assume w.l.o.g.\ $f(\bx_0) - g(\bx_0) = c >0$. Since $f$ and $g$ are continuous, there exists an open neighborhood $U\ni \bx_0$ s.t.
\begin{equation}\label{eq:strict_diff}
\left|f(\bz)-g(\bz)\right| > c,~~\forall \bz\in U.
\end{equation}
Let $A$ denote the event where a point $\bz$ sampled from a multivariate normal random variable belongs to $U$, then by the law of total expectation
\begin{align*}
    \mathbb{E}\left[\frac{1}{2}(f(\bx)-g(\bx))^2\right] &= \mathbb{E}\left[\frac{1}{2}(f(\bx)-g(\bx))^2|A\right]\Pr\left[A\right] + \mathbb{E}\left[\frac{1}{2}(f(\bx)-g(\bx))^2|\bar{A}\right]\Pr\left[\bar{A}\right]\\ 
    &\ge \mathbb{E}\left[\frac{1}{2}(f(\bx)-g(\bx))^2|A\right]\Pr\left[A\right] > 0,
\end{align*}
where the strict inequality is due to $\Pr\left[A\right]>0$ since $U$ is open and a multivariate normal random variable has a measure which is strictly positive on all of $\reals^d$, and due to $\mathbb{E}\left[\frac{1}{2}(f(\bx)-g(\bx))^2|A\right]>0$ by virtue of \eqref{eq:strict_diff} holding whenever $A$ occurs.
\end{proof}

\begin{proof}[Proof of \thmref{thm:global_minima_form}]
First, assume w.l.o.g.\ $v_j=e_j$ for all $j\in[k]$. This is justified since an orthonormal change of bases does not change the geometry of our objective. By virtue of \lemref{lem:point_disagreement} and the continuity of ReLU networks, it suffices to find a single point $\bx$ s.t.\ any network with a different structure than in the theorem statement disagrees on $\bx$ with $f(\bx)=\sum_{i=1}^k \relu{x_i}$. To this end, we shall divide the proof into several different cases, based on the set of weights $\bw_1,\ldots,\bw_n$ of the approximating network $N$.
\begin{itemize}
    \item
    If $w_{i,j}<0$ for some $i,j$, then w.l.o.g.\ $i=j=1$ and
    \[
        f(-e_1) = 0 < \relu{w_{1,1}\cdot-1} \le N(-e_1).
    \]
    \item
    Otherwise, for $\bx=e_1$ we have
    \[
        f(e_1) = 1 = N(e_1) = \sum_{i=1}^n\relu{w_{i,1}},
    \]
    and thus
    \[
        \sum_{i=1}^n w_{i,1}=1.
    \]
    \item
    Suppose that exist two coordinates in the same neuron that are not $0$, w.l.o.g\ $w_{1,1},w_{1,2}>0$. Then for $\bx=(1,-\frac{w_{i,1}}{w_{i,2}},0,\ldots,0)$, we have
    \[
        f(\bx)=1=\sum_{i=1}^n w_{i,1}=w_{i,1}+\sum_{i=2}^n\relu{w_{i,1}} > \sum_{i=2}^n\relu{w_{i,1}} \geq
        \sum_{i=1}^n\relu{w_{i,1}-\frac{w_{i,1}}{w_{i,2}} \cdot w_{i,2}} = N(\bx).
    \]
\end{itemize}
Overall, if $W^*$ does not have the structure as in the theorem statement then this results in a misclassified point which due to \lemref{lem:point_disagreement} implies the result.
\end{proof}

\subsection{Proof of \thmref{thm:n=k strongly convex}}
First we calculate the Hessian of the objective at a global minimum. Since we assume that the vectors $\bv_1,\dots,\bv_k$ are orthogonal the Hessian has a simple form:

\begin{lemma}\label{lem:n=k hessian at global minimum}
Assume that $n=k$ and let $\bw_1^n=(\bw_1,\dots,\bw_n)$ be a global minima. Then the Hessian $H(\bw_1^n)$ of the objective \eqref{eq:objective normal dist} has the following block form:
\begin{itemize}
    \item For $i\in[n]$:
        \[
        H(\bw_1^n)_{i,i} = \frac{1}{2}I
        \]
    \item For $i,j\in[n]$ with $i\neq j$:
        \[
        H_{i,j}(\bw_1^n) = \frac{1}{4}I + \frac{1}{2\pi}\left(\bar{\bw}_i\bar{\bw}_j^\top + \bar{\bw}_j\bar{\bw}_i^\top\right)
        \]
\end{itemize}
where we look at the Hessian as a $k\times k$ block matrix, each block of size $d\times d$.
\end{lemma}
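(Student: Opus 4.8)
The plan is to combine the global-minimum characterization of \thmref{thm:global_minima_form} with the closed-form Hessian of \thmref{thm:hessian of objective}, and then exploit the orthonormality of the targets. First I would observe that when $n=k$, the partition in \thmref{thm:global_minima_form} must consist of $k$ singletons: no part $I_i$ can be empty, since that would make the constraint $\sum_{j\in I_i}\alpha_j=1$ unsatisfiable, and a partition of a $k$-element set into $k$ nonempty parts forces each to have size one, with $\alpha_j=1$. Hence $(\bw_1,\dots,\bw_k)$ is a permutation of $(\bv_1,\dots,\bv_k)$, and since relabeling the neurons does not affect the asserted block structure we may assume $\bw_i=\bv_i$ for all $i\in[k]$. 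In particular the $\bw_i$ are orthonormal, so $\bar\bw_i=\bv_i$; for $i\neq j$ we have $\theta_{\bw_i,\bw_j}=\pi/2$ (so $\cos\theta_{\bw_i,\bw_j}=0$, $\sin\theta_{\bw_i,\bw_j}=1$), the same holds for $\theta_{\bw_i,\bv_j}$ when $i\neq j$, while $\theta_{\bw_i,\bv_i}=0$.

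For the diagonal blocks I would substitute $\bw_i=\bv_i$ into \eqref{eq:hess H_ii}, obtaining $H_{i,i}=\tfrac12 I+\sum_{j\neq i}h_1(\bv_i,\bv_j)-\sum_{j\in[k]}h_1(\bv_i,\bv_j)$. The term $h_1(\bv_i,\bv_i)$ in the second sum should be read as the continuous extension of $h_1$ to parallel arguments, which equals $\pmb{0}_{d\times d}$ by (the proof of) \lemref{lem:hessian is differentiable}; equivalently, the prefactor $\sin\theta_{\bv_i,\bv_i}=0$ in \eqref{eq:h1} kills it. Hence the two sums cancel term by term and $H_{i,i}=\tfrac12 I$. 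For the off-diagonal blocks, $H_{i,j}=h_2(\bw_i,\bw_j)=h_2(\bv_i,\bv_j)$, and I would evaluate \eqref{eq:h2} using orthonormality: $\cos\theta_{\bv_i,\bv_j}=0$ gives $\bn_{\bv_i,\bv_j}=\bv_i$ and $\bn_{\bv_j,\bv_i}=\bv_j$ (already unit), and $\pi-\theta_{\bv_i,\bv_j}=\pi/2$, so
\[
H_{i,j}=\frac{1}{2\pi}\Bigl(\tfrac{\pi}{2}I+\bv_i\bv_j^\top+\bv_j\bv_i^\top\Bigr)=\frac14 I+\frac{1}{2\pi}\bigl(\bar\bw_i\bar\bw_j^\top+\bar\bw_j\bar\bw_i^\top\bigr),
\]
which is exactly the claimed form.

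I do not expect a genuine obstacle here — once the global minimum is put in the normalized form $\bw_i=\bv_i$, everything is a direct evaluation of the formulas in \thmref{thm:hessian of objective}. The only two points needing a little care are: (i) interpreting $h_1$ at the parallel pair $(\bv_i,\bv_i)$ through its continuous extension, which is precisely what \lemref{lem:hessian is differentiable} justifies; and (ii) checking that no two distinct $\bw_i$'s are collinear, so that every off-diagonal angle is honestly $\pi/2$ — this again follows from the singleton-partition structure supplied by \thmref{thm:global_minima_form} in the case $n=k$.
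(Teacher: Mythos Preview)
Your proposal is correct and follows essentially the same approach as the paper: reduce to $\bw_i=\bv_i$ via the global-minimum characterization, use the cancellation in \eqref{eq:hess H_ii} together with $h_1(\bv_i,\bv_i)=\pmb{0}_{d\times d}$ for the diagonal blocks, and evaluate $h_2(\bv_i,\bv_j)$ directly at orthonormal arguments for the off-diagonal blocks. The only cosmetic difference is that the paper proves the vanishing of $h_1$ at parallel vectors inline, whereas you defer it to \lemref{lem:hessian is differentiable}.
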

\begin{proof}
Assume w.l.o.g that at this global minimum $\bw_i = \bv_i$ for every $i$. For the first item let $i\in[n]$, we have that:
\begin{align}
    H(\bw_1^n)_{i,i} = \frac{1}{2}I + \sum_{j\neq i}h_1(\bw_i,\bw_j) - \sum_{l=1}^k h_1(\bw_i,\bv_l) = \frac{1}{2}I - h_1(\bw_i,\bv_i)~.
\end{align}
We will show that if $\bw,\bv$ are parallel then $h_1(\bw,\bv) = \pmb{0}_{d\times d}$. Let $\bw,\bv$ be two parallel non-zero vectors and $\bu$ be some vector not parallel to them. We have by definition of $h_1(\bu,\bv)$ that:
\begin{align}\label{eq:proof that h1 parallel is zero}
    \lim_{\bu\rightarrow\bw}h_1(\bu,\bv) =& \lim_{\bu\rightarrow\bw} \frac{\sin(\theta_{\bu,\bv})\norm{\bv}}{2\pi\norm{\bu}}\left(I - \bar{\bu}\bar{\bu}^\top + \bar{\bn}_{\bv,\bu}\bar{\bn}_{\bv,\bu}^\top\right) \nonumber\\
    =& \lim_{\bu\rightarrow\bw} \frac{\norm{\bv}}{2\pi\norm{\bu}}\sin(\theta_{\bu,\bv})\bar{\bn}_{\bv,\bu}\bar{\bn}_{\bv,\bu}^\top~.
\end{align}
We will show that the second term above is the zero matrix. Note that:
\begin{align*}
\norm{\bn_{\bw,\bv}} &= \sqrt{\inner{\bar{\bv} - \cos(\theta_{\bw,\bv})\bar{\bw}, \bar{\bv} - \cos(\theta_{\bw,\bv})\bar{\bw}}} \\
& = \sqrt{1 - \cos^2(\theta_{\bw,\bv})} = \sin(\theta_{\bw,\bv})~,
\end{align*}
hence we have that $\sin(\theta_{\bu,\bv})\bar{\bn}_{\bv,\bu}\bar{\bn}_{\bv,\bu}^\top = \bn_{\bv,\bu}\bar{\bn}_{\bv,\bu}^\top$. Letting $\bx$ be some vectors with norm $1$, we have:
\begin{align*}
    \lim_{\bu\rightarrow\bw}\norm{\bn_{\bv,\bu}\bar{\bn}_{\bv,\bu}^\top\bx} \leq \lim_{\bu\rightarrow\bw}\norm{\bn_{\bv,\bu}}\cdot |\inner{\bar{\bn}_{\bv,\bu},\bx}|\leq \lim_{\bu\rightarrow\bw}\norm{\bn_{\bv,\bu}}  = 0~.
\end{align*}
This is true for every vector $\bx$, hence $\lim_{\bu\rightarrow\bw}\bn_{\bv,\bu}\bar{\bn}_{\bv,\bu} = \pmb{0}_{d\times d}$. Combining this with \eqref{eq:proof that h1 parallel is zero} and that $\bw\neq \pmb{0}$ we have that $\lim_{\bu\rightarrow\bw}h_1(\bu,\bv) = \pmb{0}_{d\times d}$. This proves the first item  of the lemma.

For the second item, recall that by our assumption the target vectors are orthogonal. Hence we have for $i\neq j$:
\begin{align*}
    h_2(\bw_i\bw_j) &= \frac{1}{2\pi}\left((\pi-\theta_{\bw_i,\bw_j})I + \bar{\bn}_{\bw_i,\bw_j} \bar{\bw}_j^\top + \bar{\bn}_{\bw_j,\bw_i}\bar{\bw}_i^\top\right) \\
    &= \frac{1}{2\pi}\left(\left(\pi-\frac{\pi}{2}\right)I + \frac{\bar{\bw}_i\bar{\bw}_j^\top - \cos(\theta_{\bw_i,\bw_j})\bar{\bw}_j\bar{\bw}_j^\top}{\sin(\theta_{\bw_i,\bw_j})} + \frac{\bar{\bw}_j\bar{\bw}_i^\top - \cos(\theta_{\bw_i,\bw_j})\bar{\bw}_i\bar{\bw}_i^\top}{\sin(\theta_{\bw_i,\bw_j})}\right) \\
    &= \frac{1}{4}I + \frac{1}{2\pi}\left(\bar{\bw}_i\bar{\bw}_j^\top + \bar{\bw}_j\bar{\bw}_i^\top\right)~.
\end{align*}
\end{proof}

We are now ready to prove the theorem:

\begin{proof}[Proof of \thmref{thm:n=k strongly convex}]
Let $\bw_1^n=(\bw_1,\dots,\bw_k)$ be some global minimum, by \lemref{lem:n=k hessian at global minimum} the Hessian at $\bw$ is equal to the sum of the following matrices:
\begin{equation}\label{eq:n=k hessian decomposition}
    H(\bw) = \begin{pmatrix}
    \frac{1}{2}I_d \ \dots \ \frac{1}{4}I_d \\
    \vdots \ \ddots \ \vdots \\
    \frac{1}{4}I_d\  \dots \ \frac{1}{2}I_d
    \end{pmatrix} + 
    \frac{1}{2\pi}\begin{pmatrix}
    0_d  &E_{1,2}  &\dots  &E_{1,n} \\
    E_{2,1} &\ddots &\   &\vdots \\
    \vdots  &\ &\ddots &E_{1,n-1} \\
    E_{n,1} &\dots &E_{n-1,n} &0_d
    \end{pmatrix},
\end{equation}
where $E_{i,j} = \bar{\bw}_i\bar{\bw}_j^\top + \bar{\bw}_j\bar{\bw}_i^\top$ is a $d\times d $ matrix. Recall that the Hessian can be viewed as a $k\times k$ block matrix with blocks of size $d\times d$. We will calculate the smallest eigenvalue of the two matrices in \eqref{eq:n=k hessian decomposition}, thus bounding the smallest eigenvalue of $H(\bw)$.

For the first matrix, the vectors $\begin{pmatrix} \be_i \\ \vdots \\ \be_i\end{pmatrix}$ is an eigenvector for every $i\in[d]$ with eigenvalue $\frac{k+1}{4}$. Also, the vectors $\begin{pmatrix}\pmb{0}\\ \vdots\\ \pmb{0} \\ \be_i \\ -\be_i \\ \pmb{0}\\ \vdots \\ \pmb{0} \end{pmatrix}$ where the $\be_i$ can be at any two consecutive coordinates, are eigenvectors with eigenvalue $\frac{1}{4}$. There are $d$ eigenvectors of the first kind, and $(k-1)\cdot d$ of the second kind. All of these vectors are linearly independent, thus we found $k\cdot d$ independent eigenvectors. This proves that the smallest eigenvalue of the first matrix is $\frac{1}{4}$.

For the second matrix we define a block vector $\tilde{\pmb{\alpha}}$ of size $k\cdot d$ as a vectors with $k$ coordinates, each coordinate is a vector of size $d$. Let $i,j\in[k]$ with $i\neq j$ and define the following block vectors:
\begin{itemize}
    \item $\tilde{\pmb{\alpha}}_i = \bar{\bw}_i$, $\tilde{\pmb{\alpha}}_j = \bar{\bw}_j$, and the rest of the coordinates of $\tilde{\pmb{\alpha}}$ are the zero vector.
    \item $\tilde{\pmb{\beta}}_i = \bar{\bw}_j$, $\tilde{\pmb{\beta}}_j = -\bar{\bw}_i$, and the rest of the coordinates of $\tilde{\pmb{\beta}}$ are the zero vector.
    \item $\tilde{\pmb{\gamma}}_i = \bar{\bw}_i$, $\tilde{\pmb{\gamma}}_j = \bar{\bw}_j$, and the rest of the coordinates of $\tilde{\pmb{\gamma}}$ are the zero vector.
    \item $\tilde{\pmb{\delta}}_i = \bar{\bw}_i$ for every $i\in[k]$
\end{itemize}
Note that $E_{i,j}\bar{\bw}_i = \bar{\bw}_i\bar{\bw}_j^\top\bar{\bw}_i + \bar{\bw}_j\bar{\bw}_i^\top\bar{\bw}_i = \bar{\bw}_j$, in the same manner $E_{i,j}\bar{\bw}_j = \bar{\bw}_i$ and for $l\in [k]$ with $l\neq i,~l\neq j$ we have $E_{i,j}\bar{\bw}_l = \pmb{0}$. Denoting the second matrix in \eqref{eq:n=k hessian decomposition} as A, we have that $A\tilde{\pmb{\alpha}} = -\tilde{\alpha},~ A\tilde{\pmb{\beta}}= -\tilde{\pmb{\beta}},~ A\tilde{\pmb{\gamma}} = \tilde{\pmb{\gamma}},~ A\tilde{\pmb{\delta}} = k\tilde{\pmb{\delta}}$. Hence the vectors $\tilde{\pmb{\alpha}}, \tilde{\pmb{\beta}}$ are eigenvectors for every $i\neq j$ with eigenvalue $-1$, the vectors $\tilde{\pmb{\gamma}}$ are eigenvectors for every $i\neq j$ with eigenvalue $1$, and $\tilde{\pmb{\delta}}$ is an eigenvector with eigenvalue $k$. If $d=k$ then these eigenvectors span the entire space, hence the smaller eigenvalue is $-1$. If $d>k$ we complete $\bw_{k+1},\dots,\bw_d$ to an orthogonal basis of the entire space, and add the eigenvectors which corresponds to the eigenvalue $0$. In both cases the smallest eigenvalue of A is $-1$.

Combining the above with \eqref{eq:n=k hessian decomposition} and letting $\bv$ be any vector with norm $1$, we have:
\begin{equation*}
    \bv^\top H(\bw) \bv = \bv^\top \begin{pmatrix}
    \frac{1}{2}I_d \ \dots \ \frac{1}{4}I_d \\
    \vdots \ \ddots \ \vdots \\
    \frac{1}{4}I_d\  \dots \ \frac{1}{2}I_d
    \end{pmatrix}\bv + \frac{1}{2\pi}\bv^\top\begin{pmatrix}
    &0_d  &E_{1,2}  &\dots  &E_{1,n} \\
    &E_{2,1} &\ddots &\   &\vdots \\
    &\vdots  &\ &\ddots &E_{n-1,n} \\
    &E_{n,1} &\dots &E_{n,n-1} &0_d
    \end{pmatrix}\bv \geq \frac{1}{4} - \frac{1}{2\pi} ~.
\end{equation*}
This proves that the Hessian is positive definite with minimal eigenvalue strictly larger than $\frac{1}{4} - \frac{1}{2\pi}$. Since the objective is twice differentiable, and the eigenvalue of a matrix is a continuous function, we have that the Hessian is positive definite in an open neighborhood of the global minimum. In particular, for any $0<\lambda < \frac{1}{4} - \frac{1}{2\pi}$ there is an open neighborhood of the global minimum for which the objective is $\lambda$-strongly convex.
\end{proof}

\subsection{Proofs from \subsecref{sec:over-parameterization} and \subsecref{subsec:OPSC}}\label{apen:proofs from OPSC}

\begin{figure}[H]
 	\centering
     {\includegraphics[trim={10cm 2cm 7.62cm 2cm},clip,scale=0.22]{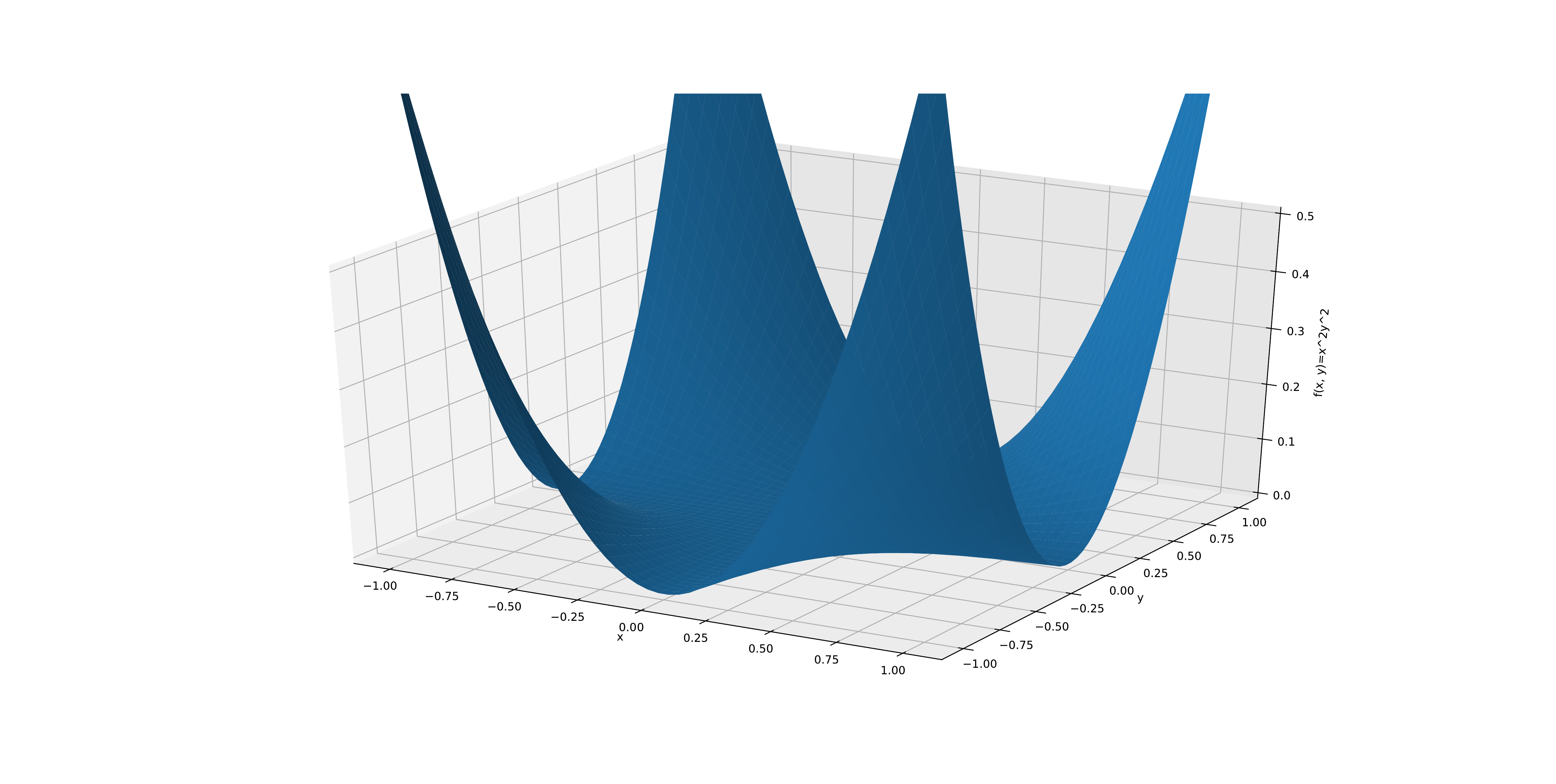}}
     \hskip 1cm
     \includegraphics[width = 2in, height = 2in]{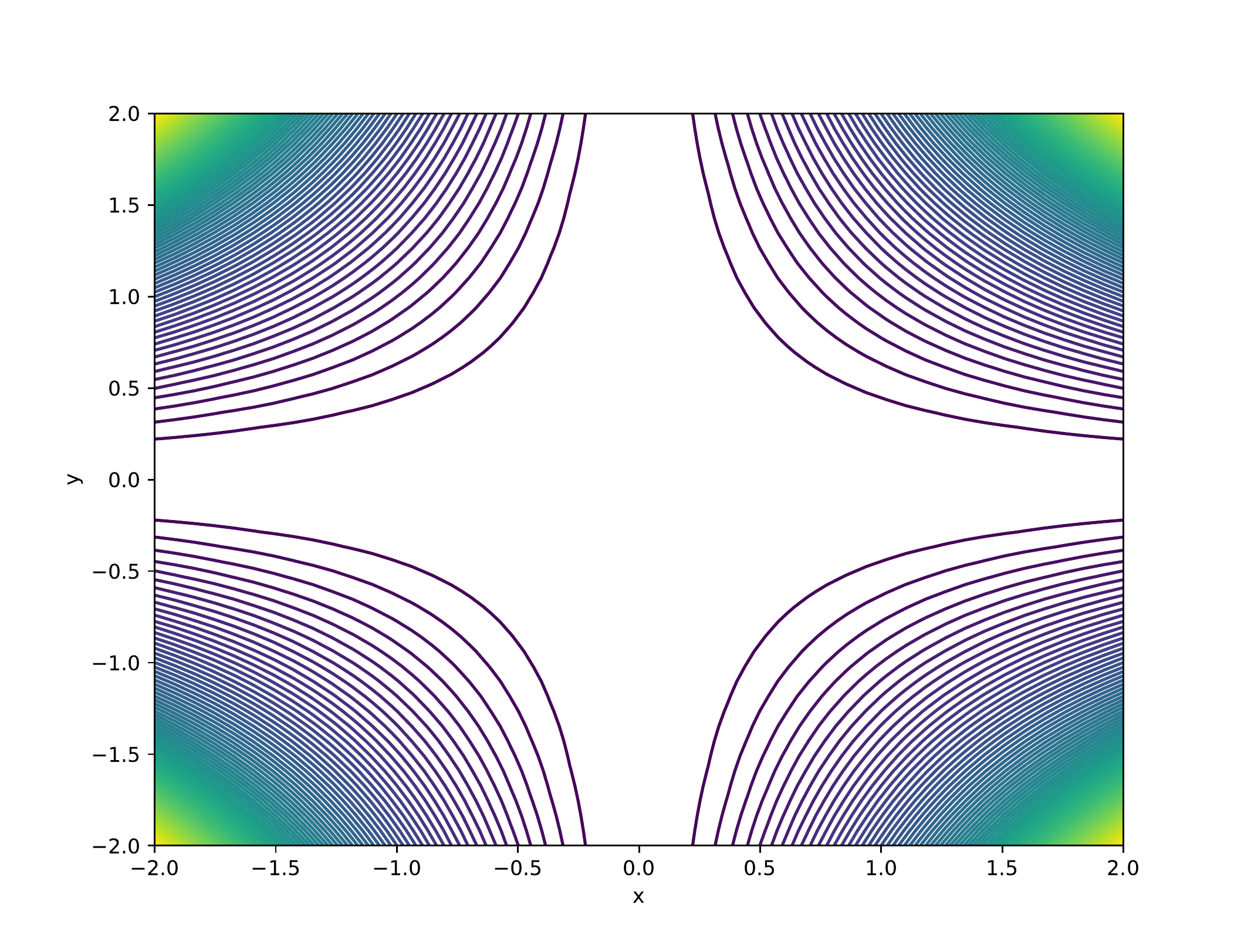}
     \caption{Plot and contour of the function $f(x,y)=x^2y^2$. This function is continuously differentiable and has a manifold of global minima $\{(x,y):x=0~\text{or}~y=0\}$.
     $f(x,y)$ is not locally convex almost anywhere (as the determinant of its Hessian is negative almost everywhere). It is also not one-point strongly convex w.r.t.\ any global minimum.}
     \label{fig:x2y2}
\end{figure}

See \figref{fig:x2y2} for a plot of the function $f(x,y)=x^2y^2$. Note that although this function is neither locally convex nor OPSC, it is OPSC in an $\epsilon$-orthogonal neighborhood of every global minima. Indeed, take some point $\tilde{\bw}=(0,y)$ with any $y\neq 0$, then $\tilde{\bw}$ is a global minimum of $f(x,y)$ for which $U_\epsilon^\perp(\tilde{\bw})$ is not empty. It can be easily seen that for any $\epsilon>0$ the function $f(x,y)$ is OPSC in $U_\epsilon^\perp(\tilde{\bw})$ with respect to $\tilde{\bw}$, with a strong convexity parameter of $\lambda=2$. This is actually true for any global minimum $(x,y)\neq (0,0)$ of $f(x,y)$. Thus, although this function is not convex and also not OPSC, it is OPSC in an $\epsilon$-orthogonal neighborhood of any global minimum except $(0,0)$. This function also does not satisfy the PL condition. Indeed, its global minimal value is $f^*=0$, and we have 
\[
    \norm{\nabla f(x,y)}^2 = 4\left(f(x,y)-f^*\right)\cdot \left\|\begin{pmatrix} x \\ y \end{pmatrix}\right\|^2~,
\]
where it is easy to see that there is no global constant $\lambda >0$ that satisfies the PL condition.

See \figref{fig:OPSC-orthogonal} for an intuition on an $\epsilon$-orthogonal neighborhood.

%
%


\begin{figure}[H]
\centering

  \begin{minipage}[c]{0.5\textwidth}
    \includegraphics[width=\textwidth]{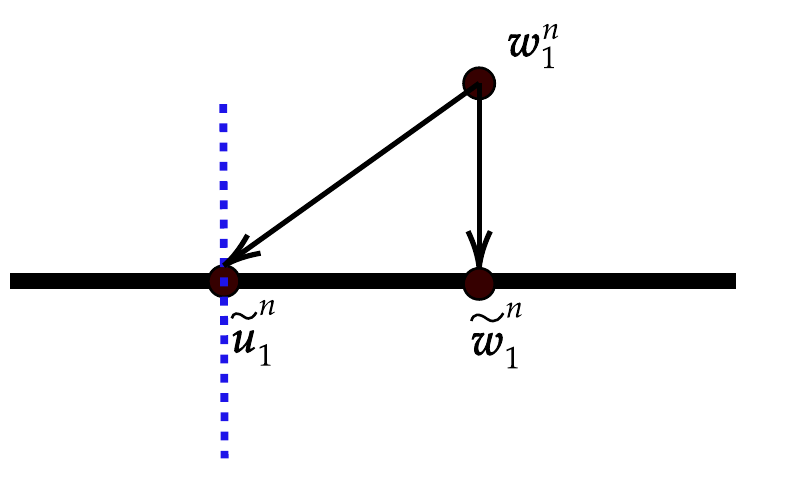}
  \end{minipage}\hfill
  \begin{minipage}[c]{0.5\textwidth}
    \caption{
 Suppose that the bold black line represents the manifold of global minima,  $\tilde{\bw}_1^n$ and $\tilde{\bu}_1^n$ are two different global minima and $\bw_1^n$ is some point. The blue dashed line represents the $\epsilon$-orthogonal neighborhood of $\tilde{\bu}_1^n$. In order to verify that the objective is OPSC we would have to decide with respect to which point it is OPSC. For example, from $\bw_1^n$ it would make more sense to check its OPSC parameter with respect to $\tilde{\bw}_1^n$ since it is its closest global minimum,
 rather than with respect to $\tilde{\bu}_1^n$. 
 }
  \end{minipage}


\label{fig:OPSC-orthogonal}
\end{figure}

\subsubsection{Proof of \thmref{thm:over-parameterization not convex}}
\begin{proof}
Suppose $\bw_1^n=(\bw_1,\dots,\bw_n)$ is a global minimum. Assume w.l.o.g that $\bw_1 = \alpha_1  \bv_1,~ \bw_2 = \alpha_2 \bv_1$ for some $\alpha_1,\alpha_2 > 0$, that is there are at least two neurons that correspond to $\bv_1$. Let $\epsilon > 0$, take $\bu$ to be some unit vector orthogonal to $\bv_1$ and define $\tilde{\bw}_1 = \alpha_1\epsilon \bu + \alpha_1 \bv_1,\ \tilde{\bw}_2 = \alpha_2\epsilon \bu + \alpha_2 \bv_1, \ \tilde{\bw}_3 = \bw_3,\dots \tilde{\bw}_n = \bw_n$. Note that $\tilde{\bw}_1$ and $\tilde{\bw}_2$ are in the same direction. 

We will calculate the Hessian at $\tilde{\bw}_1^n$. Recall that we view the Hessian as composed of $n\times n$ blocks, where each block is of size $d\times d$. By \lemref{lem:h2 parallel vectors} we have that the blocks w.r.t. neurons $\bw_1,\bw_2$ are $H(\tilde{\bw}_1^n)_{12} = H(\tilde{\bw}_1^n)_{21} = \frac{1}{2}I$. For the diagonal components of the Hessian, note that if $\bz_1$ and $\bz_2$ are parallel then for every non-zero vector $\bu$ we have that $h_1(\bu,\bz_1) + h_1(\bu,\bz_2) = (\norm{\bz_1}+\norm{\bz_2})h_1(\bu,\bar{\bz}_1)$. By \thmref{thm:hessian of objective} we have:
\begin{align}
    H(\tilde{\bw}_1^n)_{11} &= \frac{1}{2}I + \sum_{j\neq 1}h_1(\tilde{\bw}_1,\tilde{\bw}_j) - \sum_{l=1}^k h_1(\tilde{\bw}_1,\bv_l) \nonumber \\
    &=\frac{1}{2}I - (\norm{\tilde{\bw}_1} + \norm{\tilde{\bw}_2})h_1(\tilde{\bw}_1,\bv_1)\nonumber\\
    &= \frac{1}{2}I - (1+\epsilon)(\alpha_1+\alpha_2)h_1(\tilde{\bw}_1,\bv_1)~,
\end{align}
and in the same manner $H(\tilde{\bw})_{22}=\frac{1}{2}I - (1+\epsilon)(\alpha_1+\alpha_2)h_1(\tilde{\bw}_2,\bv_1)$. Also since $\tilde{\bw}_1$ and $\tilde{\bw}_2$ are parallel we have that $h_1(\tilde{\bw}_2,\bv_1) = \frac{\|\tilde{\bw}_1\|}{\|\tilde{\bw}_2\|}h_1(\tilde{\bw}_1,\bv_1)$. The matrix $h_1(\tilde{\bw}_2,\bv_1)$ has an eigenvalue equal to $\frac{\sin(\theta_{\tilde{\bw}_1,\bv_1})\|\bv_1\|}{\pi\|\tilde{\bw}_1\|}$. Note that this eigenvalue is positive since we define the angle to be $\theta_{\tilde{\bw}_1,\bv_1}\in [0,\pi]$. Taking $\bz$ to be a unit eigenvector corresponding to this eigenvalue, we have:

\begin{align*} 
    &\begin{pmatrix}
    \bz^\top \ -\bz^\top \ 0 \ \dots \ 0
    \end{pmatrix}H(\tilde{\bw}_1^n) \begin{pmatrix}
    \bz \\ -\bz \\ 0 \\ \vdots \\ 0
    \end{pmatrix}  = \bz^\top H(\tilde{\bw}_1^n)_{11} \bz^\top + \bz^\top H(\tilde{\bw}_1^n)_{22} \bz - \bz^\top H(\tilde{\bw}_1^n)_{12} \bz - \bz^\top H(\tilde{\bw}_1^n)_{21} \bz  \\
    = & \frac{1}{2} - (1+\epsilon)(\alpha_1+\alpha_2)\frac{\sin(\theta_{\tilde{\bw}_1,\bv_1})\|\bv_1\|}{\pi\|\tilde{\bw}_1\|} + \frac{1}{2} - \frac{\alpha_1}{\alpha_2}\cdot (1+\epsilon)(\alpha_1+\alpha_2)\frac{\sin(\theta_{\tilde{\bw}_1,\bv_1})\|\bv_1\|}{\pi\|\tilde{\bw}_1\|} - \frac{1}{2} - \frac{1}{2} \\
    = & -(1+\epsilon)\frac{\sin(\theta_{\tilde{\bw}_1,\bv_1})}{\pi}\cdot \frac{(\alpha_1 + \alpha_2)^2}{\alpha_1\alpha_2} <0~.
\end{align*}

This is true for every $\epsilon >0$, hence in every neighborhood of the global minimum we found a point where the Hessian is not PSD, meaning that the loss is not locally convex.
\end{proof}

\subsubsection{Proof of \thmref{thm:over-parameterization not OPSC}}
\begin{proof}
Let $\epsilon >0$. The idea of the proof is to show that there is ${\bw}_1^n\in U_\epsilon^\perp (\tilde{\bw}_1^n)$ such that the Hessian of the objective, projected in the direction ${\bw}_1^n - \tilde{\bw}_1^n$ is of magnitude $O(\epsilon)$. This means that there is no $\lambda >0$ such that the objective is $\lambda$-OPSC in an $\epsilon$-orthogonal neighborhood of the global minimum.

From the assumption that $n>k$, and by \thmref{thm:global_minima_form} we know that there are at least two vectors $\tilde{\bw}_i,\tilde{\bw}_j$ which are parallel. In particular, assume w.l.o.g that $\tilde{\bw}_1=\alpha_1 \bv_1,~\tilde{\bw}_2=\alpha_2\bv_1$ where $\alpha_1,\alpha_2 >0$. We look at the following point:
\[
{\bw}_1 = \alpha_1 \bv_1 + \epsilon \bv_2 ,\ {\bw}_2 = \alpha_2 \bv_1 - \epsilon \bv_2,\ \bw_3 = {\tilde{\bw}}_3,\dots, \bw_n = \tilde{\bw}_n~. 
\]
Recall that the target vectors are orthogonal, hence ${\bw}_1^n\in U_\epsilon^\perp(\tilde{\bw}_1^n)$. Using \thmref{thm:hessian of objective} we can calculate the Hessian at the above point in the direction of the global minimum:
\begin{align}
    &\frac{1}{\norm{{\bw}_1^n-\tilde{\bw}_1^n}^2}({\bw}_1^n-\tilde{\bw}_1^n)^\top H({\bw}_1^n)({\bw}_1^n-\tilde{\bw}_1^n) = \frac{1}{2} 
    \begin{pmatrix}
    - \bv_2 \\ \bv_2 \\ 0 \\  \vdots \\ 0
    \end{pmatrix}^\top 
    H({\bw}_1^n) \begin{pmatrix}
    - \bv_2 \\ \bv_2 \\ 0 \\ \vdots \\ 0
    \end{pmatrix} \nonumber\\
    & = \frac{1}{2}\Bigg(1 + \bv_2^\top h_1(\bw_1,\bw_2)\bv_2 +  \bv_2^\top h_1(\bw_2,\bw_1)\bv_2 
     - \bv_2^\top h_1(\bw_1,\tilde{\bv}_1)\bv_2 - \bv_2^\top h_1(\bw_2,\tilde{\bv}_1)\bv_2 \nonumber\\
    & - 2\bv_2h_2(\bw_1,\bw_2)\bv_2 \Bigg)~.\label{eq:n > k not one point strongly convex main equation}
\end{align}
The largest eigenvalue of $h_1({\bw_1,\bw_2})$ (see Lemma 9 in \cite{safran2017spurious}) is: 
\begin{equation*}
    \frac{\sin(\theta_{\bw_1,\bw_2})\|\tilde{w}_2\|}{\pi\|\bw_1\|} = \frac{\epsilon(\alpha_1 + \alpha_2)}{\pi \|\bw_1\|^2} = \frac{\epsilon(\alpha_1 + \alpha_2)}{\pi\alpha_1^2 + \pi\epsilon^2} =  O(\epsilon)
\end{equation*}
Hence $\bv_2^\top h_1({\bw_1,\bw_2})\bv_2 = O(\epsilon)$, and for the same reasoning we get that 
\[
\bv_2^\top h_1({\bw_2,\bw_1})\bv_2 = O(\epsilon),~ \bv_2^\top h_1(\bw_1,\bv_1)\bv_2=O(\epsilon), ~\bv_2^\top h_1(\bw_2,\bv_1)\bv_2 = O(\epsilon)~.
\]

For the last term of the Hessian we will need the following:
\begin{align*}
    & \cos(\theta_{\bw_1,\bw_2}) = \frac{\inner{\alpha_1 \bv_1 + \epsilon \bv_2, \alpha_2 \bv_1 - \epsilon \bv_2}}{\sqrt{(\alpha_1^2 + \epsilon^2)(\alpha_2^2 + \epsilon^2)}} = \frac{\alpha_1 \alpha_2 - \epsilon^2}{\sqrt{(\alpha_1^2 + \epsilon^2)(\alpha_2^2 + \epsilon^2)}}  \\
    &\sin(\theta_{\bw_1,\bw_2}) = \sqrt{1 -\frac{(\alpha_1 \alpha_2 - \epsilon^2)^2}{(\alpha_1^2 + \epsilon^2)(\alpha_2^2 + \epsilon^2)}} = \frac{\epsilon(\alpha_1 + \alpha_2)}{\sqrt{(\alpha_1^2 + \epsilon^2)(\alpha_2^2 + \epsilon^2)}}  \\
    & \theta_{\bw_1,\bw_2} = \arccos(\cos(\theta_{\bw_1,\bw_2})) = O(\epsilon)
\end{align*}
Using the above we can calculate last term in \eqref{eq:n > k not one point strongly convex main equation}:
\begin{align*}
    &\bv_2^\top h_2(\bw_1,\bw_2)\bv_2 = \frac{1}{2\pi}\Bigg( (\pi - \theta_{\bw_1,\bw_2})\inner{\bv_2,\bv_2} + \frac{\inner{\bar{\bw}_2,\bv_2}(\inner{\bar{\bw}_1,\bv_2} - \cos(\theta_{\bw_1,\bw_2})\inner{\bar{\bw}_2,\bv_2})}{\sin(\theta_{\bw_1,\bw_2})} + \\
    & + \frac{\inner{\bar{\bw}_1,\bv_2}(\inner{\bar{\bw}_2,\bv_2} - \cos(\theta_{\bw_1,\bw_2})\inner{\bar{\bw}_1,\bv_2})}{\sin(\theta_{\bw_1,\bw_2})}  \Bigg) = \frac{1}{2} + O(\epsilon).
\end{align*}
 
Hence in total we have:
\begin{equation*}
\frac{1}{\norm{\bw_1^n-\tilde{\bw}_1^n}^2}({\bw}_1^n-\tilde{\bw}_1^n)^\top H(\bw_1^n)(\bw_1^n-\tilde{\bw}_1^n) = \frac{1}{2}\left(1 + O(\epsilon) -1 -O(\epsilon)\right) = O(\epsilon)~.
\end{equation*}
\end{proof}

\subsubsection{Proof of \thmref{thm:no PL}}
\begin{proof}
    The proof method is similar to that of the proof of \thmref{thm:over-parameterization not OPSC}. We use the same point $\bw_1^n$ as in \thmref{thm:over-parameterization not OPSC} which is in an $\epsilon$-orthogonal neighborhood of the relevant global minima. For ease of notation let $\theta:=\theta_{\bw_1,\bw_2}$ and $\gamma_1:=\theta_{\bw_1,\bv},\ \gamma_2=\theta_{\bw_2,\bv}$.
    
    We first calculate the objective of \eqref{eq:objective normal dist} using the closed form in \cite{safran2017spurious} Section 4.1.1. Set $\alpha = \alpha_1+\alpha_2$, and note that all the terms cancel out, except for those which include $\bw_1$ and $\bw_2$:
    \begin{align}\label{eq:loss for PL with f(w,v) terms}
        &F(\bw_1^n) = f(\bw_1,\bw_1) + f(\bw_2,\bw_2) + f(\bw_1,\bw_2) + \nonumber\\
        &+ f(\bw_2,\bw_1) - 2f(\bw_1,\alpha\bv_1)-2f(\bw_2,\alpha\bv_2)+ f(\alpha\bv_1,\alpha\bv_1) 
    \end{align}
    where 
    \[
    f(\bw,\bv) = \mathbb{E}_{\bx\sim\mathcal{N}(0,I)}[\left[\left[\bw^\top\bx\right]_+\left[\bv^\top\bx\right]_+\right] = \frac{1}{2\pi}\norm{\bw}\norm{\bv}\left(\sin(\theta_{\bw,\bv}) + (\pi - \theta_{\bw,\bv})\cos(\theta_{\bw,\bv})\right)~.
    \]
    To calculate this term we will need to the following expressions (calculated the same way as in \thmref{thm:over-parameterization not OPSC}):
    \begin{align*}
        &\cos(\theta) = \frac{\alpha_1\alpha_2 - \epsilon^2}{\sqrt{(\alpha_1^2 + \epsilon^2)(\alpha_2^2 + \epsilon^2)}},~ &\sin(\theta) = \frac{\epsilon(\alpha_1 + \alpha_2)}{\sqrt{(\alpha_1^2 + \epsilon^2)(\alpha_2^2 + \epsilon^2)}} \\
        &\cos(\gamma_1) = \frac{\alpha_1}{\sqrt{\alpha_1^2 + \epsilon^2}} ,~ &\sin(\gamma_1) =  \frac{\epsilon}{\sqrt{\alpha_1+\epsilon^2}} \\
        &\cos(\gamma_2) = \frac{\alpha_2}{\sqrt{\alpha_2^2 + \epsilon^2}} ,~ &\sin(\gamma_2) =  \frac{\epsilon}{\sqrt{\alpha_1+\epsilon^2}} \\
        & \norm{\bw_1}^2 = \alpha_1^2 + \epsilon^2,~ &\norm{\bw_2}^2 = \alpha_2^2 + \epsilon^2
    \end{align*}
    
    Also note, that using the taylor series of $\arccos$ in the same manner as the proof of \thmref{thm:over-parameterization not OPSC} we get that: $\theta = O(\epsilon),~\gamma_1 = O(\epsilon),~ \gamma_2 = O(\epsilon)$. The expression $f(\bw,\bv)$ depends only on the norms of $\bw$ and $\bv$ and on the angle between them, and also $f(\bw,\bw) = \frac{1}{2}\norm{\bw}^2$. Thus, returning to \eqref{eq:loss for PL with f(w,v) terms} we get:
    \begin{align}
        F(\bw_1^n) &=\frac{\norm{\bw_1}^2}{2} + \frac{\norm{\bw_2}^2}{2} + \frac{\norm{\bw_1}\norm{\bw_2}}{\pi}\left(\sin(\theta) + (\pi-\theta)\cos(\theta)\right) -\nonumber\\ &-\frac{\alpha\norm{\bw_1}}{\pi}\left(\sin(\gamma_1) + (\pi-\gamma_1)\cos(\gamma_1)\right) -\frac{\alpha\norm{\bw_2}}{\pi}\left(\sin(\gamma_2) + (\pi-\gamma_2)\cos(\gamma_2)\right)+ \frac{\alpha^2}{2} \nonumber\\
        & = \frac{1}{\pi}\left(\epsilon\alpha_1 + \epsilon\alpha_2 + \theta\epsilon^2 - \theta\alpha_1\alpha_2 - 2\alpha\epsilon + \alpha\alpha_1\gamma_1 + \alpha\alpha_2\gamma_2 \right) = \Omega(\epsilon)~.
    \end{align}

Next, we calculate the gradient of the objective using the closed form in in \cite{safran2017spurious} Section 4.1.1. 
\begin{align*}
    \left(\nabla F(\bw_1^n))\right)_1 = \frac{1}{2}\bw_1 + g(\bw_1,\bw_2) - g(\bw_1,\alpha\bv_1)
\end{align*}
where :
\[
g(\bw,\bv) = \frac{1}{2\pi}\left(\norm{\bv}\sin(\theta_{\bw,\bv}) + (\pi - \theta_{\bw,\bv})\bv \right)~.
\]
Hence the norm of the gradient of $\bw_1$ is:
\begin{align}\label{eq:PL proof bound on norm of gradient}
    \norm{ \left(\nabla F(\bw_1^n))\right)_1}^2 &= \Bigg\|\frac{1}{2}\bw_1 + \frac{1}{2\pi}\left(\frac{\norm{\bw_2}}{\norm{\bw_1}}\sin(\theta)\bw_1 + (\pi - \theta)\bw_2  \right) -\nonumber\\
    -& \frac{1}{2\pi}\left( \frac{\alpha}{\norm{\bw_1}}\sin(\gamma_1)\bw_1 + (\pi-\gamma_1)\alpha\bv_1  \right) \Bigg\|^2\nonumber \\
    &=\left\| \frac{\norm{\bw_2}\sin(\theta)}{2\pi\norm{\bw_1}}\bw_1 - \frac{\theta}{2\pi}\bw_2 - \frac{\alpha\sin(\gamma_1)}{2\pi\norm{\bw_1}}\bw_1 + \frac{\alpha\gamma_1}{2\pi}\bv_1  \right\|^2 = O(\epsilon^2)~.
\end{align}
In the same manner as in \eqref{eq:PL proof bound on norm of gradient} we can show that also the norm of every other coordinate of the gradient of the objective is $O(\epsilon^2)$, hence we also have that $\norm{ \nabla F(\bw_1^n)}^2 = O(\epsilon^2)$, where here the $O$ notation hides a linear term in $n$ (note that $F(\bw_1^n)$ does not depend on $n$). In particular for every $\lambda >0$  we can find $\epsilon >0$ such that  $\norm{ \nabla F(\bw_1^n))}^2  < \lambda\cdot(F(\bw_1^n) - f^*)$ (Recall that $f^*$ is the value at the global minimum which is $0$). This shows that the PL condition does not hold, even in an $\epsilon$-orthogonal neighborhood of the global minimum.
\end{proof}

\section{Proofs from \subsecref{sec:OPSC in most directions}}

The Hessian at $\bw_1^n = (\bw_{i,j})_{i,j=1}^{k,m}$ in the direction of global minimum $\tilde{\bw}_1^n = (\tilde{\bw}_{i,j})_{i,j=1}^{k,m}$ is (recall that $\bw_1^n - \tilde{\bw}_1^n = \bg_1^n$):
\begin{align}\label{eq:full hessian directed to the global minimum}
    \bg_1^{n\top} H(\bw_1^n)\bg_1^n &=\sum_{i,j=1}^{k,m}\Bigg( \frac{1}{2}\norm{\bg_{i,j}}^2 + \sum_{\substack{a,b=1 \\ (a,b)\neq (i,j)}}^{k,m} \bg_{i,j}^\top h_1(\bw_{i,j},\bw_{a,b})\bg_{i,j} - \sum_{l=1}^k \bg_{i,j}^\top h_1(\bw_{i,j},\bv_l)\bg_{i,j} +\nonumber\\
    & +\sum_{\substack{a,b=1 \\ (a,b)\neq (i,j)}}^{k,m}\bg_{i,j}^\top h_2(\bw_{i,j},\bw_{a,b})\bg_{a,b}\Bigg)
\end{align}

The proof idea of \thmref{thm:almost OPSC} is to bound each term in \eqref{eq:full hessian directed to the global minimum} separately. Since we look at a point close to the global minimum, each $\bw_{i,j}$ should be close to its target vector $\bv_i$, hence most of the expressions will \emph{almost} cancel out, up to an $O(\sqrt{\epsilon})$ factor. 

For the proof we denote the following angles for ease of notations: 
\begin{enumerate}
    \item $\theta_{i,j}^{a,b}$: the angle between $\bw_{i,j}$ and $\bw_{a,b}$ for $i,a\in[k],~j,b\in[m]$.
    \item $\gamma_{i,j}^l$: the angle between $\bw_{i,j}$ and $v_l$ for $i,l\in[k],~j\in[m]$.
\end{enumerate}
For every $i,j$ we can write $\bw_{i,j} = \frac{1}{m}\bv_i + \bg_{i,j}$. Assume in the following that for some $\epsilon >0$, we have that $\bw_1^n\in U_\epsilon^\perp(\tilde{\bw}_1^n)$, hence we have that $\norm{\bg_{i,j}}\leq \epsilon$ and $\bg_{i,j}\perp \bv_i$. We will need the following terms for $i\in[k],~j,l\in[m]$:

\begin{align}
    & \|\bw_{i,j}\| = \left\|\frac{1}{m}\bv_i + \bg_{i,j}\right\| = \sqrt{\frac{1}{m^2} + \inner{\bv_i,\bg_{i,j}} + \norm{\bg_{i,j}}^2} = \sqrt{\frac{1}{m^2} + \norm{\bg_{i,j}}^2} =  \frac{1}{m} + O(\epsilon)\label{eq:norm of w i j}\\
    &\cos(\theta_{i,j}^{i,l}) = \frac{\inner{\frac{1}{m}\bv_i + \bg_{i,j}, \frac{1}{m}\bv_i + \bg_{i,l}}}{\norm{\bw_{i,j}}\norm{\bw_{i,l}}} = \frac{\frac{1}{m^2}  + \inner{\bg_{i,j},\bg_{i,l}}}{\frac{1}{m^2} + O(\epsilon)} = 1 + O(\epsilon) \label{eq:cos w same target}\\
    &\sin(\theta_{i,j}^{i,l}) = \sqrt{1 - \cos^2(\theta_{i,j}^{i,l})} = O(\sqrt{\epsilon})\label{eq:sin w same target}\\
    &\cos(\gamma_{i,j}^i) = \frac{\inner{\frac{1}{m}\bv_i + \bg_{i,j},\bv_i}}{\norm{\bw_{i,j}}} = \frac{\frac{1}{m}}{\frac{1}{m} + O(\epsilon)} = 1 + O(\epsilon)\\
    & \sin(\gamma_{i,j}^i) = \sqrt{1 - \cos^2(\gamma_{i,j}^i)} = O(\sqrt{\epsilon})\label{eq:sin w v same target}
\end{align}

For $i,a\in[k]$ and $j,b\in[m]$ with $i\neq a$ we have that:
\begin{align}
    &\cos(\theta_{i,j}^{a,b}) = \frac{\inner{\frac{1}{m}\bv_i + \bg_{i,j}, \frac{1}{m}\bv_a + \bg_{a,b}}}{\norm{\bw_{i,j}}\norm{\bw_{a,b}}} = \frac{\inner{\bg_{i,j},\bv_a} + \inner{\bg_{a,b},\bv_i} + \inner{\bg_{i,j},\bg_{a,b}}}{\frac{1}{m^2} + O(\epsilon)} = O(\epsilon)\label{eq:cos w different target}\\
    &\sin(\theta_{i,j}^{a,b}) = \sqrt{1 - \cos^2(\theta_{i,j}^{a,b})} = 1 + O(\sqrt{\epsilon})\label{eq:sin w different target}\\
    &\cos(\gamma_{i,j}^a) = \frac{\inner{\frac{1}{m}\bv_i + \bg_{i,j},\bv_a}}{\norm{\bw_{i,j}}} = \frac{\inner{\bg_{i,j},\bv_a}}{\frac{1}{m} + O(\epsilon)} = O(\epsilon)\label{eq:cos w v different target}\\
    &\sin(\gamma_{i,j}^a) = \sqrt{1 - \cos^2(\gamma_{i,j}^a)} = 1 + O(\sqrt{\epsilon})\label{eq:sin w v different target}
\end{align}

We will first bound the terms in \eqref{eq:full hessian directed to the global minimum} which are related to $h_1$.
\begin{lemma}\label{lem:h 1 same target}
For every $i\in[k]$ and $j,l\in[m]$ we have that:
\begin{enumerate}
    \item $\bg_{i,j}^\top h_1(\bw_{i,j},\bw_{i,l})\bg_{i,j} = O\left(\epsilon^{2.5}\right)$
    \item $\bg_{i,j}^\top h_1(\bw_{i,j},\bv_{i})\bg_{i,j} = O\left(\epsilon^{2.5}\right)$
\end{enumerate} 
\end{lemma}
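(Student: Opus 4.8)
The plan is to bound each of the two scalar quantities by the operator norm of the relevant $h_1$ matrix times $\norm{\bg_{i,j}}^2$, and then to observe that on $U_\epsilon^\perp(\tilde\bw_1^n)$ that operator norm is itself only $O(\sqrt\epsilon)$ while $\norm{\bg_{i,j}}^2\le\epsilon^2$, which produces the claimed $O(\epsilon^{2.5})$.

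First I would establish a crude operator-norm estimate for $h_1$: from \eqref{eq:h1}, the matrix factor $I-\bar{\bw}\bar{\bw}^\top+\bar{\bn}_{\bv,\bw}\bar{\bn}_{\bv,\bw}^\top$ is the sum of the orthogonal projector $I-\bar{\bw}\bar{\bw}^\top$ and the rank-one projector $\bar{\bn}_{\bv,\bw}\bar{\bn}_{\bv,\bw}^\top$, hence has operator norm at most $2$, and therefore
\[
    \norm{h_1(\bw,\bv)}\ \le\ \frac{\norm{\bv}\sin(\theta_{\bw,\bv})}{\pi\norm{\bw}}
\]
for all $\bw,\bv\neq\mathbf{0}$ (this also covers the degenerate case $\theta_{\bw,\bv}\in\{0,\pi\}$, in which $h_1$ is defined by continuity to be $\pmb{0}_{d\times d}$ as in \lemref{lem:hessian is differentiable}, matching the vanishing right-hand side). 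Consequently $|\bg_{i,j}^\top h_1(\bw,\bv)\bg_{i,j}|\le \frac{\norm{\bv}\sin(\theta_{\bw,\bv})}{\pi\norm{\bw}}\norm{\bg_{i,j}}^2$.

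For item~1 I would plug in $\bw=\bw_{i,j}$ and $\bv=\bw_{i,l}$. By \eqref{eq:norm of w i j}, $\norm{\bw_{i,l}}=\tfrac1m+O(\epsilon)$ and $\norm{\bw_{i,j}}=\tfrac1m+O(\epsilon)\ge\tfrac1{2m}$ once $\epsilon$ is small enough, so the prefactor $\norm{\bw_{i,l}}/(\pi\norm{\bw_{i,j}})$ is $O(1)$ (polynomial in $m$); by \eqref{eq:sin w same target}, $\sin(\theta_{i,j}^{i,l})=O(\sqrt\epsilon)$; and $\norm{\bg_{i,j}}^2\le\epsilon^2$ because $\bw_1^n\in U_\epsilon^\perp(\tilde{\bw}_1^n)$. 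Multiplying the three estimates gives $O(\epsilon^{2.5})$. For item~2 I would repeat the same computation with $\bv=\bv_i$: now $\norm{\bv_i}=1$ and $\sin(\gamma_{i,j}^i)=O(\sqrt\epsilon)$ by \eqref{eq:sin w v same target}, so $|\bg_{i,j}^\top h_1(\bw_{i,j},\bv_i)\bg_{i,j}|\le \frac{\sin(\gamma_{i,j}^i)}{\pi\norm{\bw_{i,j}}}\norm{\bg_{i,j}}^2=O(\epsilon^{2.5})$ as well.

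There is no real obstacle here; the only thing requiring care is to bound the matrix factor of $h_1$ by a constant rather than trying to extract cancellation from it, so that the entire $O(\sqrt\epsilon)$ saving comes from the explicit $\sin(\theta_{\bw,\bv})$ prefactor combined with the near-parallelism estimates \eqref{eq:sin w same target} and \eqref{eq:sin w v same target} — and to retain the lower bound $\norm{\bw_{i,j}}\ge\tfrac1{2m}$ so that the $1/\norm{\bw}$ factor stays under control, which is where the poly$(m)$ dependence hidden by the $O(\cdot)$ notation enters.
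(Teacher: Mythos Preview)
Your proof is correct and follows essentially the same approach as the paper: the paper cites an external lemma (Lemma~9 in \cite{safran2017spurious}) giving the largest eigenvalue of $h_1(\bw,\bv)$ as $\frac{\sin(\theta_{\bw,\bv})\norm{\bv}}{\pi\norm{\bw}}$, while you obtain the same operator-norm bound directly from the structure of the matrix factor, and both then multiply by $\norm{\bg_{i,j}}^2\le\epsilon^2$ and invoke \eqref{eq:sin w same target}/\eqref{eq:sin w v same target}.
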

\begin{proof}
By Lemma 9 of \cite{safran2017spurious} we know that the largest eigenvalue of $h_1(\bw,\bv)$ is $\frac{\sin(\theta_{\bw,\bv})\norm{\bv}}{\bw}$. Hence we have that:
\begin{align*}
    \bg_{i,j}^\top h_1(\bw_{i,j},\bw_{i,l})\bg_{i,j} \leq \norm{\bg_{i,j}}^2 \frac{\sin(\theta_{i,j}^{i,l})\norm{\bw_{i,l}}}{\norm{\bw_{i,j}}} = O\left(\epsilon^{2.5}\right)
\end{align*}
where we used \eqref{eq:norm of w i j}, \eqref{eq:sin w same target} and that $\norm{\bg_{i,j}}= O(\epsilon)$.

The second part is the same as the first, where we use \eqref{eq:sin w v same target}.
\end{proof}

Now we can bound all the terms in \eqref{eq:full hessian directed to the global minimum} related to $h_1$, leaving only on $O(\sqrt{\epsilon})$ term. Note that in the main theorem we will divide the full expression by the sum of the norms of $\bg_{i,j}$, which is of magnitude $O(\epsilon^2)$.

\begin{lemma}\label{lem:h1 sum}
For every $i\in [k],j\in[m]$ we have that:
\begin{align}
    \sum_{\substack{a,b=1 \\ (a,b)\neq (i,j)}}^{k,m} \bg_{i,j}^\top h_1(\bw_{i,j},\bw_{a,b})\bg_{i,j} - \sum_{l=1}^k \bg_{i,j}^\top h_1(\bw_{i,j},\bv_l)\bg_{i,j} = O\left(\epsilon^{2.5}\right)
\end{align}
\end{lemma}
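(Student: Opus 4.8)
The plan is to split each of the two sums into an ``aligned'' part (indices pointing toward the same target $\bv_i$ as the fixed neuron $\bw_{i,j}$) and a ``cross'' part (indices pointing toward $\bv_a$, $a\ne i$), dispose of the aligned part using \lemref{lem:h 1 same target}, and exhibit a near-cancellation within the cross part. Write the first sum as $\sum_{b\ne j}\bg_{i,j}^\top h_1(\bw_{i,j},\bw_{i,b})\bg_{i,j}+\sum_{a\ne i}\sum_{b=1}^m\bg_{i,j}^\top h_1(\bw_{i,j},\bw_{a,b})\bg_{i,j}$, and the second sum as $\bg_{i,j}^\top h_1(\bw_{i,j},\bv_i)\bg_{i,j}+\sum_{a\ne i}\bg_{i,j}^\top h_1(\bw_{i,j},\bv_a)\bg_{i,j}$. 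Each of the $m-1$ terms in $\sum_{b\ne j}$ is $O(\epsilon^{2.5})$ by Part 1 of \lemref{lem:h 1 same target}, and the single term $\bg_{i,j}^\top h_1(\bw_{i,j},\bv_i)\bg_{i,j}$ is $O(\epsilon^{2.5})$ by Part 2; since the hidden constants are permitted to grow polynomially in $m$, their total contribution is $O(\epsilon^{2.5})$. It then remains to show that for each fixed $a\ne i$,
\[
\sum_{b=1}^m \bg_{i,j}^\top h_1(\bw_{i,j},\bw_{a,b})\bg_{i,j}\;-\;\bg_{i,j}^\top h_1(\bw_{i,j},\bv_a)\bg_{i,j}\;=\;O(\epsilon^{2.5}),
\]
after which summing over the $k-1$ values of $a$ completes the proof.

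For this cross-term estimate I would use that $h_1$ is positively homogeneous of degree one in its second argument, so $h_1(\bw_{i,j},\bv_a)=m\,h_1(\bw_{i,j},\tfrac1m\bv_a)=\sum_{b=1}^m h_1(\bw_{i,j},\tfrac1m\bv_a)$; hence the displayed difference equals $\bg_{i,j}^\top\big(\sum_{b=1}^m[h_1(\bw_{i,j},\bw_{a,b})-h_1(\bw_{i,j},\tfrac1m\bv_a)]\big)\bg_{i,j}$, and since $\bw_{a,b}=\tfrac1m\bv_a+\bg_{a,b}$ with $\norm{\bg_{a,b}}\le\epsilon$ it suffices to bound the operator norm of $\sum_{b=1}^m[h_1(\bw_{i,j},\bw_{a,b})-h_1(\bw_{i,j},\tfrac1m\bv_a)]$ by $O(\sqrt\epsilon)$. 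This is done by substituting the closed form \eqref{eq:h1} and invoking the estimates already recorded: $\norm{\bw_{i,j}}=\tfrac1m+O(\epsilon)$ from \eqref{eq:norm of w i j}, and for $a\ne i$ the estimates \eqref{eq:cos w different target}--\eqref{eq:sin w v different target}, which give $\cos\theta_{i,j}^{a,b}=O(\epsilon)$, $\sin\theta_{i,j}^{a,b}=1+O(\sqrt\epsilon)$ (and likewise with $\bv_a$ in place of $\bw_{a,b}$). The scalar prefactor $\tfrac{\sin(\cdot)\norm{\cdot}}{2\pi\norm{\bw_{i,j}}}$ of both $h_1(\bw_{i,j},\bw_{a,b})$ and $h_1(\bw_{i,j},\tfrac1m\bv_a)$ is therefore $\tfrac1{2\pi}+O(\sqrt\epsilon)$, while the matrix factor $I-\bar\bw_{i,j}\bar\bw_{i,j}^\top+\bar\bn_{\cdot,\bw_{i,j}}\bar\bn_{\cdot,\bw_{i,j}}^\top$ is the same for both up to $O(\sqrt\epsilon)$ in operator norm: indeed $\bar\bw_{a,b}=\bv_a+O(\epsilon)$, so $\bn_{\bw_{a,b},\bw_{i,j}}=\bar\bw_{a,b}-\cos(\theta_{i,j}^{a,b})\bar\bw_{i,j}=\bv_a+O(\epsilon)$, hence $\bar\bn_{\bw_{a,b},\bw_{i,j}}\bar\bn_{\bw_{a,b},\bw_{i,j}}^\top=\bv_a\bv_a^\top+O(\sqrt\epsilon)=\bar\bn_{\frac1m\bv_a,\bw_{i,j}}\bar\bn_{\frac1m\bv_a,\bw_{i,j}}^\top+O(\sqrt\epsilon)$. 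Summing over $b$ and absorbing the factor $m$ into the hidden constant gives operator norm $O(\sqrt\epsilon)$, and sandwiching between $\bg_{i,j}$ and $\bg_{i,j}$ (of norm at most $\epsilon$) yields the claimed $O(\epsilon^{2.5})$.

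The only genuinely delicate point is controlling the error propagation through the normalized vectors $\bar\bn$ appearing in \eqref{eq:h1}, where one must be sure no denominator degenerates; this is precisely why the split is made, since for $a\ne i$ the pairs $(\bw_{i,j},\bw_{a,b})$ and $(\bw_{i,j},\bv_a)$ are uniformly bounded away from the parallel configurations $\{0,\pi\}$ (their angles are $\tfrac\pi2+O(\epsilon)$), so $h_1(\bw_{i,j},\cdot)$ is smooth with controlled modulus near there and all the big-$O$ manipulations are legitimate, whereas the nearly parallel case $a=i$ is handled separately through \lemref{lem:h 1 same target}. Aside from this, the argument is routine bookkeeping of the $O(\sqrt\epsilon)$ remainders and of the polynomially-in-$m,k$ many terms being summed; in fact a slightly more careful accounting (using $\bar\bw_{a,b}=\bv_a+O(\epsilon)$ directly and the smoothness of $h_1$) would even give $O(\epsilon^3)$, which is more than enough.
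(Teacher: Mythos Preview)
Your proposal is correct and follows essentially the same approach as the paper: both split into the aligned part ($a=i$), handled by \lemref{lem:h 1 same target}, and the cross part ($a\neq i$), where the $m$ terms $h_1(\bw_{i,j},\bw_{a,b})$ are matched against $h_1(\bw_{i,j},\bv_a)$ via the homogeneity $h_1(\bw_{i,j},\bv_a)=m\,h_1(\bw_{i,j},\tfrac1m\bv_a)$. The only cosmetic difference is that the paper evaluates the quadratic forms $\bg_{i,j}^\top h_1(\cdot,\cdot)\bg_{i,j}$ explicitly (obtaining $\tfrac{1}{2\pi}(\|\bg_{i,j}\|^2+\inner{\bg_{i,j},\bv_a}^2)+O(\epsilon^{2.5})$ for each $\bw_{a,b}$ term and $\tfrac{m}{2\pi}(\|\bg_{i,j}\|^2+\inner{\bg_{i,j},\bv_a}^2)+O(\epsilon^{2.5})$ for the $\bv_a$ term) and then cancels, whereas you bound the operator norm of the matrix difference directly; your route is slightly slicker and, as you note, actually gives $O(\epsilon^3)$ for the cross part with a more careful accounting of the $\sin$ estimate.
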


\begin{proof}
Let $i,a\in[k]$ and $j,b\in[m]$ where $i\neq a$. First we have:
\begin{align}
     &\frac{\inner{\bg_{i,j}, \bw_{i,j}}}{\norm{\bw_{i,j}}} \stackrel{(\ref{eq:norm of w i j})}{=} \frac{\inner{\bg_{i,j},\frac{1}{m}\bv_i + \bg_{i,j}}}{\frac{1}{m} + O(\epsilon)} = \frac{\norm{\bg_{i,j}}^2}{\frac{1}{m} + O(\epsilon)} = O(\epsilon^2)\label{eq:g i j times w i j} \\
     &\frac{\inner{\bg_{i,j}, \bar{\bw}_{a,b} - \cos(\theta_{i,j}^{a,b})\bar{\bw}_{i,j}}}{\sin(\theta_{i,j}^{a,b})} = \frac{\inner{\bg_{i,j}, \frac{1}{m}\bv_a + \bg_{a,b}}}{\norm{\bw_{a,b}}\sin(\theta_{i,j}^{a,b})} - \frac{\cos(\theta_{i,j}^{a,b})\inner{\bg_{i,j}, \frac{1}{m}\bv_i + \bg_{i,j}}}{\norm{\bw_{i,j}}\sin(\theta_{i,j}^{a,b})} \nonumber \\
     &\stackrel{(\ref{eq:cos w different target}),(\ref{eq:sin w different target})}{=} \frac{\frac{1}{m}\inner{\bg_{i,j},\bv_a} + O(\epsilon^2)}{\left(\frac{1}{m} + O(\epsilon)\right)\cdot (1+O(\sqrt{\epsilon}))} - \frac{\norm{\bg_{i,j}}^2\cdot O(\epsilon)}{\left(\frac{1}{m} + O(\epsilon)\right)\cdot (1+O(\sqrt{\epsilon}))} = \inner{\bg_{i,j},\bv_a} + O(\epsilon^{1.5})\label{eq: g i j times n w  a b w i j}
\end{align}

Using the function $h_1$ we have:
\begin{align}\label{eq:h1 of wij and wab}
    \bg_{i,j}^\top h_1(\bw_{i,j},\bw_{a,b})\bg_{i,j} &= \frac{\sin(\theta_{i,j}^{a,b})\norm{\bw_{a,b}}}{2\pi \norm{\bw_{i,j}}}\left(\norm{\bg_{i,j}}^2 - \frac{\inner{\bg_{i,j}, \bw_{i,j}}^2}{\norm{\bw_{i,j}}^2} + \frac{\inner{\bg_{i,j}, \bar{\bw}_{a,b} - \cos(\theta_{i,j}^{a,b})\bar{\bw}_{i,j}}^2}{\sin^2(\theta_{i,j}^{a,b})}\right) \nonumber \nonumber\\
    & \stackrel{(\ref{eq:g i j times w i j}), (\ref{eq: g i j times n w  a b w i j})}{=} \frac{\sin(\theta_{i,j}^{a,b})\norm{\bw_{a,b}}}{2\pi \norm{\bw_{i,j}}}\left(\norm{\bg_{i,j}}^2 + O(\epsilon^4) + \inner{\bg_{i,j},\bv_a}^2 + O(\epsilon^{2.5}) \right) \nonumber\\
    & \stackrel{(\ref{eq:norm of w i j}),(\ref{eq:sin w different target})}{=} \frac{\left(1 + O(\sqrt{\epsilon})\right)\cdot \left(\frac{1}{m} + O(\epsilon)\right)}{2\pi \left(\frac{1}{m} + O(\epsilon)\right)}\left(\norm{\bg_{i,j}}^2 + \inner{\bg_{i,j},\bv_a}^2 + O(\epsilon^{2.5}) \right) \nonumber \\
    &=\frac{1}{2\pi}\left(\norm{\bg_{i,j}}^2 + \inner{\bg_{i,j},\bv_a}^2\right) + O(\epsilon^{2.5})
\end{align}
In the same manner for $\bv_a$ we have (recall that $\norm{\bv_a}=1$):

\begin{align}
    &\frac{\inner{\bg_{i,j}, \bv_a - \cos(\gamma_{i,j}^{a})\bar{\bw}_{i,j}}^2}{\sin^2(\gamma_{i,j}^{a})} = \frac{\inner{\bg_{i,j},\bv_a}}{\sin(\gamma_{i,j}^a)} - \frac{\cos(\gamma_{i,j}^a)\inner{\bg_{i,j},\frac{1}{m}\bv_i + \bg_{i,j}}}{\norm{\bw_{i,j}}\sin(\gamma_{i,j}^a)}\nonumber \\
    & \stackrel{(\ref{eq:cos w v different target}),(\ref{eq:sin w v different target})}{=}\frac{\inner{\bg_{i,j},\bv_a}}{1+O(\epsilon)} - \frac{\norm{\bg_{i,j}}^2\cdot O(\epsilon)}{\left(\frac{1}{m} + O(\epsilon)\right)\cdot (1+O(\sqrt{\epsilon}))} = \inner{\bg_{i,j},\bv_a} + O(\epsilon^{1.5})\label{eq:g i j times n v a w i j}
\end{align}
Hence we get:
\begin{align}\label{eq:h1 of wij and va}
    \bg_{i,j}^\top h_1(\bw_{i,j},\bv_a)\bg_{i,j} &= \frac{\sin(\gamma_{i,j}^{a})}{2\pi \norm{\bw_{i,j}}}\left(\norm{\bg_{i,j}}^2 - \frac{\inner{\bg_{i,j}, \bw_{i,j}}^2}{\norm{\bw_{i,j}}^2} + \frac{\inner{\bg_{i,j}, \bv_a - \cos(\gamma_{i,j}^{a})\bar{\bw}_{i,j}}^2}{\sin^2(\gamma_{i,j}^{a})}\right) \nonumber\\
    &\stackrel{(\ref{eq:g i j times w i j}),(\ref{eq:g i j times n v a w i j})}{=}\frac{\sin(\gamma_{i,j}^{a})}{2\pi \norm{\bw_{i,j}}}\left(\norm{\bg_{i,j}}^2 + O(\epsilon^4) + \inner{\bg_{i,j},\bv_a}^2 + O(\epsilon^{2.5}) \right) \nonumber\\
    &\stackrel{(\ref{eq:norm of w i j}),(\ref{eq:sin w v different target})}{=}\frac{1 + O(\sqrt{\epsilon})}{2\pi \left(\frac{1}{m} + O(\epsilon)\right)}\left(\norm{\bg_{i,j}}^2 + \inner{\bg_{i,j},\bv_a}^2 + O(\epsilon^{2.5})\right) \nonumber\\
    &= \frac{m}{2\pi}\left(\norm{\bg_{i,j}}^2 + \inner{\bg_{i,j},\bv_a}^2\right) + O(\epsilon^{2.5})
\end{align}

For any $a\in[k]$ with $a\neq i$, combining \eqref{eq:h1 of wij and wab} and \eqref{eq:h1 of wij and va} and summing over all $b\in[m]$ we get:
\begin{align}
    &\sum_{b=1}^m\left(\bg_{i,j}^\top h_1(\bw_{i,j},\bw_{a,b})\bg_{i,j}\right) - \bg_{i,j}^\top h_1(\bw_{i,j},\bv_a)\bg_{i,j} \nonumber\\
    = &\sum_{b=1}^m\left( \frac{1}{2\pi}\left(\norm{\bg_{i,j}}^2 + \inner{\bg_{i,j},\bv_a}^2\right) + O(\epsilon^{2.5})\right) - \frac{m}{2\pi}\left(\norm{\bg_{i,j}}^2 + \inner{\bg_{i,j},\bv_a}^2\right) + O(\epsilon^{2.5}) = O(\epsilon^{2.5})\nonumber
\end{align}
Also, using \lemref{lem:h 1 same target} we get that $\sum_{b\neq j}\bg_{i,j}^\top h_1(\bw_{i,j},\bw_{i,b})\bg_{i,j} = O(\epsilon^{2.5})$ and that $\bg_{i,j}^\top h_1(\bw_{i,j},\bv_a)\bg_{i,j} = O(\epsilon^{2.5})$. This finishes the proof.
\end{proof}

Now we will bound terms related to $h_2$:
\begin{lemma}\label{lem:h2 different target }
Letting $i,a\in[k]$ with $i\neq a$ and $b,j\in[m]$, we have:
\[
\bg_{i,j}^\top h_2(\bw_{i,j},\bw_{a,b})\bg_{a,b} = \frac{1}{4}\inner{\bg_{i,j},\bg_{a,b}} + \frac{1}{2\pi}\inner{\bg_{i,j},\bv_a}\cdot \inner{\bg_{a,b},\bv_i} + O(\epsilon^{2.5})
\]
\end{lemma}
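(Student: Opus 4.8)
The plan is to expand $h_2(\bw_{i,j},\bw_{a,b})$ via its definition in \thmref{thm:hessian of objective}, so that
\[
\bg_{i,j}^\top h_2(\bw_{i,j},\bw_{a,b})\bg_{a,b}=\frac{1}{2\pi}\Big((\pi-\theta_{i,j}^{a,b})\inner{\bg_{i,j},\bg_{a,b}}+\inner{\bg_{i,j},\bar{\bn}_{\bw_{i,j},\bw_{a,b}}}\inner{\bar{\bw}_{a,b},\bg_{a,b}}+\inner{\bg_{i,j},\bar{\bn}_{\bw_{a,b},\bw_{i,j}}}\inner{\bar{\bw}_{i,j},\bg_{a,b}}\Big),
\]
and to estimate the three summands separately. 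Throughout I would use the expansions $\bw_{i,j}=\frac1m\bv_i+\bg_{i,j}$, $\bw_{a,b}=\frac1m\bv_a+\bg_{a,b}$, the orthogonality $\bg_{i,j}\perp\bv_i$ and $\bg_{a,b}\perp\bv_a$, the already-established estimates \eqref{eq:norm of w i j}, \eqref{eq:cos w different target}, \eqref{eq:sin w different target}, \eqref{eq:g i j times w i j} and \eqref{eq: g i j times n w  a b w i j}, together with the crude bounds $\inner{\bg_{i,j},\bv_a},\inner{\bg_{a,b},\bv_i}=O(\epsilon)$ and $\inner{\bg_{i,j},\bg_{a,b}}=O(\epsilon^2)$ coming from $\norm{\bg_{i,j}},\norm{\bg_{a,b}}\le\epsilon$. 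Recall also that $\norm{\bn_{\bw,\bv}}=\sin(\theta_{\bw,\bv})$ (as computed in the proof of \lemref{lem:n=k hessian at global minimum}), so $\bar{\bn}_{\bw,\bv}=(\bar{\bw}-\cos(\theta_{\bw,\bv})\bar{\bv})/\sin(\theta_{\bw,\bv})$.

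For the first summand, since $\cos(\theta_{i,j}^{a,b})=O(\epsilon)$ by \eqref{eq:cos w different target}, the Taylor expansion of $\arccos$ around $0$ gives $\theta_{i,j}^{a,b}=\tfrac\pi2+O(\epsilon)$, hence $\pi-\theta_{i,j}^{a,b}=\tfrac\pi2+O(\epsilon)$ and this term equals $\tfrac14\inner{\bg_{i,j},\bg_{a,b}}+O(\epsilon)\cdot O(\epsilon^2)=\tfrac14\inner{\bg_{i,j},\bg_{a,b}}+O(\epsilon^3)$. For the second summand I would show both factors are $O(\epsilon^2)$: first $\inner{\bar{\bw}_{a,b},\bg_{a,b}}=\norm{\bg_{a,b}}^2/\norm{\bw_{a,b}}=O(\epsilon^2)$ exactly as in \eqref{eq:g i j times w i j}; second, writing $\inner{\bg_{i,j},\bar{\bn}_{\bw_{i,j},\bw_{a,b}}}=(\inner{\bg_{i,j},\bar{\bw}_{i,j}}-\cos(\theta_{i,j}^{a,b})\inner{\bg_{i,j},\bar{\bw}_{a,b}})/\sin(\theta_{i,j}^{a,b})$, the numerator is $O(\epsilon^2)-O(\epsilon)\cdot O(\epsilon)=O(\epsilon^2)$ (using \eqref{eq:g i j times w i j} and $\inner{\bg_{i,j},\bar{\bw}_{a,b}}=\inner{\bg_{i,j},\bv_a}+O(\epsilon^2)=O(\epsilon)$) while the denominator is $1+O(\sqrt\epsilon)$ by \eqref{eq:sin w different target}; hence the second summand is $O(\epsilon^4)$, absorbed into the error.

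For the third summand I would recognise $\inner{\bg_{i,j},\bar{\bn}_{\bw_{a,b},\bw_{i,j}}}$ as precisely the quantity evaluated in \eqref{eq: g i j times n w  a b w i j}, namely $\inner{\bg_{i,j},\bv_a}+O(\epsilon^{1.5})$, and compute $\inner{\bar{\bw}_{i,j},\bg_{a,b}}=\big(\tfrac1m\inner{\bv_i,\bg_{a,b}}+O(\epsilon^2)\big)/\big(\tfrac1m+O(\epsilon)\big)=\inner{\bv_i,\bg_{a,b}}+O(\epsilon^2)$ using \eqref{eq:norm of w i j} and $\bg_{i,j}\perp\bv_i$. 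Multiplying the two, and using $\inner{\bg_{i,j},\bv_a},\inner{\bv_i,\bg_{a,b}}=O(\epsilon)$ to bound the cross terms $O(\epsilon^{1.5})\cdot O(\epsilon)$, $O(\epsilon)\cdot O(\epsilon^2)$ and $O(\epsilon^{1.5})\cdot O(\epsilon^2)$, yields $\inner{\bg_{i,j},\bv_a}\inner{\bg_{a,b},\bv_i}+O(\epsilon^{2.5})$. Adding the three contributions with the overall $\tfrac1{2\pi}$ factor gives $\tfrac14\inner{\bg_{i,j},\bg_{a,b}}+\tfrac1{2\pi}\inner{\bg_{i,j},\bv_a}\inner{\bg_{a,b},\bv_i}+O(\epsilon^{2.5})$, as claimed.

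No individual step is difficult; the only delicate point — and the main thing to get right — is the bookkeeping of the error orders, in particular ensuring that the $1/\sin(\theta_{i,j}^{a,b})$ factor, which carries an $O(\sqrt\epsilon)$ (not $O(\epsilon)$) correction, never multiplies anything larger than an $O(\epsilon)$ or $O(\epsilon^2)$ numerator, so that the worst surviving error is the $O(\epsilon^{1.5})\cdot O(\epsilon)=O(\epsilon^{2.5})$ term in the third summand.
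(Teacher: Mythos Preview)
Your proposal is correct and follows essentially the same approach as the paper: expand $h_2$ into its three summands and estimate each using the precomputed bounds on $\cos(\theta_{i,j}^{a,b})$, $\sin(\theta_{i,j}^{a,b})$, $\norm{\bw_{i,j}}$ and the various inner products. The only difference is cosmetic—you reuse \eqref{eq: g i j times n w  a b w i j} to handle the third summand in one line, whereas the paper expands that term from scratch; your bookkeeping of the error orders (in particular the $O(\epsilon^{1.5})\cdot O(\epsilon)=O(\epsilon^{2.5})$ cross term) matches the paper's.
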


\begin{proof}
We use \eqref{eq:h2} to get:
\begin{align}\label{eq:h2 different target}
    \bg_{i,j}^\top h_2(\bw_{i,j},\bw_{a,b})\bg_{a,b} = \frac{1}{2\pi}\left( (\pi - \theta_{i,j}^{a,b})\inner{\bg_{i,j},\bg_{a,b}} + \bg_{i,j}^\top\bar{n}_{\bw_{i,j},\bw_{a,b}}\bar{\bw}_{a,b}^\top\bg_{a,b} + \bg_{i,j}^\top\bar{n}_{\bw_{a,b},\bw_{i,j}}\bar{\bw}_{i,j}^\top\bg_{a,b} \right)
\end{align}

We will now bound each expression in \eqref{eq:h2 different target}. For the first term we will bound the angle $\theta_{i,j}^{a,b}$ using the Taylor series of $\arccos$. The Taylor series of $\arccos$ is $\arccos(x) = \frac{\pi}{2} - \sum_{n=0}^\infty \frac{(2n)!}{2^{2^n}(n!)^2}\frac{x^{2n+1}}{2n+1} = \frac{\pi}{2} -  \sum_{n=0}^\infty c_n x^{2n+1}$ where $c_n \leq \frac{1}{2}$ for all $n \geq 0$. Hence we have that:
\begin{align*}
    \theta_{i,j}^{a,b} = \arccos\left(\cos(\theta_{i,j}^{a,b})\right) = \frac{\pi}{2} - \sum_{n=0}^\infty c_n \left(\cos(\theta_{i,j}^{a,b})\right)^{2n+1} \stackrel{(\ref{eq:cos w different target})}{=} \frac{\pi}{2} + O(\epsilon)
\end{align*}

Hence we can bound the first term:
\begin{align}\label{eq:h2 different targets first expression}
    (\pi - \theta_{i,j}^{a,b})\inner{\bg_{i,j},\bg_{a,b}} = \left(\pi - \frac{\pi}{2} + O(\epsilon)\right)\inner{\bg_{i,j},\bg_{a,b}} = \frac{\pi}{2}\inner{\bg_{i,j},\bg_{a,b}} + O(\epsilon^3)
\end{align}

For the second term we have:
\begin{align}\label{eq:h2 different targets second expression}
    &\bg_{i,j}^\top\bar{n}_{\bw_{i,j},\bw_{a,b}}\bar{\bw}_{a,b}^\top\bg_{a,b} = \nonumber\\
    &= \frac{\inner{\bar{\bw}_{a,b},\bg_{a,b}}\cdot\left(\inner{\bar{\bw}_{i,j},\bg_{i,j}}-\cos(\theta_{i,j}^{a,b})\inner{\bar{\bw}_{a,b},\bg_{i,j}} \right)}{\sin(\theta_{i,j}^{a,b})} \nonumber \\
    &=\frac{\inner{\frac{1}{m}\bv_a + \bg_{a,b},\bg_{a,b}}\cdot\left(\frac{1}{\norm{\bw_{i,j}}}\inner{\frac{1}{m}\bv_i + \bg_{i,j},\bg_{i,j}}-\frac{\cos(\theta_{i,j}^{a,b})}{\norm{\bw_{a,b}}}\inner{\frac{1}{m}\bv_a + \bg_{a,b},\bg_{i,j}} \right)}{\sin(\theta_{i,j}^{a,b})\norm{\bw_{a,b}}} \nonumber \\
    &= \frac{\norm{\bg_{a,b}^2}\left(\frac{\norm{\bg_{i,j}}^2}{\norm{\bw_{i,j}}} - \frac{\cos(\theta_{i,j}^{a,b})}{\norm{\bw_{a,b}}}\left(\frac{1}{m}\inner{\bv_a ,\bg_{i,j}} + \inner{\bg_{a,b},\bg_{i,j}}\right)\right)}{\sin(\theta_{i,j}^{a,b})\norm{\bw_{a,b}}} \nonumber \\
    &\stackrel{(\ref{eq:norm of w i j}),(\ref{eq:sin w different target}),(\ref{eq:cos w different target})}{=} \frac{\norm{\bg_{a,b}}^2\left(\frac{\norm{\bg_{i,j}}^2}{\frac{1}{m} + O(\epsilon)} - \frac{O(\epsilon)}{\frac{1}{m} + O(\epsilon)}\left(\frac{1}{m}\inner{\bv_a ,\bg_{i,j}} + \inner{\bg_{a,b},\bg_{i,j}}\right) \right)}{(1 + O(\sqrt{\epsilon}))\cdot \left(\frac{1}{m} + O(\epsilon)\right)} = O(\epsilon^3)
\end{align}

For the third expression we have:
\begin{align*}
    &\bg_{i,j}^\top\bar{n}_{\bw_{a,b},\bw_{i,j}}\bar{\bw}_{i,j}^\top\bg_{a,b} = \nonumber\\
     &= \frac{\inner{\bar{\bw}_{i,j},\bg_{a,b}}\cdot\left(\inner{\bar{\bw}_{a,b},\bg_{i,j}}-\cos(\theta_{i,j}^{a,b})\inner{\bar{\bw}_{i,j},\bg_{i,j}} \right)}{\sin(\theta_{i,j}^{a,b})} \nonumber \\
     &=\frac{\inner{\frac{1}{m}\bv_i + \bg_{i,j},\bg_{a,b}}\cdot\left(\frac{1}{\norm{\bw_{a,b}}}\inner{\frac{1}{m}\bv_a + \bg_{a,b},\bg_{i,j}}-\frac{\cos(\theta_{i,j}^{a,b})}{\norm{\bw_{i,j}}}\inner{\frac{1}{m}\bv_i + \bg_{i,j},\bg_{i,j}} \right)}{\sin(\theta_{i,j}^{a,b})\norm{\bw_{i,j}}} \nonumber \\
     &=\frac{\left(\frac{1}{m}\inner{\bv_i,\bg_{a,b}} + \inner{\bg_{i,j},\bg_{a,b}}\right)\cdot \left(\frac{1}{\norm{\bw_{a,b}}}\left(\frac{1}{m}\inner{\bv_a,\bg_{i,j}} + \inner{\bg_{a,b},\bg_{i,j}}\right) - \frac{\cos(\theta_{i,j}^{a,b})\norm{\bg_{i,j}}^2}{\norm{\bw_{i,j}}}\right)}{\sin(\theta_{i,j}^{a,b})\norm{\bw_{i,j}}} \nonumber \\
     &=\frac{\inner{\bv_i,\bg_{a,b}}\cdot \inner{\bv_a,\bg_{i,j}}}{m^2\sin(\theta_{i,j}^{a,b})\norm{\bw_{a,b}}\norm{\bw_{i,j}}} + \frac{\inner{\bv_i,\bg_{a,b}}\cdot \left(\frac{\inner{\bg_{a,b},\bg_{i,j}}}{\norm{\bw_{a,b}}} - \frac{\cos(\theta_{i,j}^{a,b})\norm{\bg_{i,j}}^2}{\norm{\bw_{i,j}}}\right)}{m\sin(\theta_{i,j}^{a,b})\norm{\bw_{i,j}}}\nonumber\\
     &+ \frac{\inner{\bg_{i,j},\bg_{a,b}}\cdot \left(\frac{1}{\norm{\bw_{a,b}}}\left(\frac{1}{m}\inner{\bv_a,\bg_{i,j}} + \inner{\bg_{a,b},\bg_{i,j}}\right) - \frac{\cos(\theta_{i,j}^{a,b})\norm{\bg_{i,j}}^2}{\norm{\bw_{i,j}}}\right)}{\sin(\theta_{i,j}^{a,b})\norm{\bw_{i,j}}}
\end{align*}
As in the previous expression, since $\inner{\bg_{i,j},\bg_{a,b}} , \norm{\bg_{i,j}}^2 = O(\epsilon^2)$, we have that:
\begin{align*}
    &\frac{\inner{\bv_i,\bg_{a,b}}\cdot \left(\frac{\inner{\bg_{a,b},\bg_{i,j}}}{\norm{\bw_{a,b}}} - \frac{\cos(\theta_{i,j}^{a,b})\norm{\bg_{i,j}}^2}{\norm{\bw_{i,j}}}\right)}{m\sin(\theta_{i,j}^{a,b})\norm{\bw_{i,j}}} = O(\epsilon^3) \\
    & \frac{\inner{\bg_{i,j},\bg_{a,b}}\cdot \left(\frac{1}{\norm{\bw_{a,b}}}\left(\frac{1}{m}\inner{\bv_a,\bg_{i,j}} + \inner{\bg_{a,b},\bg_{i,j}}\right) - \frac{\cos(\theta_{i,j}^{a,b})\norm{\bg_{i,j}}^2}{\norm{\bw_{i,j}}}\right)}{\sin(\theta_{i,j}^{a,b})\norm{\bw_{i,j}}} = O(\epsilon^3)
\end{align*}
In total we get that:
\begin{align}\label{eq:h2 different targets third expression}
    &\bg_{i,j}^\top\bar{n}_{\bw_{a,b},\bw_{i,j}}\bar{\bw}_{i,j}^\top\bg_{a,b} = \frac{\inner{\bv_i,\bg_{a,b}}\cdot \inner{\bv_a,\bg_{i,j}}}{m^2\sin(\theta_{i,j}^{a,b})\norm{\bw_{a,b}}\norm{\bw_{i,j}}} + O(\epsilon^3)\nonumber\\
    &\stackrel{(\ref{eq:norm of w i j}),(\ref{eq:sin w different target})}{=} \frac{\inner{\bv_i,\bg_{a,b}}\cdot \inner{\bv_a,\bg_{i,j}}}{m^2 (1+O(\sqrt{\epsilon}))\left(\frac{1}{m} + O(\epsilon)\right)^2} + O(\epsilon^3)= \inner{\bv_i,\bg_{a,b}}\cdot \inner{\bv_a,\bg_{i,j}} + O(\epsilon^{2.5})
\end{align}

Overall, using \eqref{eq:h2 different targets first expression}, \eqref{eq:h2 different targets second expression}, \eqref{eq:h2 different targets third expression} we have:
\begin{align*}
     &\bg_{i,j}^\top h_2(\bw_{i,j},\bw_{a,b})\bg_{a,b} = \frac{1}{2\pi}\left(\frac{\pi}{2}\inner{\bg_{i,j},\bg_{a,b}} + O(\epsilon^3) + O(\epsilon^3) + \inner{\bv_i,\bg_{a,b}}\cdot\inner{\bv_a,\bg_{i,j}^{a,b}} + O(\epsilon^{2.5})\right)\\
     &= \frac{1}{4}\inner{\bg_{i,j},\bg_{a,b}} + \frac{1}{2\pi}\inner{\bg_{i,j},\bv_a}\cdot \inner{\bg_{a,b},\bv_i} + O(\epsilon^{2.5})
\end{align*}
\end{proof}

\begin{lemma}\label{lem:h2 same target }
Letting $i\in [k]$ and $j,l\in[m]$, we have:
\[
\bg_{i,j}^\top h_2(\bw_{i,j},\bw_{i,l})\bg_{i,l} = \frac{1}{2}\inner{\bg_{i,j},\bg_{i,l}} + O(\epsilon^{3})
\]
\end{lemma}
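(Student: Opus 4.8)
The plan is to feed the closed form of $h_2$ from \eqref{eq:h2} into the quadratic form and to peel off the only piece that does not vanish to order $\epsilon^3$. Writing $\bar\bn_{\bw_{i,j},\bw_{i,l}}\bar\bw_{i,l}^\top$ and $\bar\bn_{\bw_{i,l},\bw_{i,j}}\bar\bw_{i,j}^\top$ as rank-one matrices, \eqref{eq:h2} yields
\[
\bg_{i,j}^\top h_2(\bw_{i,j},\bw_{i,l})\bg_{i,l} = \frac{1}{2\pi}\Big((\pi-\theta_{i,j}^{i,l})\inner{\bg_{i,j},\bg_{i,l}} + \inner{\bg_{i,j},\bar\bn_{\bw_{i,j},\bw_{i,l}}}\inner{\bar\bw_{i,l},\bg_{i,l}} + \inner{\bg_{i,j},\bar\bn_{\bw_{i,l},\bw_{i,j}}}\inner{\bar\bw_{i,j},\bg_{i,l}}\Big),
\]
so it suffices to show that the first summand equals $\pi\inner{\bg_{i,j},\bg_{i,l}} + O(\epsilon^3)$ and that the two rank-one summands are each $O(\epsilon^3)$.

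The crux is a sharper estimate on $\theta := \theta_{i,j}^{i,l}$ than the bound $\sin\theta = O(\sqrt\epsilon)$ provided by \eqref{eq:sin w same target}. Using $\bw_{i,j}=\tfrac1m\bv_i+\bg_{i,j}$, $\bw_{i,l}=\tfrac1m\bv_i+\bg_{i,l}$ with $\bg_{i,j},\bg_{i,l}\perp\bv_i$ and $\norm{\bg_{i,j}},\norm{\bg_{i,l}}\le\epsilon$, I would expand
\[
\cos\theta = \frac{\tfrac1{m^2}+\inner{\bg_{i,j},\bg_{i,l}}}{\sqrt{(\tfrac1{m^2}+\norm{\bg_{i,j}}^2)(\tfrac1{m^2}+\norm{\bg_{i,l}}^2)}}
\]
and observe that the $\tfrac1{m^2}$-order terms cancel, leaving $1-\cos\theta = \tfrac{m^2}{2}\norm{\bg_{i,j}-\bg_{i,l}}^2 + O(\epsilon^4) = O(\epsilon^2)$. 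Hence $\sin\theta \le \sqrt{2(1-\cos\theta)} = O(\epsilon)$ and $\theta \le \tfrac{\pi}{2}\sin\theta = O(\epsilon)$ (for $\epsilon$ small enough that $\theta < \tfrac\pi2$; the degenerate case $\bg_{i,j}=\bg_{i,l}$, i.e.\ $\theta=0$, is handled directly, since then $h_2(\bw_{i,j},\bw_{i,l})=\tfrac12 I$ by \lemref{lem:h2 parallel vectors} and the claimed identity is exact). Combined with $|\inner{\bg_{i,j},\bg_{i,l}}|\le\norm{\bg_{i,j}}\norm{\bg_{i,l}}\le\epsilon^2$, this gives $(\pi-\theta)\inner{\bg_{i,j},\bg_{i,l}} = \pi\inner{\bg_{i,j},\bg_{i,l}} - \theta\inner{\bg_{i,j},\bg_{i,l}} = \pi\inner{\bg_{i,j},\bg_{i,l}} + O(\epsilon^3)$.

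For the two rank-one terms I would use the orthogonality $\bg_{i,j},\bg_{i,l}\perp\bv_i$ once more. Since $\bar\bn_{\bw_{i,j},\bw_{i,l}}$ and $\bar\bn_{\bw_{i,l},\bw_{i,j}}$ are unit vectors, Cauchy--Schwarz gives $|\inner{\bg_{i,j},\bar\bn_{\bw_{i,j},\bw_{i,l}}}|,\,|\inner{\bg_{i,j},\bar\bn_{\bw_{i,l},\bw_{i,j}}}| \le \norm{\bg_{i,j}} = O(\epsilon)$. On the other hand, the $\bv_i$-components drop out of $\inner{\bar\bw_{i,l},\bg_{i,l}} = \tfrac{\norm{\bg_{i,l}}^2}{\norm{\bw_{i,l}}}$ and $\inner{\bar\bw_{i,j},\bg_{i,l}} = \tfrac{\inner{\bg_{i,j},\bg_{i,l}}}{\norm{\bw_{i,j}}}$, and since $\norm{\bw_{i,j}},\norm{\bw_{i,l}}\ge\tfrac1m$ both quantities are $O(\epsilon^2)$. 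Hence each rank-one summand is $O(\epsilon)\cdot O(\epsilon^2)=O(\epsilon^3)$. Substituting the three estimates back and dividing by $2\pi$ gives $\bg_{i,j}^\top h_2(\bw_{i,j},\bw_{i,l})\bg_{i,l} = \tfrac12\inner{\bg_{i,j},\bg_{i,l}} + O(\epsilon^3)$, which is exactly the claim (with constants in $O(\cdot)$ polynomial in $m$, as elsewhere).

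The main obstacle is the second step: the off-the-shelf bound $\sin\theta=O(\sqrt\epsilon)$ from \eqref{eq:sin w same target} only yields an $O(\epsilon^{2.5})$ error in the first summand, so one genuinely needs the exact cancellation of the $\tfrac1{m^2}$-order part of $\cos\theta$, which is precisely where the defining property $\bg\perp\bv_i$ of the $\epsilon$-orthogonal neighborhood enters. Once $\theta=O(\epsilon)$ is established, the remaining estimates are routine bookkeeping of the same flavor as the $i\neq a$ case treated just above.
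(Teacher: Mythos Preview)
Your argument is correct and follows the same three-term decomposition of $h_2$ as the paper, including the key sharpening $\theta_{i,j}^{i,l}=O(\epsilon)$ (which the paper derives via the Taylor series of $\arccos$ applied to \eqref{eq:cos w same target}). For the two rank-one summands the paper instead expands $\bar\bn=(\bar\bw-\cos\theta\,\bar\bw')/\sin\theta$ and bounds each factor explicitly to obtain $O(\epsilon^{3.5})$; your Cauchy--Schwarz shortcut $|\inner{\bg_{i,j},\bar\bn}|\le\|\bg_{i,j}\|$ is a cleaner way to reach the needed $O(\epsilon^3)$ and avoids the division by $\sin\theta$ altogether.
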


\begin{proof}
We use \thmref{thm:hessian of objective} to get:
\begin{align}\label{eq:h2 same target}
    \bg_{i,j}^\top h_2(\bw_{i,j},\bw_{i,l})\bg_{i,l} = \frac{1}{2\pi}\left( (\pi - \theta_{i,j}^{i,l})\inner{\bg_{i,j},\bg_{i,l}} + \bg_{i,j}^\top\bar{n}_{\bw_{i,j},\bw_{i,l}}\bar{\bw}_{i,l}^\top\bg_{i,l} + \bg_{i,j}^\top\bar{n}_{\bw_{i,l},\bw_{i,j}}\bar{\bw}_{i,j}^\top\bg_{i,l} \right)
\end{align}

We will bound each expression of \eqref{eq:h2 same target}. For the first expression, as in the proof of \lemref{lem:h2 different target } we use the Taylor series of $\arccos$ to get:
\begin{align*}
    \theta_{i,j}^{a,b} = \arccos\left(\cos(\theta_{i,j}^{i,l})\right) = \frac{\pi}{2} - \sum_{n=0}^\infty c_n \left(\cos(\theta_{i,j}^{i,l})\right)^{2n+1} \stackrel{(\ref{eq:cos w same target})}{=} O(\epsilon)~.
\end{align*}
Hence, we have that:
\begin{align}\label{eq:h2 same target first expression}
     (\pi - \theta_{i,j}^{i,l})\inner{\bg_{i,j},\bg_{i,l}} = \pi \inner{\bg_{i,j},\bg_{i,l}} + O(\epsilon^3)~.
\end{align}
For the second expression we get:
\begin{align}
    &\bg_{i,j}^\top\bar{n}_{\bw_{i,j},\bw_{i,l}}\bar{\bw}_{i,l}^\top\bg_{i,l} = \frac{\inner{\bar{\bw}_{i,l},\bg_{i,l}}\cdot\left(\inner{\bar{\bw}_{i,j},\bg_{i,j}}-\cos(\theta_{i,j}^{i,l})\inner{\bar{\bw}_{i,l},\bg_{i,j}} \right)}{\sin(\theta_{i,j}^{a,b})} \nonumber \\
    &= \frac{\frac{\norm{\bg_{i,l}}^2}{\norm{\bw_{i,l}}}\left( \frac{\norm{\bg_{i,j}}^2}{\norm{\bw_{i,j}}} - \cos(\theta_{i,j}^{i,l})\frac{\inner{\bg_{i,j},\bg_{i,l}}}{\norm{\bw_{i,l}}}\right)}{\sin(\theta_{i,j}^{i,l})} \stackrel{(\ref{eq:norm of w i j}),(\ref{eq:cos w same target}),(\ref{eq:sin w same target})}{=}\frac{\frac{\norm{\bg_{i,l}}^2}{\frac{1}{m} + O(\epsilon)}\left( \frac{\norm{\bg_{i,j}}^2}{\frac{1}{m} + O(\epsilon)} - (1+O(\epsilon))\frac{\inner{\bg_{i,j},\bg_{i,l}}}{\frac{1}{m} + O(\epsilon)}\right)}{O(\sqrt{\epsilon})}\nonumber \\
    &= O(\epsilon^{3.5})~. \label{eq:h2 same target second expression}
\end{align}
For the third expression we have:
\begin{align}
    &\bg_{i,j}^\top\bar{n}_{\bw_{i,l},\bw_{i,j}}\bar{\bw}_{i,j}^\top\bg_{i,l} = \frac{\inner{\bar{\bw}_{i,j},\bg_{i,l}}\cdot\left(\inner{\bar{\bw}_{i,l},\bg_{i,j}}-\cos(\theta_{i,j}^{i,l})\inner{\bar{\bw}_{i,j},\bg_{i,j}} \right)}{\sin(\theta_{i,j}^{i,l})} \nonumber \\
    &= \frac{\frac{\inner{\bg_{i,j},\bg_{i,l}}}{\norm{\bw_{i,j}}}\cdot \left(\frac{\inner{\bg_{i,j},\bg_{i,l}}}{\norm{\bw_{i,l}}} - \cos(\theta_{i,j}^{i,l})\frac{\norm{\bg_{i,j}}^2}{\norm{\bw_{i,j}}} \right)}{\sin(\theta_{i,j}^{i,l})} \stackrel{(\ref{eq:norm of w i j}),(\ref{eq:cos w same target}),(\ref{eq:sin w same target})}{=}\frac{\frac{\inner{\bg_{i,j},\bg_{i,l}}}{\frac{1}{m} + O(\epsilon)}\cdot \left(\frac{\inner{\bg_{i,j},\bg_{i,l}}}{\frac{1}{m} + O(\epsilon)} - (1+ O(\epsilon))\frac{\norm{\bg_{i,j}}^2}{\frac{1}{m} + O(\epsilon)} \right)}{O(\sqrt{\epsilon)}} \nonumber \\
    &= O(\epsilon^{3.5})~.\label{eq:h2 same target third expression}
\end{align}
Combining \eqref{eq:h2 same target first expression}, \eqref{eq:h2 same target second expression} and \eqref{eq:h2 same target third expression} we get:
\begin{align*}
     \bg_{i,j}^\top h_2(\bw_{i,j},\bw_{i,l})\bg_{i,l} = \frac{1}{2\pi}\left( \pi\inner{\bg_{i,j},\bg_{i,l}} + O(\epsilon^3) + O(\epsilon^{3.5})+ O(\epsilon^{3.5})\right) = \frac{1}{2}\inner{\bg_{i,j},\bg_{i,l}} + O(\epsilon^3)
\end{align*}
\end{proof}

Before combining all the parts we will also need the following technical lemma:
\begin{lemma}\label{lemma:useful bounds on sums of u i j}
Let $\bu_1,\dots, \bu_n\in \mathbb{R}^n$, and denote by $u_{ij}$ the $j-th$ coordinate of $\bu_i$. Then:
\begin{equation*}
    \sum_{i=1}^n\sum_{j\neq i} u_{ij}u_{ji} \geq \sum_{i=1}^n u_{ii}^2 - \sum_{i=1}^n \norm{\bu_i}^2
\end{equation*}
\end{lemma}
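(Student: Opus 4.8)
The plan is to reduce the claimed inequality to an elementary sum-of-squares bound. First I would rewrite the right-hand side using $\norm{\bu_i}^2=\sum_{j=1}^n u_{ij}^2$, so that the asserted inequality becomes
\[
\sum_{i=1}^n\sum_{j\neq i} u_{ij}u_{ji}\;\geq\;\sum_{i=1}^n u_{ii}^2-\sum_{i=1}^n\sum_{j=1}^n u_{ij}^2\;=\;-\sum_{i=1}^n\sum_{j\neq i} u_{ij}^2,
\]
where the diagonal contributions $u_{ii}^2$ cancel against the $j=i$ terms of $\sum_{i,j}u_{ij}^2$. Thus it suffices to prove the equivalent statement $\sum_{i\neq j}\bigl(u_{ij}u_{ji}+u_{ij}^2\bigr)\geq 0$.

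Next I would group the summands by unordered pairs $\{i,j\}$ with $i\neq j$: the terms indexed by $(i,j)$ and by $(j,i)$ together contribute $u_{ij}u_{ji}+u_{ij}^2+u_{ji}u_{ij}+u_{ji}^2=(u_{ij}+u_{ji})^2$. Hence the full sum equals $\sum_{1\le i<j\le n}(u_{ij}+u_{ji})^2$, which is manifestly nonnegative, and the lemma follows.

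There is essentially no obstacle here; the only point to observe is that, once the diagonal cancels, the off-diagonal terms reassemble into one perfect square per pair. One should just note that $\bu_i\in\mathbb{R}^n$ ensures the indices $u_{ii}$, $u_{ij}$, $u_{ji}$ all make sense, i.e.\ one is implicitly working with the $n\times n$ matrix $U=(u_{ij})$ (and, if one prefers, the identity $\sum_{i\neq j}u_{ij}u_{ji}=\operatorname{tr}(U^2)-\sum_i u_{ii}^2$), though invoking the trace is not needed for the argument above.
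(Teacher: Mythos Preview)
Your proof is correct and follows essentially the same approach as the paper: both arguments reduce the inequality to the nonnegativity of $\sum_{i<j}(u_{ij}+u_{ji})^2$. The paper phrases this via the trace identity $\operatorname{tr}(U^\top U)+\operatorname{tr}(U^2)=\tfrac{1}{2}\sum_{i,j}(u_{ij}+u_{ji})^2$, whereas you cancel the diagonal terms first and group by unordered pairs directly, which is slightly more streamlined but the same idea.
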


\begin{proof}
Let $U$ be a matrix with columns equal to $\bu_i$.
Note that:
\begin{align*}
    & tr(U^\top U) = \sum_{i=1}^n\sum_{j=1}^n u_{ij}^2 \\
    & tr(U^2) = \sum_{i=1}^n \sum_{j=1}^n u_{ij}u_{ji} \\
    & tr(U U^\top) = \sum_{i=1}^n\sum_{j=1}^n u_{ji}^2 
\end{align*}
Now we have that:
\begin{align}
    &tr(U^\top U) + tr(U^2) = \frac{1}{2}tr(U^\top U) + tr(U^2) + \frac{1}{2}tr(U U^\top) = \sum_{i=1}^n\sum_{j=1}^n \left( \frac{1}{2}u_{ij}^2 + u_{ij}u_{ji} + \frac{1}{2}u_{ji}^2  \right) \label{eq:tr U top U plus tr U squared}\\
    & = \frac{1}{2}\sum_{i=1}^n \sum_{j=1}^n (u_{ij} + u_{ji})^2 \geq \frac{1}{2}\sum_{i=1}^n\sum_{j\neq i}(u_{ij} + u_{ji})^2 + 2\sum_{i=1}^n u_{ii}^2 \geq 2\sum_{i=1}^n u_{ii}^2 \nonumber
\end{align}
The two terms from the first part of \eqref{eq:tr U top U plus tr U squared} also have the following useful forms:
\begin{align*}
    & tr(U^\top U) = \sum_{i=1}^n \|\bu_i\|^2 \\
    & tr(U^2) = \sum_{i=1}^n\sum_{j\neq i} u_{ij} u_{ji} + \sum_{i=1}^n u_{ii}^2
\end{align*}
In total we have:
\begin{align*}
    \sum_{i=1}^n\sum_{j\neq i} u_{ij} u_{ji} + \sum_{i=1}^n u_{ii}^2 + \sum_{i=1}^n \|\bu_i\|^2 \geq 2\sum_{i=1}^n u_{ii}^2,
\end{align*}
hence:
\begin{align*}
    \sum_{i=1}^n\sum_{j\neq i} u_{ij} u_{ji} \geq \sum_{i=1}^n u_{ii}^2 -  \sum_{i=1}^n \|\bu_i\|^2 .
\end{align*}

\end{proof}

We are now ready to prove the main theorem of this section:
\begin{proof}[Proof of \thmref{thm:almost OPSC}]
By \lemref{lem:h1 sum} we have that
\begin{align}
    \sum_{i,j=1}^{k,m}\left(\sum_{\substack{a,b=1 \\ (a,b)\neq (i,j)}}^{k,m} \bg_{i,j}^\top h_1(\bw_{i,j},\bw_{a,b})\bg_{i,j} - \sum_{l=1}^k \bg_{i,j}^\top h_1(\bw_{i,j},\bv_l)\bg_{i,j}\right) = \sum_{i,j=1}^{k,m} O(\epsilon^{2.5}) = O(\epsilon^{2.5})~.
\end{align}
Applying the above to \eqref{eq:full hessian directed to the global minimum} we get:
\begin{align}\label{eq:full hessian without h1}
    \bg_1^{n\top} H(\bw_1^n)\bg_1^n = \sum_{i,j=1}^{k,m}\left(\frac{1}{2}\norm{\bg_{i,j}}^2 + \sum_{\substack{a,b=1 \\ (a,b)\neq (i,j)}}^{k,m}\bg_{i,j}^\top h_2(\bw_{i,j},\bw_{a,b})\bg_{a,b}\right) + O(\epsilon^{2.5})~.
\end{align}
First we separate the expression inside the sum of \eqref{eq:full hessian without h1} for the different values of $i,a$ where either $i=a$ or $i\neq a$:
\begin{align}
    &\sum_{i,j=1}^{k,m}\sum_{\substack{a,b=1 \\ (a,b)\neq (i,j)}}^{k,m}\bg_{i,j}^\top h_2(\bw_{i,j},\bw_{a,b})\bg_{a,b} \nonumber\\
    =& \sum_{i,j=1}^{k,m}\sum_{l\neq j} \bg_{i,j}^\top h_2(\bw_{i,j},\bw_{i,l})\bg_{i,l} + \sum_{i,j=1}^{k,m}\sum_{\substack{a,b=1 \\ a\neq i}}^{k,m}\bg_{i,j}^\top h_2(\bw_{i,j},\bw_{a,b})\bg_{a,b}~.\label{eq:h2 sum separeted}
\end{align}

Recall that $\bg_i= \sum_{j=1}^m\bg_{i,j}$ and $\bg = \sum_{i=1}^k\bg_i$. For the first sum in \eqref{eq:h2 sum separeted} we fix $i\in[k]$ and use \lemref{lem:h2 same target } to get:

\begin{align}\label{eq:sum of inner prod trick}
    &\sum_{j=1}^m\sum_{l\neq j} \bg_{i,j}^\top h_2(\bw_{i,j},\bw_{i,l})\bg_{i,l} = \sum_{j=1}\sum_{l\neq j} \frac{1}{2}\inner{\bg_{i,j},\bg_{i,l}} + O(\epsilon^3) = \frac{1}{2}\sum_{j=1}^m\inner{\bg_{i,j},\sum_{l\neq j}\bg_{i,l}} + O(\epsilon^3)\nonumber\\
    &= \frac{1}{2}\sum_{j=1}^m\inner{\bg_{i,j},\bg_i - \bg_{i,j}} + O(\epsilon^3) = \frac{1}{2}\left( \sum_{j=1}^m \inner{\bg_{i,j},\bg_i} - \sum_{j=1}^m\inner{\bg_{i,j},\bg_{i,j}}\right) + O(\epsilon^3)\nonumber \\
    &= \frac{1}{2}\norm{\bg_i}^2 - \frac{1}{2}\sum_{j=1}^m\norm{\bg_{i,j}}^2 + O(\epsilon^3)
\end{align}
Summing for all $i\in[k]$ we get:
\begin{align}\label{eq:h2 sum separeted first expression}
    \sum_{i,j=1}^{k,m}\sum_{l\neq j} \bg_{i,j}^\top h_2(\bw_{i,j},\bw_{i,l})\bg_{i,l} = \frac{1}{2}\sum_{i=1}^k\norm{\bg_{i}}^2 - \frac{1}{2}\sum_{i,j=1}^{k,m}\norm{\bg_{i,j}}^2 + O(\epsilon^3)
\end{align}
For the second sum in \eqref{eq:h2 sum separeted} we use \lemref{lem:h2 different target } to get:
\begin{align}\label{eq:h2 sum separeted second expression split}
    \sum_{i,j=1}^{k,m}\sum_{\substack{a,b=1 \\ a\neq i}}^{k,m}\bg_{i,j}^\top h_2(\bw_{i,j},\bw_{a,b})\bg_{a,b} = \sum_{i,j=1}^{k,m}\sum_{\substack{a,b=1 \\ a\neq i}}^{k,m}\left( \frac{1}{4}\inner{\bg_{i,j},\bg_{a,b}} + \frac{1}{2\pi}\inner{\bg_{i,j},\bv_a}\cdot \inner{\bg_{a,b},\bv_i}\right) + O(\epsilon^{2.5})
\end{align}

We will bound the two expressions in \eqref{eq:h2 sum separeted second expression split}. For the first expression we use the same calculation as \eqref{eq:sum of inner prod trick} to get:
\begin{align}\label{eq:inner prod g i j g a b sum}
    \sum_{i,j=1}^{k,m}\sum_{\substack{a,b=1 \\ a\neq i}}^{k,m}\frac{1}{4}\inner{\bg_{i,j},\bg_{a,b}} = \sum_{i=1}^k\sum_{a\neq i}\frac{1}{4}\inner{\bg_i,\bg_a} = \frac{1}{4}\norm{\bg}^2 - \frac{1}{4}\sum_{i=1}^k\norm{\bg_i}^2~.
\end{align}

For the second expression in \eqref{eq:h2 sum separeted second expression split} we first simplify:
\begin{align*}
    \frac{1}{2\pi}\sum_{i,j=1}^{k,m}\sum_{\substack{a,b=1 \\ a\neq i}}^{k,m} \inner{\bg_{i,j},\bv_i}\cdot \inner{\bg_{a,b},\bv_i} = \frac{1}{2\pi}\sum_{i=1}^k\sum_{a\neq i}\inner{\bg_i,\bv_a}\cdot \inner{\bg_a,\bv_i}
\end{align*}

Denote by $\bu_1,\dots,\bu_k\in\mathbb{R}^k$ the vectors  where the $a-th$ coordinate of $\bu_i$ is equal to $u_{ia} = \inner{\bg_i,\bv_a}$ for $a=1,\dots,k$. Since $\bg\in\mathbb{R}^d$ with $d \geq k$ and the vectors $\bv_1,\dots,\bv_k$ are orthonormal we have that $\norm{\bu_i}^2 = \sum_{a=1}^k \inner{\bg_i,\bv_a}^2 \leq \norm{\bg_i}^2$. Also, by the assumptions of the theorem we have $u_{ii} = \inner{\bg_i,\bv_i} = 0$. Using \lemref{lemma:useful bounds on sums of u i j} on the vectors $\bu_1,\dots,\bu_k$ we have:
\begin{align}\label{eq:mult inner prod gi va ga vi}
    \frac{1}{2\pi}\sum_{i=1}^k\sum_{a\neq i}\inner{\bg_i,\bv_a}\cdot \inner{\bg_a,\bv_i} = \frac{1}{2\pi}\sum_{i=1}^k\sum_{a\neq i} u_{ia}u_{ai} \geq \frac{1}{2\pi}\left(\sum_{i=1}^ku_{ii}^2 - \sum_{i=1}^k\norm{\bu_i}^2\right) \geq -\frac{1}{2\pi}\sum_{i=1}^k\norm{\bg_i}^2 + O(\epsilon^{2.5})
\end{align}

Returning to \eqref{eq:h2 sum separeted second expression split}, we combine \eqref{eq:inner prod g i j g a b sum} and \eqref{eq:mult inner prod gi va ga vi} to get:
\begin{align}
    \sum_{i,j=1}^{k,m}\sum_{\substack{a,b=1 \\ a\neq i}}^{k,m}\bg_{i,j}^\top h_2(\bw_{i,j},\bw_{a,b})\bg_{a,b} \geq \frac{1}{4}\norm{\bg}^2 - \frac{1}{4}\sum_{i=1}^k\norm{\bg_i}^2 - \frac{1}{2\pi}\sum_{i=1}^k\norm{\bg_i}^2 \nonumber
\end{align}
Combining the above with \eqref{eq:full hessian directed to the global minimum} and \eqref{eq:h2 sum separeted first expression} to get:
\begin{align*}
    &\bg_1^{n\top} H(\bw_1^n)\bg_1^n \\
    &\geq \frac{1}{2}\sum_{i,j=1}^{k,m}\norm{\bg_{i,j}}^2 + \frac{1}{2}\sum_{i=1}^k\norm{\bg_i}^2 - \frac{1}{2}\sum_{i,j=1}^{k,m}\norm{\bg_{i,j}}^2 + O(\epsilon^3) + \frac{1}{4}\norm{\bg}^2 - \left(\frac{1}{4} + \frac{1}{2\pi}\right)\cdot \sum_{i=1}^k\norm{\bg_i}^2 + O(\epsilon^{2.5}) \\
    &\geq \frac{1}{4}\norm{\bg}^2 + \left(\frac{1}{2} - \frac{1}{4} - \frac{1}{2\pi}\right)\sum_{i=1}^k\norm{\bg_i}^2 + O(\epsilon^{2.5}) \geq \frac{1}{4}\norm{\bg}^2 + \left(\frac{1}{4}-\frac{1}{2\pi}\right)\sum_{i=1}^k\norm{\bg_i}^2 + O(\epsilon^{2.5})
\end{align*}

\end{proof}

\section{Optimization Proofs}

\subsection{Empirical Investigation of Conjecture \ref{conj:epsilon}}\label{app:curv_experiment}

In this section, we wish to empirically verify the correctness of Conjecture \ref{conj:epsilon}. Our method is to focus on a small neighborhood of a global minimum, sample the vectors $\bg_{i,j}$ either randomly or adversarially, and verify that indeed the l.h.s is larger than the r.h.s, even without the $O(\epsilon^{2.5})$ term, and in a standard $\epsilon$-neighborhood of the global minimum. To this end, we sampled 1,000 random Gaussian vectors independently with zero mean and variance $10^{-5}$, for each pair of values $k\in\{5,10,20\}$ and $m\in\{2,5,10\}$, and computed the difference between the left-hand side and right-hand side of Conjecture \ref{conj:epsilon}. Additionally, we also tested the above difference in an adversarial sample, when the right-hand side cancels out, i.e.\ when $\sum_{j=1}^m\bg_{i,j}=\mathbf{0}$ for all $i\in[k]$, by replacing the noise vector $\bg_{i,1}$ with $-\sum_{j=2}^{m}\bg_{i,j}$. In \emph{all} the samples made, the difference computed was strictly positive. This means that empirically for many global minima (i.e. many $k$ and $m$) the conjecture holds. Table \ref{tab:curvs} summarizes the smallest curvature (i.e. the l.h.s of Conjecture \ref{conj:epsilon}) found among the 1,000 random samples performed on each pair.

\begin{table}[H]
    
    \centering

    \begin{tabular}{|c|c|c|c|}
    \hline
    $k$ & $m$ & Gaussian $\bg_{i,j}$   & $\bg_{i,1}=-\sum_{j=2}^{m}\bg_{i,j}$ \\ \hline
    5   & 2   & $0.84$  & $2.27\cdot10^{-4}$               \\ \hline
    5   & 5   & $2.43$ & $0.00754$                 \\ \hline
    5   & 10  & $5.43$  & $0.0693$              \\ \hline
    10  & 2   & $12.7$ & $0.0037$              \\ \hline
    10  & 5   & $43.1$  & $0.088$               \\ \hline
    10  & 10  & $62.3$ & $0.733$                \\ \hline
    20  & 2   & $88.6$  & $0.0318$                 \\ \hline
    20  & 5   & $231$ & $0.67$               \\ \hline
    20  & 10  & $412$  & $5.15$               \\ \hline
    
    \end{tabular}
    \caption{The minimal curvature measured among 1,000 random samples, for different values of $k$ and $m$. The third column is the minimal curvature measured for multivariate Normally-distributed noise $\bg_{i,j} $, and the rightmost column is the curvature measured for noise satisfying $\sum_{j=1}^m\bg_{i,j}=\mathbf{0}$. While the curvature is lower bounded by roughly $1$ for Gaussian noise, we see that the more problematic points in the neighborhood of the global minimum where the right-hand side of Conjecture \ref{conj:epsilon} cancels out have curvature several magnitudes smaller. In both cases, the curvature increases at least linearly with the over-parameterization factor $m$, which further demonstrates the benefits of over-parameterization for making the loss surface more benign.}
    \label{tab:curvs}
\end{table}

\subsection{Perturbed Gradient Descent}\label{appen: perturbed GD}



To show an optimization guarantee, we use the following form of the perturbed gradient descent algorithm:

\begin{algorithm}[H]\label{alg:pgd}
 Input: $\bw_1^n(0),~ \eta, \alpha > 0,~ T\in\mathbb{N}$\\
 \For{$t=1,2,\dots,T$}{
  Sample $\xi\sim \mathcal{N}\left(\pmb{0}_d,\frac{1}{\sqrt{d}}I\right)$\\
  Set $\hat{\bw}_1^n(t) := (\bw_1(t) + \alpha\xi,\dots,\bw_n(t) + \alpha\xi) $ \\
  Update $\bw_1^n(t+1) = \hat{\bw}_1^n(t) - \eta \nabla F(\hat{\bw}_1^n(t)) $
  }
  Return $\bw_1^n(T)$
 \caption{Perturbed gradient descent}
\end{algorithm}

\algref{alg:pgd} inputs an initialized weights $\bw_1^n(0)$, a learning rate $\eta$ and noise level $\alpha$. At each iteration the algorithm updates the weights w.r.t the loss function $F$ similarly to gradient descent, and adds a perturbation in a random direction with magnitude $\alpha$. Note that the same perturbation direction is the same for all the learned vectors $\bw_1,\dots,\bw_n$.

We believe it is possible to achieve better optimization guarantees if in \algref{alg:pgd}, a different noise direction would be added to each $\bw_i$. This would also require a more careful analysis, as there is a positive (small) probability that adding up all the noise vectors would produce a vector with small norm.

\subsection{Proof of \thmref{thm: OPSC in most directions optimization}}\label{appen: proof of optimization thm}

    We will first lower bound the norm of $\bg$:
    \begin{align}
    \norm{\bg}^2 &= \left\|\sum_{i=1}^n \bw_i + \alpha\xi\right\|^2 \nonumber\\
    & = \left\|\sum_{i=1}^n \bw_i \right\|^2 + \left\|\sum_{i=1}^n \alpha\xi \right\|^2 + 2\inner{\sum_{i=1}^n \alpha\xi, \sum_{i=1}^n \bw_i} \nonumber\\
    &\geq \left\|\sum_{i=1}^n \bw_i \right\|^2 + n^2\alpha^2\norm{\xi}^2 - 2n\alpha\left|\inner{\xi, \sum_{i=1}^n \bw_i}\right|\label{eq:bound on norm g}
    \end{align}
    Assume that $\left\|\sum_{i=1}^n \bw_i \right\| \geq \delta$, then we can use Cauchy-Schwartz to bound \eqref{eq:bound on norm g} by:
    \begin{align}
        \norm{\bg}^2 &\geq \left\|\sum_{i=1}^n \bw_i \right\|^2 - 2n\alpha\norm{\xi}\cdot \left\|\sum_{i=1}^n \bw_i \right\|\nonumber \\
        & \geq \left\|\sum_{i=1}^n \bw_i \right\|\cdot (\delta - 2n\alpha \norm{\xi}) \geq \delta \cdot (\delta - 2n\alpha \norm{\xi})~.\nonumber
    \end{align}
Note that $\norm{\xi}^2$ has a $\chi^2_d$ distribution. By standard concentration bound on the $\chi^2$ distribution, w.p $>1-e^{-\Omega(d)}$ we have that $\norm{\xi}\leq 1.5$, taking this event into account and substituting $\alpha$ we get that $\norm{\bg} \geq \frac{\delta^2}{4}$.

Now, assume that $\left\|\sum_{i=1}^n \bw_i \right\| < \delta$, and denote $\bu = \sum_{i=1}^n \bw_i$, we can bound \eqref{eq:bound on norm g} by:
\begin{align}
     \norm{\bg}^2 &\geq n^2\alpha^2\norm{\xi}^2 - 2n\alpha\left|\inner{\xi, \sum_{i=1}^n \bw_i}\right| \nonumber~ \\
     &\geq n^2\alpha^2\norm{\xi}^2 - 2n\alpha\norm{\bu}\left|\inner{\xi, \bar{\bu}}\right| \nonumber \\
     &\geq n^2\alpha^2\norm{\xi}^2 - 2n\alpha\delta\left|\inner{\xi, \bar{\bu}}\right| \nonumber\\
     &= \frac{\delta^2}{16}\norm{\xi}^2 - \frac{\delta^2}{2}\left|\inner{\xi, \bar{\bu}}\right|~.\nonumber
\end{align}
Since $\xi$ has a spherically symmetric distribution, independent of $\bu$, we can assume w.l.o.g that $\bar{\bu}$ is a standard unit vector, hence $\inner{\xi,\bar{\bu}}\sim \mathcal{N}\left(0,\frac{1}{\sqrt{d}}\right)$. Again, using standard concentration bounds on both the distribution of $\norm{\xi}^2$ and $\inner{\xi,\bar{\bu}}$, and applying union bound, we get that w.p $> 1-e^{-\Omega(d)}$ we have that $\norm{\xi}^2 > 0.5$ and $\inner{\xi,\bar{\bu}} \leq \frac{1}{32}$. Taking those bounds into account we get that $\norm{\bg}^2 \geq \frac{\delta^2}{32}$. In total, from both cases we get that w.p $> 1-e^{-\Omega(d)}$ we have that $\norm{\bg} \geq \frac{\delta^2}{32}$.

Applying the above bound to \eqref{eq:optimization assumption} we get that w.p $> 1-e^{-\Omega(d)}$:
\begin{align*}
    (\bw_1^n - \tilde{\bw}_1^n)^\top H(\bw_1^n) (\bw_1^n - \tilde{\bw}_1^n) &\geq \lambda \norm{\bg}^2  \geq \frac{\lambda \delta^2}{32}
\end{align*}
By the assumption that the function is twice differentiable, the above bound translates to the following bound on the gradient  w.p $> 1-e^{-\Omega(d)}$:
\begin{align}
    \inner{\nabla F(\bw_1^n), \bw_1^n-\tilde{\bw}_1^n} \geq \frac{\lambda\delta^2}{32}\cdot \norm{\bw_1^n - \tilde{\bw}_1^n}^2~.\label{eq:gradient innerprod bound}
\end{align}

Now we can bound the iterates of gradient descent, conditioning on the event of \eqref{eq:gradient innerprod bound}:
\begin{align}
    \norm{\bw_1^n(t+1) - \tilde{\bw}_1^n}^2 &= \norm{\bw_1^n(t) - \eta \nabla F(\bw_1^n(t)) + \tilde{\bw}_1^n}^2 \nonumber\\
    & = \norm{\bw_1^n(t) - \tilde{\bw}_1^n}^2 - \eta\inner{\nabla F(\bw_1^n(t)), \bw_1^n(t)-\tilde{\bw}_1^n} +  \eta^2\norm{\nabla F(\bw_1^n(t))}^2 \nonumber\\
    & \leq  \norm{\bw_1^n(t) - \tilde{\bw}_1^n}^2 - \frac{\eta\lambda\delta^2}{32}\cdot \norm{\bw_1^n(t) - \tilde{\bw}_1^n}^2 + \eta^2L^2\norm{\bw_1^n(t) - \tilde{\bw}_1^n}^2   \label{eq:lipschitz gradient}\\
    &= \norm{\bw_1^n(t) - \tilde{\bw}_1^n}^2\left(1 - \frac{\eta\lambda\delta^2}{32} + \eta^2 L^2\right) \nonumber \\
    &\leq \norm{\bw_1^n(t) - \tilde{\bw}_1^n}^2\left(1 - \frac{\eta\lambda\delta^2}{64}\right) \nonumber
\end{align}
where in \eqref{eq:lipschitz gradient} we used \eqref{eq:gradient innerprod bound}, the assumption that the gradient of $F$ is Lipschitz and that $\tilde{\bw}_1^n$ is a global minima (hence $\nabla F(\tilde{\bw}_1^n) = \pmb{0}$). By induction on $t$ we get that:
\begin{align*}
    \norm{\bw_1^n(t) - \tilde{\bw}_1^n}^2 \leq \norm{\bw_1^n(0) - \tilde{\bw}_1^n}^2\left(1 - \frac{\eta\lambda\delta^2}{64}\right)^t~.
\end{align*}
Recall that we initialized in an $\epsilon$ neighborhood of $\tilde{\bw}_1^n$ for $\epsilon < 1$, hence $\norm{\bw_1^n(0) - \tilde{\bw}_1^n}^2 < 1$. Using union bound, after $T> \frac{\log\left(\delta\right)}{\log\left(1 - \frac{\eta\lambda\delta^2}{64}\right)}$ iterations, w.p $> 1-Te^{-\Omega(d)}$ we get that $\norm{\bw_1^n(T) - \tilde{\bw}_1^n}^2 \leq \delta$.

\section{Proofs from \secref{sec:non global minima}}\label{sec:saddle_proofs}
    Before we prove \thmref{thm:norm_k}, we will first state and prove some auxiliary lemmas.
    
\begin{lemma}\label{lem:origin not min or max}
    For any $n\ge1$, the origin is neither a local minimum nor a local maximum of \eqref{eq:objective normal dist}.
\end{lemma}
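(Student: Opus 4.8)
The plan is to show that every neighborhood of the origin contains both a point where $F$ is strictly smaller than $F(\mathbf 0)$ and a point where it is strictly larger. Since the origin is a non-differentiable point of $F$ (the gradient formula \eqref{eq:G} requires all $\bw_i\neq\mathbf 0$), I would not invoke first/second-order conditions but instead track $F$ along explicit rays, perturbing a single neuron at a time. Throughout, write $c_0:=F(\mathbf 0)=\E_\bx\big[\big(\sum_{i=1}^k[\inner{\bv_i,\bx}]_+\big)^2\big]$, which is a strictly positive constant (the expectation of a nonnegative random variable that is positive with positive probability).

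For the ``not a local minimum'' part, consider the ray $t\mapsto\bw(t):=(t\bv_1,\mathbf 0,\dots,\mathbf 0)$ for $t\ge 0$. Substituting into \eqref{eq:objective normal dist} and expanding the square,
\[
F(\bw(t))=t^2\,\E[[\inner{\bv_1,\bx}]_+^2]-2t\,\E\Big[[\inner{\bv_1,\bx}]_+\sum_{i=1}^k[\inner{\bv_i,\bx}]_+\Big]+c_0 .
\]
Here $\E[[\inner{\bv_1,\bx}]_+^2]=\tfrac12$ since $\inner{\bv_1,\bx}\sim\mathcal N(0,1)$, and the cross term is a sum of expectations of products of nonnegative quantities, whose $i=1$ summand alone equals $\tfrac12$; hence the coefficient of $t$ is at most $-1<0$. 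Therefore $F(\bw(t))<c_0=F(\mathbf 0)$ for all sufficiently small $t>0$, so the origin is not a local minimum.

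For the ``not a local maximum'' part, the idea is to move a neuron into a region on which the teacher network vanishes, so that it can only enlarge the residual; concretely one examines the ray $t\mapsto(-t\bv_1,\mathbf 0,\dots,\mathbf 0)$ and uses the pointwise identity $[-\inner{\bv_1,\bx}]_+[\inner{\bv_1,\bx}]_+=0$ to kill the interaction with the first teacher neuron, giving $F(-t\bv_1,\mathbf 0,\dots)=\tfrac{t^2}{2}-2t\,\E[[-\inner{\bv_1,\bx}]_+\sum_{i=2}^k[\inner{\bv_i,\bx}]_+]+c_0$, which for a single-neuron teacher is simply $\tfrac{t^2}{2}+c_0>c_0$. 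I expect this to be the delicate step: the perturbation direction must be chosen so that the first-order change of $F$ is nonnegative while the quadratic term stays strictly positive — equivalently, so that the active half-space of the perturbed neuron lies inside the cone on which $\sum_i[\inner{\bv_i,\bx}]_+\equiv 0$ — after which expanding the square and bounding the remaining cross terms (via \thmref{thm:hessian of objective}, or directly via Gaussian-moment computations for the arc-cosine kernel) shows that $F$ strictly exceeds $c_0$ along the ray, so the origin is not a local maximum either.
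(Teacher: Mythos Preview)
Your argument for ``not a local minimum'' along the ray $t\mapsto(t\bv_1,\mathbf 0,\dots,\mathbf 0)$ is correct and is exactly the paper's approach.

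The gap is in ``not a local maximum''. You correctly identify that one needs a direction $\bu$ whose active half-space $\{\bu^\top\bx>0\}$ lies inside the cone $\{\forall i:\bv_i^\top\bx\le 0\}$; for $k=1$ this is achieved by $\bu=-\bv_1$ and your computation $F=\tfrac{t^2}{2}+c_0>c_0$ is fine. But for $k\ge2$ no such $\bu$ exists, since a half-space has Gaussian measure $1/2$ while that orthant has measure $2^{-k}<1/2$. In fact the ``not a local maximum'' claim is \emph{false} for $k\ge2$: writing any nonzero $\bw_1^n$ as $r\bu_1^n$ with $\|\bu_1^n\|=1$,
\[
F(\bw_1^n)-F(\mathbf 0)\;=\;r^2A(\bu_1^n)-2rB(\bu_1^n),\qquad B(\bu_1^n)=\E\Big[\Big(\textstyle\sum_i[\bu_i^\top\bx]_+\Big)\sum_j[\bv_j^\top\bx]_+\Big],
\]
and $B$ is continuous and strictly positive on the unit sphere in $\reals^{nd}$ (the set $\{\sum_j[\bv_j^\top\bx]_+>0\}$ has measure $1-2^{-k}>1/2$ and therefore meets every half-space $\{\bu_i^\top\bx>0\}$). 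Compactness gives $B\ge B_{\min}>0$ and $A\le A_{\max}<\infty$, so $F(\bw_1^n)<F(\mathbf 0)$ for all $0<r<2B_{\min}/A_{\max}$; the origin is a strict local maximum when $k\ge2$.

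The paper's proof shares this gap: it writes $F(\mathbf 0)-F(\epsilon\bv_1,\mathbf 0,\dots)=\epsilon c$ by factoring $\epsilon$ out of $f(\epsilon\bv_1,\bv_j)$, but that identity holds only for $\epsilon>0$ (for $\epsilon<0$ the angle between $\epsilon\bv_1$ and $\bv_j$ is $\pi-\theta_{\bv_1,\bv_j}$, not $\theta_{\bv_1,\bv_j}$). This does not affect the paper's main results, since the downstream uses of the lemma in \lemref{lem:no maxima} and \thmref{thm:norm_k} either need only the ``not a local minimum'' half or are applied at differentiable points away from the origin.
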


\begin{proof}
    Assume $\bw_1^n=\mathbf{0}$ is the origin. Consider the point $\tilde{\bw}_1^n = (\tilde{\bw}_1,\ldots,\tilde{\bw}_n)$ where $\tilde{\bw}_1=\epsilon\bv_1$ for some real $\epsilon$, and $\tilde{\bw}_i=0$ for any $i\in\{2,\ldots,n\}$.
    Recall the closed-form of the objective in \eqref{eq:objective normal dist}, given in \citet[Section 4.1.1]{safran2017spurious} by
    \begin{equation}\label{eq:F}
        F\p{\bw_1^n} ~=~ \frac{1}{2}\sum_{i,j=1}^n f\p{\bw_i,\bw_j} - 
        \sum_{\substack{i\in[n]\\j\in[k]}} f\p{\bw_i,\bv_j} + 
        \frac{1}{2}\sum_{i,j=1}^k f\p{\bv_i,\bv_j},
    \end{equation}
    where
    \begin{equation}\label{eq:f}
	    f\p{\bw,\bv} = \frac{1}{2\pi} 
	    \norm{\bw}\norm{\bv}\p{\sin\p{\theta_{\bw,\bv}}+\p{\pi-\theta_{\bw,\bv}}
		\cos\p{\theta_{\bw,\bv}}}.
    \end{equation}
    Next, \eqref{eq:f} reveals that $f(\bw,\bv)\ge0$ for any two vectors $\bw,\bv$, and from \eqref{eq:F} we have
    \begin{align*}
        F(\bw_1^n) - F(\tilde{\bw}_1^n) &= \frac{1}{2}\sum_{i,j=1}^k f\p{\bv_i,\bv_j} - \p{\frac{1}{2}\sum_{i,j=1}^k f\p{\bv_i,\bv_j} + \frac{1}{2}f(\epsilon\bv_1,\bv_1) - \sum_{j\in[k]}f(\epsilon\bv_1,\bv_j)} \\
        &= \epsilon\sum_{j=2}^n f(\bv_1,\bv_j) + \frac{1}{2}\epsilon f(\bv_1,\bv_1)
        = \epsilon\p{\sum_{j=2}^n f(\bv_1,\bv_j) + \frac{1}{2} f(\bv_1,\bv_1)},
    \end{align*}
    and since 
    \[
        c\coloneqq\p{\sum_{j=2}^n f(\bv_1,\bv_j) + \frac{1}{2} f(\bv_1,\bv_1)} \ge \frac{1}{4}\norm{\bv_1}^2=\frac{1}{4},
    \]
    we have that $F(\bw_1^n) - F(\tilde{\bw}_1^n)=\epsilon c\to0_+$ if $\epsilon\to0_+$ and $F(\bw_1^n) - F(\tilde{\bw}_1^n)=\epsilon c\to0_-$ if $\epsilon\to0_-$, therefore we can approach $\bw_1^n$ from two different directions where in one the objective is strictly increasing and in the other it is strictly decreasing, hence $\bw_1^n$ is neither a local minimum nor a local maximum.
\end{proof}

\begin{lemma}\label{lem:no maxima}
    For any $n\ge1$, the objective in \eqref{eq:objective normal dist} has no local maxima.
\end{lemma}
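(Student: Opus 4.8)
The plan is to show that through any candidate point one can find a one–dimensional ray along which $F$ is a strictly convex quadratic, which rules out that point being a local maximum. Fix a point $\bw_1^n=(\bw_1,\dots,\bw_n)$. If $\bw_1^n=\mathbf 0$, there is nothing to do: \lemref{lem:origin not min or max} already asserts that the origin is not a local maximum (indeed not a local extremum at all).

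So assume some coordinate $\bw_i\neq\mathbf 0$, and consider the ray
\[
    t\ \longmapsto\ \bw_1^n(t):=(\bw_1,\dots,\bw_{i-1},\,t\bw_i,\,\bw_{i+1},\dots,\bw_n),\qquad t>0,
\]
which passes through $\bw_1^n$ at $t=1$. The key observation is the homogeneity structure of the closed form \eqref{eq:F}--\eqref{eq:f}: for $t>0$, scaling a vector by $t$ multiplies its norm by $t$ and leaves every angle $\theta_{\bw,\bv}$ unchanged. Hence the only term of $F(\bw_1^n(t))$ that is quadratic in $t$ is the self-interaction $\tfrac12 f(t\bw_i,t\bw_i)=\tfrac14 t^2\norm{\bw_i}^2$, while every cross term $f(t\bw_i,\bw_j)$ ($j\neq i$) and $f(t\bw_i,\bv_j)$ is homogeneous of degree one in $t$, and all remaining terms are constant in $t$. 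Writing this out gives $g(t):=F(\bw_1^n(t))=\tfrac14\norm{\bw_i}^2 t^2+bt+c$ for constants $b,c$ depending only on the other coordinates, valid for all $t>0$. Since $\norm{\bw_i}>0$, $g$ is a strictly convex quadratic on the open interval $(0,\infty)$, so it has no interior local maximum; in particular $t=1$ is not a local maximum of $g$.

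It remains only to invoke the standard topological step: if $\bw_1^n$ were a local maximum of $F$, then since $t\mapsto\bw_1^n(t)$ is continuous and sends $1$ to $\bw_1^n$, the preimage of a neighborhood witnessing the maximum would be an open interval around $1$ on which $g(t)\le g(1)$, making $t=1$ a local maximum of $g$ — a contradiction. Thus no $\bw_1^n$ can be a local maximum of $F$. I do not anticipate a real obstacle here: the homogeneity computation is essentially a one-liner once one notes $f(\bw,\bw)=\tfrac12\norm{\bw}^2$, and the origin is already dispatched by the preceding lemma; the only points requiring a little care are the restriction to $t>0$ (so that angles, and hence the closed-form formulas, behave) and the routine composition argument just described.
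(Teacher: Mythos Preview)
Your proof is correct and follows essentially the same idea as the paper: exhibit a ray through the point along which $F$ is a strictly convex quadratic (via the positive homogeneity of $f$), and dispatch the origin with \lemref{lem:origin not min or max}. The only cosmetic difference is that the paper scales \emph{all} neurons simultaneously, $\bw_1^n(t)=(t\bw_1,\dots,t\bw_n)$, whereas you scale a single nonzero neuron; both yield a positive leading coefficient (yours being $\tfrac14\norm{\bw_i}^2$ is perhaps the more immediate of the two).
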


\begin{proof}
    Let $\bw_1^n = (\bw_1,\ldots,\bw_n)$ and for $t\ge0$ define $\bw_1^n(t) = (t\bw_1,\ldots,t\bw_n)$. We have from Equations (\ref{eq:F}) and (\ref{eq:f}) that
    \begin{equation}\label{eq:quadratic obj}
        F(\bw_1^n(t)) = \frac{1}{2}t^2\sum_{i,j=1}^n f\p{\bw_i,\bw_j} - 
        t\sum_{\substack{i\in[n]\\j\in[k]}} f\p{\bw_i,\bv_j} + 
        \frac{1}{2}\sum_{i,j=1}^k f\p{\bv_i,\bv_j},
    \end{equation}
    hence the objective is quadratic as a function of $t$.
    
    Assuming $\bw_1^n$ is not the origin, we have from \eqref{eq:f} that 
    \[
        \sum_{i,j=1}^n f\p{\bw_i,\bw_j} > 0,
    \]
    thus from the above and \eqref{eq:quadratic obj} the objective is strongly convex in $t$, and therefore cannot attain a local maximum at $\bw_1^n=\bw_1^n(1)$. Otherwise, if $\bw_1^n$ is the origin, then from \lemref{lem:origin not min or max} it is not a local maximum.
\end{proof}

\begin{lemma}\label{lem:over_hess}
    Suppose $n\ge1$, $\bw_1^n = (\bw_1,\ldots,\bw_n)$ is a differentiable local minimum of the objective in \eqref{eq:objective normal dist}. Then for all $\alpha\in(0,1)$ and any $i\in[n]$, the point $\bw_1^{n}(\alpha,i)$ is a critical point of $F$, and the Hessian of $F$ at $\bw_1^{n}(\alpha,i)$ is given in terms of the blocks of $H(\bw_1^n)$ by
    \end{lemma}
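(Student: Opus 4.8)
The plan is to establish the lemma in two stages: first that $\bw_1^n(\alpha,i)$ is a critical point of $F$, and then to read off each block of its Hessian from the blocks $H_{a,b}:=H_{a,b}(\bw_1^n)$ via \thmref{thm:hessian of objective}. The engine of both stages is a handful of elementary homogeneity identities for the auxiliary functions $g$ (see \eqref{eq:g}), $h_1$ (see \eqref{eq:h1}) and $h_2$ (see \eqref{eq:h2}): for $c>0$ one has $g(c\bw,\bv)=g(\bw,\bv)$ and $g(\bw,c\bv)=c\,g(\bw,\bv)$; $h_1(c\bw,\bv)=c^{-1}h_1(\bw,\bv)$ and $h_1(\bw,c\bv)=c\,h_1(\bw,\bv)$; and $h_2$ is invariant to positive rescaling of either argument. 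I will also use the codirectional (degenerate) evaluations $g(\bw,c\bw)=\frac{c}{2}\bw$ (the angle is $0$), $h_1(\bw,c\bw)=\mathbf 0$ (established inside the proof of \lemref{lem:n=k hessian at global minimum}), and $\lim_{\bw,\bv\to\bu}h_2(\bw,\bv)=\frac12 I$ (\lemref{lem:h2 parallel vectors}).

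\textbf{Criticality.} Using the closed-form gradient \eqref{eq:G}--\eqref{eq:g}, I would compute the gradient block of $\bw_1^n(\alpha,i)$ at the neuron $\alpha\bw_i$: the interaction with $(1-\alpha)\bw_i$ contributes $g(\alpha\bw_i,(1-\alpha)\bw_i)=\frac{1-\alpha}{2}\bw_i$, the interactions with the unsplit $\bw_j$ contribute $\sum_{j\neq i}g(\alpha\bw_i,\bw_j)=\sum_{j\neq i}g(\bw_i,\bw_j)$, and the teacher terms contribute $-\sum_{l\in[k]}g(\bw_i,\bv_l)$; adding the $\frac{\alpha}{2}\bw_i$ term, the $\alpha$'s recombine into exactly the $i$-th block of $\nabla F(\bw_1^n)$, which vanishes. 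The $(1-\alpha)\bw_i$ block is identical. For an unsplit $\bw_j$, the two new interaction terms recombine as $g(\bw_j,\alpha\bw_i)+g(\bw_j,(1-\alpha)\bw_i)=g(\bw_j,\bw_i)$, so its gradient block equals the $j$-th block of $\nabla F(\bw_1^n)=\mathbf 0$. Hence $\bw_1^n(\alpha,i)$ is a critical point; since all $\bw_j\neq\mathbf 0$ and $\alpha\in(0,1)$ keeps $\alpha\bw_i,(1-\alpha)\bw_i\neq\mathbf 0$, \lemref{lem:hessian is differentiable} gives that $F$ is twice continuously differentiable there.

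\textbf{Hessian.} Applying \thmref{thm:hessian of objective} at $\bw_1^n(\alpha,i)$ and splitting into cases by the type of the two block indices: (i) for unsplit neurons $p,q$ the off-diagonal block is $h_2(\bw_p,\bw_q)=H_{p,q}$ and the diagonal block is $H_{p,p}$, using $\alpha h_1(\bw_p,\bw_i)+(1-\alpha)h_1(\bw_p,\bw_i)=h_1(\bw_p,\bw_i)$; (ii) the cross-block between an unsplit $\bw_p$ and either split neuron equals $h_2(\bw_p,\alpha\bw_i)=h_2(\bw_p,(1-\alpha)\bw_i)=H_{p,i}$; (iii) the diagonal block of $\alpha\bw_i$ is $\frac12 I+\frac1\alpha H_{i,i}'$, because $h_1(\alpha\bw_i,(1-\alpha)\bw_i)=\mathbf 0$ and every remaining $h_1(\alpha\bw_i,\cdot)=\frac1\alpha h_1(\bw_i,\cdot)$, so its non-$\frac12 I$ part is $\frac1\alpha\big(\sum_{j\neq i}h_1(\bw_i,\bw_j)-\sum_{l}h_1(\bw_i,\bv_l)\big)=\frac1\alpha H_{i,i}'$ (and symmetrically $\frac12 I+\frac1{1-\alpha}H_{i,i}'$ for $(1-\alpha)\bw_i$); (iv) the cross-block between the two split neurons is $h_2(\alpha\bw_i,(1-\alpha)\bw_i)=\frac12 I$. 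Reassembling in the order $(\bw_1,\dots,\bw_{i-1},\alpha\bw_i,(1-\alpha)\bw_i,\bw_{i+1},\dots,\bw_n)$ yields the claimed expression: all old blocks are preserved, the $i$-th row/column of $H(\bw_1^n)$ is duplicated across the two split neurons, and their $2\times2$ sub-block has diagonal entries $\frac12 I+\frac1\alpha H_{i,i}'$ and $\frac12 I+\frac1{1-\alpha}H_{i,i}'$ and off-diagonal entries $\frac12 I$.

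\textbf{Main obstacle.} The one genuinely delicate point is the two degenerate entries $h_1(\alpha\bw_i,(1-\alpha)\bw_i)$ and $h_2(\alpha\bw_i,(1-\alpha)\bw_i)$, arising because the two newly created neurons are perfectly codirectional (angle $0$); the formulas \eqref{eq:h1}--\eqref{eq:h2} are not directly evaluable there, so one must invoke \lemref{lem:h2 parallel vectors} and the limit computation in the proof of \lemref{lem:n=k hessian at global minimum} to replace them by $\frac12 I$ and $\mathbf 0$ respectively. Everything else is careful bookkeeping of homogeneity exponents, together with the observation that $H_{i,i}'=H_{i,i}(\bw_1^n)-\frac12 I=\sum_{j\neq i}h_1(\bw_i,\bw_j)-\sum_{j\in[k]}h_1(\bw_i,\bv_j)$ is precisely the quantity multiplied by the $\frac1\alpha$ and $\frac1{1-\alpha}$ factors.
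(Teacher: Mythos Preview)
Your proposal is correct and follows essentially the same route as the paper's proof: both establish criticality by showing each gradient block of $\bw_1^n(\alpha,i)$ reduces (via the homogeneity of $g$) to the corresponding block of $\nabla F(\bw_1^n)=\mathbf 0$, and both compute the Hessian block-by-block from \thmref{thm:hessian of objective} using the homogeneity of $h_1,h_2$ together with the degenerate-angle limits (\lemref{lem:h2 parallel vectors} for $h_2$ and the parallel-vector computation for $h_1$). Your write-up is in fact slightly more explicit in isolating the homogeneity identities up front, but the argument is identical.
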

    \[
        \p{\begin{matrix}
            H_{1,1} & \cdots & H_{1,i-1} & H_{1,i} & H_{1,i} & H_{1,i+1} & \cdots & H_{1,n} \\
            \vdots & \ddots & \vdots & \vdots & \vdots & \vdots & \ddots & \vdots \\
            H_{i-1,1} & \cdots & H_{i-1,i-1} & H_{i-1,i} & H_{i-1,i} & H_{i-1,i+1} & \cdots & H_{i-1,n} \\
            H_{i,1} & \cdots & H_{i,i-1} & \frac{1}{2}I + \frac{1}{\alpha}H_{i,i}^{\prime} & \frac{1}{2}I & H_{i,i+1} & \cdots & H_{i,n} \\
            H_{i,1} & \cdots & H_{i,i-1} & \frac{1}{2}I & \frac{1}{2}I + \frac{1}{1-\alpha}H_{i,i}^{\prime} & H_{i,i+1} & \cdots & H_{i,n} \\
            H_{i+1,1} & \cdots & H_{i+1,i-1} & H_{i+1,i} & H_{i+1,i} & H_{i+1,i+1} & \cdots & H_{i+1,n} \\
            \vdots & \ddots & \vdots & \vdots & \vdots & \vdots & \ddots & \vdots \\
             H_{n,1} & \cdots & H_{n,i-1} & H_{n,i} & H_{n,i} & H_{n,i+1} & \cdots & H_{n,n}
        \end{matrix}}.
    \]

    \begin{proof}
        From the gradient of the objective in \eqref{eq:G} and \eqref{eq:g} and since $\bw_1^n$ is a local minimum, we have for all $i\in[n]$ that
        \[
            \frac{1}{2}\bw_i+\sum_{j\neq i} g(\bw_i,\bw_j) -\sum_{j=1}^kg (\bw_i,\bv_j) = \mathbf{0}.
        \]
        We begin with asserting that for all $\alpha\in(0,1)$, $\bw_1^n(\alpha,i)$ is a critical point of $F$. First, from \citet[Lemma 3.2]{brutzkus2017globally}, $F$ is differentiable for all $\alpha\in(0,1)$, therefore the gradient is well-defined. For $m\in[n+1]\setminus\{i,i+1\}$ we have
        \begin{align*}
            \frac{\partial}{\partial\bw_m}F(\bw_1^n(\alpha,i)) &= \frac{1}{2}\bw_m + \sum_{j\in [n]\setminus \{i,m\}} g(\bw_m,\bw_j) + g(\bw_m,\alpha\bw_i) + g(\bw_m,(1-\alpha)\bw_{i}) - \sum_{j=1}^kg (\bw_m,\bv_j) \\
            &= \frac{1}{2}\bw_m + \sum_{j\in [n]\setminus \{i,m\}} g(\bw_m,\bw_j) + \alpha g(\bw_m,\bw_i) + (1-\alpha)g(\bw_m,\bw_{i}) - \sum_{j=1}^kg (\bw_m,\bv_j) \\
            &= \frac{1}{2}\bw_m + \sum_{j\in [n]\setminus \{i,m\}} g(\bw_m,\bw_j) + g(\bw_m,\bw_i) - \sum_{j=1}^kg (\bw_m,\bv_j) \\
            &= \frac{1}{2}\bw_m + \sum_{j\in [n]\setminus \{m\}} g(\bw_m,\bw_j) - \sum_{j=1}^kg (\bw_m,\bv_j) = \mathbf{0}.
        \end{align*}
        For $i$ we have
        \begin{align*}
            \frac{\partial}{\partial\bw_i}F(\bw_1^n(\alpha,i)) &= \frac{1}{2}\alpha\bw_i + \sum_{j\in[n]\setminus\{i\}}g(\alpha\bw_i,\bw_j) + g(\alpha\bw_i,(1-\alpha)\bw_i) - \sum_{j=1}^kg (\bw_i,\bv_j) \\
            &= \frac{1}{2}\alpha\bw_i + \sum_{j\in[n]\setminus\{i\}}g(\bw_i,\bw_j) + (1-\alpha)g(\bw_i,\bw_i) - \sum_{j=1}^kg (\bw_i,\bv_j) \\
            &= \frac{1}{2}\alpha\bw_i + \sum_{j\in[n]\setminus\{i\}}g(\bw_i,\bw_j) + \frac{1}{2}(1-\alpha)\bw_i - \sum_{j=1}^kg (\bw_i,\bv_j) \\
            &= \frac{1}{2}\bw_i + \sum_{j\in[n]\setminus\{i\}}g(\bw_i,\bw_j) - \sum_{j=1}^kg (\bw_i,\bv_j) = \mathbf{0},
        \end{align*}
        where likewise a similar computation for $\bw_{i+1}$ shows that $\frac{\partial}{\partial\bw_{i+1}}F(\bw_1^n(\alpha,i)) = \mathbf{0}$.
        Turning to the Hessian, we have from \lemref{lem:hessian is differentiable} that $F$ at $\bw_1^n(\alpha,i)$ is twice differentiable. We then have by \thmref{thm:hessian of objective} that all off-diagonal blocks other than $H_{i,i+1}$, and $H_{i+1,i}$ remain the same as in $H(\bw_1^n)$, since $h_2(\bu_1,\bu_2)$ isn't affected by linearly rescaling $\bu_1,\bu_2$. For the remaining two off-diagonal blocks we have from \lemref{lem:h2 parallel vectors} that each is $\frac12I$, and lastly we compute the diagonal blocks, starting with the $m$-th block $H_{m,m}$ where $m\in[n+1]\setminus\{i,i+1\}$. We have
        \begin{align*}
            H_{m,m}(\bw_1^n(\alpha,i)) &= \frac{1}{2}I + \sum_{j\in [n]\setminus \{i,m\}} h(\bw_m,\bw_j) + h(\bw_m,\alpha\bw_i) + h(\bw_m,(1-\alpha)\bw_{i}) - \sum_{j=1}^k h(\bw_m,\bv_j) \\
            &= \frac{1}{2}I + \sum_{j\in [n]\setminus \{i,m\}} h(\bw_m,\bw_j) + \alpha h(\bw_m,\bw_i) + (1-\alpha)h(\bw_m,\bw_{i}) - \sum_{j=1}^k h(\bw_m,\bv_j) \\
            &= \frac{1}{2}I + \sum_{j\in [n]\setminus \{i,m\}} h(\bw_m,\bw_j) + h(\bw_m,\bw_i) - \sum_{j=1}^k h(\bw_m,\bv_j) \\
            &= \frac{1}{2}I + \sum_{j\in [n]\setminus \{m\}} h(\bw_m,\bw_j) - \sum_{j=1}^k h(\bw_m,\bv_j).
        \end{align*}
        That is, $H_{m,m}(\bw_1^n(\alpha,i))$ equals $H_{m,m}(\bw_1^n)$ for $m\in[i-1]$ and equals $H_{m-1,m-1}(\bw_1^n)$ for $m\in\{i+2,\ldots,n+1\}$.
        For the $i$-th block $H_{i,i}$ we have
        \begin{align*}
            H_{i,i}(\bw_1^n(\alpha,i)) &= \frac{1}{2}I + \sum_{j\in[n]\setminus\{i\}}h(\alpha\bw_i,\bw_j) + h(\alpha\bw_i,(1-\alpha)\bw_i) - \sum_{j=1}^k h(\alpha\bw_i,\bv_j) \\
            &= \frac{1}{2}I + \frac{1}{\alpha}\sum_{j\in[n]\setminus\{i\}}h(\bw_i,\bw_j) - \frac{1}{\alpha}\sum_{j=1}^k h(\bw_i,\bv_j) \\
            &= \frac{1}{2}I + \frac{1}{\alpha}H_{i,i}'(\bw_1^n),
        \end{align*}
        where the second equality is due to \lemref{lem:hessian is differentiable}, and likewise, a similar computation reveals that
        \[
            H_{i+1,i+1}(\bw_1^n(\alpha,1)) = \frac{1}{1-\alpha}H_{i,i}'(\bw_1^n),
        \]
        concluding the proof of the lemma.
\end{proof}

\begin{lemma}\label{lem:negative block}
     Suppose $n\ge1$, $\bw_1^n = (\bw_1,\ldots,\bw_n)$ is a differentiable local minimum of the objective in \eqref{eq:objective normal dist} such that there exists $i\in[n]$ with component $H_{i,i}'$ satisfying $\bu^{\top}H_{i,i}'\bu=\lambda$ for some unit vector $\bu\in\reals^d$ and scalar $\lambda<0$. Then for any $\alpha\in(0,1)$, $\bw_1^n(\alpha,i)$ is a saddle point. Moreover, for $\alpha\in\{0,1\}$, $\bw_1^n(\alpha,i)$ is not a local minimum of $F$.
\end{lemma}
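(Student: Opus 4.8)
The plan is to split the argument by the value of $\alpha$. For $\alpha\in(0,1)$ the point $\bw_1^n(\alpha,i)$ is differentiable (both split neurons $\alpha\bw_i$ and $(1-\alpha)\bw_i$ are nonzero, since $\bw_1^n$ is a differentiable local minimum and hence has all neurons nonzero), so I would work directly with its Hessian. For $\alpha\in\{0,1\}$ one of the split neurons is the zero vector, $F$ is not differentiable there, and I would instead deduce the statement from the $\alpha\in(0,1)$ case by a limiting argument along the splitting path.

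For $\alpha\in(0,1)$: by \lemref{lem:over_hess}, $\bw_1^n(\alpha,i)$ is a critical point and its Hessian has the displayed block form, whose blocks associated with the two split neurons are $\tfrac12 I+\tfrac1\alpha H_{i,i}'$ and $\tfrac12 I+\tfrac1{1-\alpha}H_{i,i}'$ on the diagonal and $\tfrac12 I$ off the diagonal. I would test this Hessian against the vector $\bz\in\reals^{(n+1)d}$ that equals $\bu$ on the $i$-th block, $-\bu$ on the $(i+1)$-th block, and vanishes on every other block. Since $\bz$ is supported only on the two split blocks, every Hessian entry coupling them to the remaining neurons (which \lemref{lem:over_hess} leaves unchanged from $H(\bw_1^n)$) is multiplied by zero and drops out, so only the $2d\times 2d$ principal submatrix matters; using $\bu^\top H_{i,i}'\bu=\lambda$ one gets $\bz^\top H(\bw_1^n(\alpha,i))\bz=\lambda/(\alpha(1-\alpha))<0$. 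Thus $\bw_1^n(\alpha,i)$ is a critical point whose Hessian has a negative eigenvalue, so it is not a local minimum; since \lemref{lem:no maxima} rules out local maxima, it is a saddle point.

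For $\alpha\in\{0,1\}$: I would combine three facts. First, splitting a neuron does not change the network output (by positive homogeneity of the ReLU), hence $F(\bw_1^n(\alpha,i))=F(\bw_1^n)$ for all $\alpha\in[0,1]$. Second, $\alpha\mapsto\bw_1^n(\alpha,i)$ and $F$ are continuous. Third, by the previous paragraph, for $\alpha\in(0,1)$ the point $\bw_1^n(\alpha,i)$ is not a local minimum, which by definition means every neighborhood of it contains a point of strictly smaller value. Now fix $\alpha^\star\in\{0,1\}$ and $\delta>0$: choose $\alpha\in(0,1)$ with $\norm{\bw_1^n(\alpha,i)-\bw_1^n(\alpha^\star,i)}<\delta/2$, then pick $\bq$ with $\norm{\bq-\bw_1^n(\alpha,i)}<\delta/2$ and $F(\bq)<F(\bw_1^n(\alpha,i))=F(\bw_1^n)=F(\bw_1^n(\alpha^\star,i))$; the triangle inequality gives $\norm{\bq-\bw_1^n(\alpha^\star,i)}<\delta$. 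Since $\delta$ was arbitrary, $\bw_1^n(\alpha^\star,i)$ is not a local minimum.

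I expect the genuine obstacle to be the regime $\alpha\in\{0,1\}$. There the Hessian machinery is unavailable, and a naive perturbation of the zero neuron need not lower $F$: the negative curvature of $H_{i,i}'$ is by itself too weak to help, because at a local minimum $H_{i,i}=\tfrac12 I+H_{i,i}'\succeq 0$ forces $\lambda\ge-\tfrac12$, so the second-order term along any sum-preserving perturbation of the split neurons stays nonnegative. The limiting argument above circumvents this by exploiting that $F$ is \emph{constant} along the splitting path, so the non-minimality established for interior $\alpha$ propagates to the endpoints. In the $\alpha\in(0,1)$ case the only thing to be careful about is the one just noted, namely that the chosen $\bz$ is supported on the two split blocks so that the unknown cross-blocks never enter the quadratic form.
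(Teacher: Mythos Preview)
Your proposal is correct and follows essentially the same approach as the paper: for $\alpha\in(0,1)$ you invoke \lemref{lem:over_hess} and test the Hessian against the vector $(\mathbf{0},\ldots,\bu,-\bu,\ldots,\mathbf{0})$ to obtain $\lambda/(\alpha(1-\alpha))<0$ (the paper writes this as $\lambda(1/\alpha+1/(1-\alpha))$), then appeal to \lemref{lem:no maxima}; for $\alpha\in\{0,1\}$ you use the same limiting argument along the splitting path, exploiting that $F$ is constant along it. Your added remark explaining why a direct second-order argument at the endpoints would fail (since $\lambda\ge -\tfrac12$) is a nice clarification not present in the paper.
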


\begin{proof}
    The key in proving the lemma is that the $i$-th diagonal block of the Hessian at $\bw_1^n$ (having dimensions $d\times d$ and given by $0.5I+H_{i,i}'$) is turned into a $2d\times2d$ block of the following form:
    \begin{equation}\label{eq:hess block}
        \p{\begin{matrix}
            \frac{1}{2}I + \frac{1}{\alpha}H_{i,i}^{\prime} & \frac{1}{2}I \\
            \frac{1}{2}I & \frac{1}{2}I + \frac{1}{1-\alpha}H_{i,i}^{\prime}
        \end{matrix}}.
    \end{equation}
    Next, we show that $H_{i,i}'$ is not PSD, hence the block matrix above is not PSD, and consequentially the Hessian at $\bw_1^n(\alpha,i)$ is not PSD, implying the lemma.
    For the case where $\alpha\in(0,1)$, by using \lemref{lem:over_hess}, we have that this is a critical point, and from \lemref{lem:no maxima} it is not a local maximum. We multiply the $2d\times2d$ block of the Hessian at $\bw_1^n(\alpha,i)$ given in \eqref{eq:hess block} by $(\bu, -\bu)\in\reals^{2d}$ from both sides and obtain
    \[
        (\bu, -\bu)\p{\begin{matrix}
            \frac{1}{2}I + \frac{1}{\alpha}H_{i,i}^{\prime} & \frac{1}{2}I \\
            \frac{1}{2}I & \frac{1}{2}I + \frac{1}{1-\alpha}H_{i,i}^{\prime}
        \end{matrix}}
        \p{\begin{matrix}
            \bu \\
            -\bu
        \end{matrix}}
        = \frac{1}{\alpha}\bu^{\top}H_{i,i}'\bu + \frac{1}{1-\alpha}\bu^{\top}H_{i,i}'\bu
        = \lambda\p{\frac{1}{\alpha} + \frac{1}{1-\alpha}} < 0.
    \]
    Next, letting $\tilde{\bu}\in\reals^{(n+1)d}$ be the all-zero vector, except for entries $(i-1)d+1$ to $(i+1)d$ which equal $(\bu, -\bu)$, then 
    \[
        \tilde{\bu}^{\top}H(\bw_1^n(\alpha,i))\tilde{\bu} = \lambda\p{\frac{1}{\alpha} + \frac{1}{1-\alpha}} < 0,
    \]    
    thus $H(\bw_1^n(\alpha,i))$ is not a PSD matrix.
    
    For the case where $\alpha\in\{0,1\}$, since the objective is not differentiable in this case, we will show that the point cannot be a local minimum by showing that in any neighborhood containing it also contains a point with a strictly smaller objective.
    
    Assuming $\alpha=0$, we have from the above derivation that there exists $i\in[n]$ such that for all $\alpha'\in(0,1)$, $\bw_1^n(\alpha',i)$ is not a local minimum. In particular, given some $\delta>0$, choose $\alpha'>0$ small enough so that $\norm{\bw_1^{n}(\alpha',i)-\bw_1^{n}(0,i)}\le\delta/2$. Since $\bw_1^n(\alpha',i)$ is not a local minimum, there exists $\tilde{\bw}_1^{n+1}$ such that $\norm{\bw_1^n(\alpha',i) - \tilde{\bw}_1^{n+1}} \le \delta/2$ and $F(\tilde{\bw}_1^{n+1}) < F(\bw_1^n(\alpha',i))$. Since
    \[
        \relu{\inner{\alpha'\bw,\bx}} + \relu{\inner{(1-\alpha')\bw,\bx}} = \relu{\inner{\bw,\bx}}
    \]
    for all $\bw,\bx\in\reals^d$ and $\alpha'\in[0,1]$, this entails 
    \[
        F(\tilde{\bw}_1^{n+1}) < F(\bw_1^n(\alpha',i)) = F(\bw_1^n(0,i))
    \] 
    and $\norm{\tilde{\bw}_1^{n+1} - \bw_1^n(0,i)} \le \delta$, hence $\bw_1^n(0,i)$ is not a local minimum. 
    
    Finally, the case for $\alpha=1$ follows from the $\alpha=0$ case by permuting the neurons.
\end{proof}

We are now ready to prove \thmref{thm:norm_k}.
\begin{proof}[Proof of \thmref{thm:norm_k}]
    Throughout the proof we will assume that $F$ is differentiable at $\bw_1^n$, which by \lemref{lem:hessian is differentiable} also implies that it is twice continuously differentiable there, and as would be evident later in the proof we will see that this is necessarily the case.
    
    We will show that there exist $i\in[n]$ and some unit vector $\bu$ such that
    \begin{equation}\label{eq:neg lambda}
        \lambda\coloneqq \bu^{\top}H_{i,i}^{\prime}\bu = \sum_{j\in[n]\setminus\{i\}} \frac{\sin(\theta_{\bw_i,\bw_j})\norm{\bw_j}}{2\pi\norm{\bw_i}}-\sum_{j=1}^k\frac{\sin(\theta_{\bw_i,\bv_j})}{2\pi\norm{\bw_i}} < 0,
    \end{equation}
    hence $H_{i,i}^{\prime}$ is not a PSD matrix. We begin with letting $O^{\top}DO$ be the eigendecomposition of the symmetric matrix $\bar{\bw}\bar{\bw}^{\top} - \bar{\bn}_{\bv,\bw}\bar{\bn}_{\bv,\bw}^{\top}$, where $O$ is an orthonormal matrix and $D$ is diagonal. We have that $\text{diag}(D)=(1,-1,0,\ldots,0)$, as readily seen by taking the orthogonal eigenvectors $\bar{\bw},\bar{\bn}_{\bv,\bw}$ which correspond to the eigenvalues $1,-1$ respectively, where the remaining $d-2$ vectors orthogonal to $\bar{\bw},\bar{\bn}_{\bv,\bw}$ comprise the rest of the spectrum with all zero correponding eigenvalues. Taking expectation over a random vector $\hat{\bu}=(u_1,\ldots,u_d)$ uniformly on the unit hypersphere we have
    \begin{align}
        \E_{\hat{\bu}}\left[\hat{\bu}^{\top} h_1(\bw,\bv)\hat{\bu} \right] &= \E_{\hat{\bu}}\left[\hat{\bu}^{\top} \frac{\sin(\theta_{\bw,\bv})\norm{\bv}}{2\pi\norm{\bw}}\p{I-\bar{\bw}\bar{\bw}^{\top} + \bar{\bn}_{\bv,\bw}\bar{\bn}_{\bv,\bw}^{\top}} \hat{\bu} \right] \nonumber\\
        &= \frac{\sin(\theta_{\bw,\bv})\norm{\bv}}{2\pi\norm{\bw}}-\E_{\hat{\bu}}\left[\hat{\bu}^{\top}\p{\bar{\bw}\bar{\bw}^{\top} - \bar{\bn}_{\bv,\bw}\bar{\bn}_{\bv,\bw}^{\top}} \hat{\bu} \right] \nonumber\\
        &= \frac{\sin(\theta_{\bw,\bv})\norm{\bv}}{2\pi\norm{\bw}}-\E_{\hat{\bu}}\left[\hat{\bu}^{\top}O^{\top}DO \hat{\bu} \right] \nonumber\\
        &= \frac{\sin(\theta_{\bw,\bv})\norm{\bv}}{2\pi\norm{\bw}}-\E_{\hat{\bu}}\left[\hat{\bu}^{\top}D \hat{\bu} \right] \label{eq:orthogonal}\\
        &= \frac{\sin(\theta_{\bw,\bv})\norm{\bv}}{2\pi\norm{\bw}}-\E_{\hat{\bu}}\left[u_1^2-u_2^2 \right] = \nonumber\\ 
        &= \frac{\sin(\theta_{\bw,\bv})\norm{\bv}}{2\pi\norm{\bw}}, \label{eq:iid}
    \end{align}
    where equality (\ref{eq:orthogonal}) is due to a uniform distribution on the unit hypersphere being invariant to orthonormal transformations, and equality (\ref{eq:iid}) is due to all coordinates of $\hat{\bu}$ being i.i.d. From \eqref{eq:iid}, the definition of $H_{i,i}'$, the linearity of expectation and the fact that $\norm{\bv_i}=1$ for all $i\in[k]$, we have 
    \begin{equation}\label{eq:expectation_sphere}
        \E_{\hat{\bu}}\left[ \hat{\bu}^{\top}H_{i,i}^{\prime}\hat{\bu} \right] = \sum_{j\in[n]\setminus\{i\}} \E_{\hat{\bu}}\left[\hat{\bu}^{\top}h(\bw_i,\bw_j)\hat{\bu}\right] - \sum_{j=1}^k\E_{\hat{\bu}}\left[\hat{\bu}^{\top}h(\bw_i,\bv_j)\hat{\bu}\right] = \lambda
    \end{equation} 
    for all $i\in[k]$.
    We will show this implies the existence of a particular vector $\bu$ satisfying the above equality. Choose an arbitrary unit vector $\bu'$. If $\bu'$ satisfies the above equality we are done. Otherwise, assume w.l.o.g.\ that
    $\bu'^{\top}H_{i,i}^{\prime}\bu' < \lambda$, in which case there must exist another unit vector $\bu''$ such that $\bu''^{\top}H_{i,i}^{\prime}\bu'' > \lambda$ (since otherwise $\E_{\hat{\bu}}\left[ \hat{\bu}^{\top}H_{i,i}^{\prime}\hat{\bu} \right] < \lambda$, contradicting \eqref{eq:expectation_sphere}). Let $\gamma(t)= \frac{t\bu' + (1-t)\bu''}{\norm{t\bu' + (1-t)\bu''}}$, then by the continuity of $\gamma(t)^{\top}H_{i,i}^{\prime}\gamma(t)$ in $t$ and the intermediate value theorem we have some $t'$ satisfying $\gamma(t')^{\top}H_{i,i}^{\prime}\gamma(t') =\lambda$, and by taking $\bu=\gamma(t')$ we have $\bu^{\top}H_{i,i}^{\prime}\bu =\lambda$.
    
    Next, we show that $\lambda<0$ under the assumptions in the theorem statement. To this end, it suffices to show that for some $i\in[n]$
    \begin{equation}\label{eq:negative_eigenvalue}
        \sum_{j\neq i}^n\sin(\theta_{\bw_i,\bw_j})\norm{\bw_j} - \sum_{j=1}^k\sin(\theta_{\bw_i,\bv_j})<0.
    \end{equation}
    Beginning with the positive term, we have
    \[
        \sum_{j\neq i}^n\sin(\theta_{\bw_i,\bw_j})\norm{\bw_j} \le \sum_{j\neq i}^n\sin(\theta_{\bw_i,\bw_j})\frac{k\norm{\bw_j}}{\sum_{m=1}^n\norm{\bw_m}} \le \sum_{j\neq i}^n\frac{k\norm{\bw_j}}{\sum_{m=1}^n\norm{\bw_m}}.
    \]
    Since $\sum_{j=1}^n\frac{k\norm{\bw_j}}{\sum_{m=1}^n\norm{\bw_m}}=k$, there exists some $i\in[n]$ such that 
    \begin{equation*}
        \frac{k\norm{\bw_i}}{\sum_{m=1}^n\norm{\bw_m}}\ge \frac kn \ge 1,
    \end{equation*}
    thus the above equals
    \begin{equation}\label{eq:positive_term}
        \sum_{j=1}^n\frac{k\norm{\bw_j}}{\sum_{m=1}^n\norm{\bw_m}} - \frac{k\norm{\bw_i}}{\sum_{m=1}^n\norm{\bw_m}} \le k-1.
    \end{equation}
    Turning to the negative term in \eqref{eq:negative_eigenvalue}, recall that $\bw_i=(w_{i,1},\ldots,w_{i,d})$ and $d\ge k$. Assume w.l.o.g.\ that $\bv_j$ is a standard unit vector for all $j\in[k]$ (otherwise apply a change of basis under which the following argument is invariant), and compute
    \[
        \sum_{j=1}^k\sin(\theta_{\bw_i,\bv_j}) = \sum_{j=1}^k\sqrt{1-\frac{\inner{\bw_i,\bv_j}^2}{\norm{\bw_i}^2}} = \sum_{j=1}^k\sqrt{1-\frac{w_{i,j}^2}{\norm{\bw_i}^2}} \ge \sum_{j=1}^k\p{1-\frac{w_{i,j}^2}{\norm{\bw_i}^2}} \ge k-\sum_{j=1}^d\frac{w_{i,j}^2}{\norm{\bw_i}^2} = k-1.
    \]
    Observe that if \eqref{eq:positive_term} is not a strict inequality, then it must hold that $n=k$ and $\norm{\bw_j}=1$ for all $j\in[n]$. In such case, since $\bw_1^n$ is not global, we have from \thmref{thm:global_minima_form} that it is not a permutation of the standard basis, therefore there must exist $\bw_i$ of unit norm which is non-zero in at least two coordinates. For this $\bw_i$, the above must be a strict inequality since $\sqrt{x}>x$ for any $x\in(0,1)$, which guarantees a strict inequality for at least two summands. Now, combining the above with \eqref{eq:positive_term} and plugging in \eqref{eq:negative_eigenvalue} establishes \eqref{eq:neg lambda}. Next, we invoke \lemref{lem:negative block} with what was shown in \eqref{eq:neg lambda}.
    
    To conclude the proof of the theorem, it only remains to show that $\bw_1^n$ cannot be non-differentiable (which also implies that $\bw_1^n(\alpha,i)$ is not a local minimum for $\alpha\in\{0,1\}$). Assume $\bw_1^n$ is non-differentiable, then by \lemref{lem:hessian is differentiable}, there exists some $i\in[n]$ such that $\bw_i=\mathbf{0}$.
    
    First assume that $\bw_1^n$ is not the origin. If we remove all zero vector neurons to obtain a differentiable point $\bw_1^{n'}\in\reals^{n'd}$ for $n'<n$, then we reduce to the previous case and there exists $j$ such that $\bw_1^{n'}(0,j)$ is a saddle point, and clearly adding more zero vector neurons (and permuting the neurons accordingly) till we recover $\bw_1^n$, we have that it cannot be a local minimum, contradicting the theorem assumption.
    
    Finally, if $\bw_1^n$ is the origin then from \lemref{lem:origin not min or max}, $\bw_1^n$ is not a local minimum.
    
\end{proof}

\begin{proof}[Proof of \propref{prop:norm sum nk}]
    Given a point $\bw_1^n$, let $\bw_1^n(t)=(t\bw_1,\ldots,t\bw_n)$. From \eqref{eq:quadratic obj}, we have that the objective is quadratic in $t$ and therefore in particular, for a point to be minimal in our original $nd$-dimensional space, it must be minimal over $t$. Optimizing over $t$ we have that the optimum $t^*$ is given by
    \[
        t^* = \frac{\sum_{i\in[n],j\in[k]} f\p{\bw_i,\bv_j}}{\sum_{i,j=1}^n f\p{\bw_i,\bw_j}},
    \]
    therefore any local minimum must be of the form $\bw_1^n(t^*)$, in which case its sum of Euclidean norms is given using \eqref{eq:f} by
    \[
        \sum_{i=1}^n\norm{t^*\bw_i} = t^*\sum_{i=1}^n\norm{\bw_i}
        = \frac{\sum_{i\in[n],j\in[k]} 
	    \norm{\bw_i}\norm{\bv_j}\p{\sin\p{\theta_{\bw_i,\bv_j}}+\p{\pi-\theta_{\bw_i,\bv_j}}
		\cos\p{\theta_{\bw_i,\bv_j}}}}
		{ \sum_{i,j=1}^n \norm{\bw_i}\norm{\bw_j}\p{\sin\p{\theta_{\bw_i,\bw_j}}+\p{\pi-\theta_{\bw_i,\bw_j}}
		\cos\p{\theta_{\bw_i,\bw_j}}}}\sum_{i=1}^n\norm{\bw_i}.
    \]
    Elementary calculus reveals that in $[0,\pi]$, the function $x\mapsto \sin(x) + (\pi-x)\cos(x)$ is monotonically decreasing, thus its image is bounded in the same interval, and the above displayed equation is upper bounded by
    \[
        \frac{\pi\sum_{i\in[n],j\in[k]} 
	    \norm{\bw_i}\norm{\bv_j}}
		{\sum_{i=1}^n\pi\norm{\bw_i}^2 + \sum_{i\neq j}^n \norm{\bw_i}\norm{\bw_j}\p{\sin\p{\theta_{\bw_i,\bw_j}}+\p{\pi-\theta_{\bw_i,\bw_j}}
		\cos\p{\theta_{\bw_i,\bw_j}}}}\sum_{i=1}^n\norm{\bw_i},
    \]
    which in turn is at most
    \begin{equation}\label{eq:CS}
		\frac{\pi\sum_{i\in[n],j\in[k]} 
	    \norm{\bw_i}\norm{\bv_j}}
		{\pi\sum_{i=1}^n\norm{\bw_i}^2} \sum_{i=1}^n\norm{\bw_i} = \frac{k\sum_{i\in[n]} 
	    \norm{\bw_i}}
		{\sum_{i=1}^n\norm{\bw_i}^2} \sum_{i=1}^n\norm{\bw_i} = k\frac{\p{\sum_{i=1}^n\norm{\bw_i}}^2}{\sum_{i=1}^n\norm{\bw_i}^2}.
    \end{equation}
    Letting $\bu = (\norm{\bw_1},\ldots,\norm{\bw_n})\in\reals^n$ and $\bu'=(1,\ldots,1)\in\reals^n$, we have from CS
    \[
        \p{\sum_{i=1}^n\norm{\bw_i}}^2 = \inner{\bu,\bu'}^2 \le \norm{\bu}^2\norm{\bu'}^2 = n\sum_{i=1}^n\norm{\bw_i}^2,
    \]
    thus by plugging the above we have that \eqref{eq:CS} is upper bounded by $kn$.
\end{proof}

\begin{proof}[Proof of \thmref{thm:all_blocks_neg}]
    Simply invoke \lemref{lem:negative block} with each $i\in[n]$ satisfying the assumption in the theorem statement to obtain the result.
\end{proof}


\end{document}